\title{Finite-Time Frequentist Regret Bounds of Multi-Agent Thompson Sampling on Sparse Hypergraphs}
\author{
    Tianyuan Jin\textsuperscript{\rm 1} \qquad
    Hao-Lun Hsu\textsuperscript{\rm 2} \qquad
    William Chang\textsuperscript{\rm 3} \qquad
    Pan Xu\textsuperscript{\rm 2}
}
\title{My Publication Title --- Single Author}
\author {
    Author Name
}
\title{Finite-Time Frequentist Regret Bounds of Multi-Agent Thompson Sampling on Sparse Hypergraphs --- Multiple Authors}
\author {
    %
    First Author Name\textsuperscript{\rm 1,\rm 2},
    Second Author Name\textsuperscript{\rm 2},
    Third Author Name\textsuperscript{\rm 1}
}
\renewcommand*{\backref}[1]{}
\renewcommand*{\backrefalt}[4]{%
\ifcase #1 %
   No citations.%
\or
   (p. #2.)%
\else
   (pp. #2.)%
\fi}%
\def \algname {$\epsilon$-\texttt{MATS}}
\def \localArmSize {A_{\text{loc}}}
\newcommand{\lev}{S_r}
\begin{document}
\maketitle

\begin{abstract}
We study the multi-agent multi-armed bandit (MAMAB) problem, where $m$ agents are factored into $\rho$ overlapping groups. Each group represents a hyperedge, forming a hypergraph over the agents. At each round of interaction, the learner pulls a joint arm (composed of individual arms for each agent) and receives a reward according to the hypergraph structure. Specifically, we assume there is a local reward for each hyperedge, and the reward of the joint arm is the sum of these local rewards. Previous work introduced the multi-agent Thompson sampling (MATS) algorithm \citep{verstraeten2020multiagent} and derived a Bayesian regret bound. However, it remains an open problem how to derive a frequentist regret bound for Thompson sampling in this multi-agent setting.
To address these issues, we propose an efficient variant of MATS, the $\epsilon$-exploring Multi-Agent Thompson Sampling ($\epsilon$-\texttt{MATS}) algorithm, which performs MATS exploration with probability $\epsilon$ while adopts a greedy policy otherwise. We prove that $\epsilon$-\texttt{MATS} achieves a worst-case frequentist regret bound that is sublinear in both the time horizon and the local arm size. We also derive a lower bound for this setting, which implies our frequentist regret upper bound is optimal up to constant and logarithm terms, when the hypergraph is sufficiently sparse. Thorough experiments on standard MAMAB problems demonstrate the superior performance and the improved computational efficiency of $\epsilon$-\texttt{MATS} compared with existing algorithms in the same setting. 
\end{abstract}

\section{Introduction}
Reinforcement learning (RL) is a fundamental problem in machine learning, where an agent learns to make optimal decisions in an environment by trial and error. A specific instance of RL is the multi-armed bandit (MAB) problem, in which an agent must choose between a set of arms, and each of the arms has a random reward distribution. The agent's goal is to maximize its total reward over time.
In standard MAB problems, an agent is provided with a set of arms $[K]:= {1, 2,...,K}$, and each arm, when pulled, generates a reward following a $1$-subgaussian distribution with an unknown mean. The agent's objective is to maximize its overall rewards within a specified time frame.

We consider the  \textbf{m}ulti-\textbf{a}gent MAB (MAMAB) problem, where there are $m$ agents. At each round of the interaction, each agent chooses an arm from its own arm set $[K]$.  
We define the concatenation of these arms as the joint arm. The bandit learner aims to coordinate with all agents and choose joint arms that maximize the cumulative rewards obtained from pulling those joint arms. It is important to note that the size of the joint arm space is exponential in the number of agents, specifically $A = K^m$. This exponential growth poses computational challenges in coordination and arm selection. To address this issue,
it was proposed to factor all agents into $\rho$ possibly overlapping groups (see wind farm application), which forms a hypergraph over the agents with each agent representing a node and each group representing a hyperedge (an illustration can be found in \Cref{sec:preliminary} and \Cref{fig:graph}).  Instead of pulling the joint arm, the learner only needs to pull the local arms, where each pulled local arm is defined as the concatenation of the arms chosen by agents within the same group. We assume each group has $d$ agents, and thus the total number of local arms $\localArmSize $ is at most $\rho K^d$, which is much smaller than the number of joint arms when the groups are small. 
This approach gives rise to MAMAB problems with specific coordination graph structures, which have found practical applications in various domains such as traffic light control \cite{wiering2000multiagent}, warehouse commissioning \cite{claes2017decentralised}, and wind farm control \cite{gebraad2015maximum, verstraeten2019fleetwide}. 

We evaluate a learning strategy based on its cumulative rewards obtained by interacting with the environment for a total of $T$ rounds. This evaluation can be equivalently measured by calculating the regret of the strategy compared to an oracle algorithm that always selects the arm with the highest reward. Mathematically, the regret is defined as $R_T = T\mu^* - \mathbb{E}[\sum_{t=1}^T f({\bm{A}_t})]$, where $\mu^*$ is the mean of the optimal arm and $f({\bm{A}_t})$ represents the reward obtained when pulling the joint arm $\bm{A}_t$ at time $t$ according to the given strategy. The goal of the algorithm (or learner) is to coordinate with all agents to determine the joint arm to pull in order to minimize this regret.

Thompson sampling \citep{thompson1933likelihood}, introduced by Thompson in 1933, has emerged as an attractive algorithm for bandit problems. It is favored for its simplicity of implementation, good empirical performance, and strong theoretical guarantees \citep{chapelle2011empirical,agrawal2017nearoptimal,jin2021mots}. The key idea behind Thompson sampling is to sample reward estimates for each possible arm from a posterior distribution and select the arm with the highest estimated value for pulling. In the single-agent setting, Thompson sampling has been shown to achieve near-optimal regret with respect to the worst possible bandit instance \citep{agrawal2017nearoptimal}.
In the context of multi-agent MAB with a coordination graph, the MATS (Multi-Agent Thompson Sampling) algorithm was proposed by \citet{verstraeten2020multiagent}. Unlike traditional Thompson sampling, where estimated rewards are sampled for each joint arm, MATS samples rewards for each local arm. This approach reduces the computational complexity, particularly in cases where the coordination hypergraph is sparse. \citet{verstraeten2020multiagent} provided a Bayesian regret bound for MATS, which measures the average performance given the probability kernel of the environment. However, in practical scenarios, it may not always be feasible for the learner to possess knowledge or access to the probability kernel of the environment. In such cases, the frequentist regret bound, which measures the worst-case performance across all environments, is often considered. It is worth noting that a frequentist regret upper bound implies a Bayesian regret bound, but not vice versa. Deriving a frequentist regret bound for the MATS algorithm in the multi-agent MAB problem with a coordination hypergraph remains an open question.

There are several technical challenges in the analysis of the frequentist regret for MATS. The first challenge emerges when applying the regret analysis of single-agent Thompson Sampling \citep{agrawal2012analysis} to our context. This occurs due to a dependence issue among different joint arms. Although rewards for each local arm are independently drawn from their respective reward distributions, the average rewards of the joint arms might be influenced by the other joint arms when they share some local arms (see \Cref{sec:chall} for detailed discussion). As a result, it is difficult to analyze the distribution of the average reward of the optimal arm or apply any concentration/anti-concentration inequalities, while all existing frequentist regret analyses of Thompson sampling \citep{agrawal2012analysis,agrawal2017nearoptimal,jin2021mots,jin2022finitetime,korda2013thompson,kaufmann2012thompson} heavily rely on the specific form of the distribution of the average reward of the optimal arm. A naive method of removing the dependence involves maintaining a posterior distribution for each joint arm and updating the distribution only when this joint arm is pulled. However, this method could result in significant computational complexity and regret due to the large joint arm space. %

In this paper, we tackle the issue using two strategies: 1) We carefully partition the entire arm set into subsets, ensuring each arm within a subset shares the same local arms with the optimal arm, and 2) We conduct a regret analysis at the level of local arms. Specifically, let $\bm{1}$ denote the optimal joint arm. We consider two events: 1) The local arm $\bm{1}^e$ of the optimal arm $\bm{1}$ is not underestimated, meaning the posterior sample of $\bm{1}^e$ is larger than $\bm{1}^e-\Delta/\rho$, and 2) The local arm $\bm{a}^e$ of the suboptimal arm $\bm{a}$ is not overestimated, meaning the posterior sample of $\bm{a}^e$ is lower than $\bm{a}^e-\Delta/\rho$. Crucially, these events ensure that the sum of posterior samples for any suboptimal joint arms is lower than the sum of posterior samples of $\bm{1}$, which leads to a lower regret. 

Another challenge in our local arm level analysis arises when we aim to establish a lower bound for the probability that the posterior sample of all local arms of $\bm{1}$ exceeds their means by $\Delta/\rho$. Leveraging the original Thompson Sampling analysis, we can establish this probability's lower bound as $(\Delta/{\rho})^{2\rho}$, leading to $(\rho/\Delta)^{2\rho}$ suboptimal arm pulls. In terms of worst-case regret, this amounts to $O\big(T^{\frac{2\rho-1}{2\rho}}\big)$. We improve this result by applying two innovative techniques (for ease of presentation, these are elaborated in full detail in \Cref{sec:chall}), reducing the number of pulls to $C^{\rho}$, where $C$ is a universal constant. Using these novel techniques, we are able to offer a $\sqrt{T}$-type worst-case regret.

\textbf{Main contributions.} We summarize our main contributions as follows.
\begin{itemize}[leftmargin=*,nosep]
 \item We propose the $\epsilon$-exploring Multi-Agent Thompson Sampling (\algname) algorithm, which only samples from the posterior distribution with probability $\epsilon$ and acts greedily with probability $1-\epsilon$. When $\epsilon=1$, our algorithm reduces to the MATS algorithm \citep{verstraeten2020multiagent}. When $\epsilon\ll 1$, our algorithm \algname\ only needs a small amount of exploration, and thus is much more computationally efficient than MATS in practice.
 
 \item We establish a frequentist regret bound for \algname\, in the order of $\Tilde{O}(\sqrt{(C/\epsilon)^\rho\localArmSize T})$, where $C$ is some universal constant and $\Tilde{O}(\cdot)$ ignores constant and logarithmic factors.  Here $\rho$ denotes the number of hyperedges,  $\localArmSize$ represents the total number of local arms, and $T$ is the time horizon. Remarkably, when $\epsilon=1$, our result also provides the first frequentist regret bound for MATS \citep{verstraeten2020multiagent}. Despite having $A$ joint arms, our regret bound grows as $O(\sqrt{\localArmSize})$, which is much smaller than the total number of joint arms.
 
 \item We also derive an $\Omega(\sqrt{\localArmSize T}/\rho)$ lower bound for the worst-case regret in our setting, showing that \algname\, is minimax optimal in terms of the total number of local arms and the time horizon when the the number of groups $\rho$ is small. When $\epsilon=1$, we show that the regret bound of MATS has an unavoidable $C^{\rho}$ dependence on the number of groups $\rho$, which further implies that the regret bound $\tilde{O}(\sqrt{(C/\epsilon)^{\rho}\localArmSize T})$ is optimal in terms of the number of groups as well.

 \item We further conduct extensive experiments on various MAMAB problems, including the Bernoulli 0101-Chain, the Poisson 0101-Chain, the Gem Mining problem, and the Wind Farm Control problem \citep{roijers2015computing,bargiacchi2018learning, verstraeten2020multiagent}. Through empirical evaluation, we demonstrate that the regret of \algname\, can be significantly lower compared to MATS as $\epsilon$ decreases, outperforming existing methods in the same setting. We also find that \algname\, exhibits improves computational efficiency compared to MATS.
\end{itemize}

\section{Preliminary and Background}\label{sec:preliminary}
In this section, we present the preliminary details of our setting. We also provide a notation table in \Cref{table:notation} for the convenience of our readers. 
We adopt the MAMAB (Multi-Agent Multi-Armed
Bandit) framework introduced by \citet{verstraeten2020multiagent}, where there are $m$ different agents, who are grouped into $\rho$ potentially overlapping groups. Each group can be represented as a hyperedge in a hypergraph, where the agents correspond to the nodes. Figure \ref{fig:graph} provides an example for easier visualization. During each round, every agent $i\in[m]$ selects an arm from their respective arm set $\mathcal{A}_i$, which is referred to as the "\emph{individual}" arm played by agent $i$. For simplicity, we assume that each agent $i$ has the same number of arms, denoted as $K=|\mathcal{A}_i|$. However, it is straightforward to extend the framework to accommodate varying numbers of arms $|\mathcal{A}_i|$. The arms chosen by all agents are concatenated to form a "\emph{joint}" arm denoted by $\ba$, which belongs to the set $\mathcal{A}_1 \times \cdots \times \mathcal{A}_m$. Consequently, the total number of joint arms is defined as $A:= |\mathcal{A}_1 \times \cdots \times \mathcal{A}_m|$.

We define a "\emph{local}" arm as the concatenation of individual arms for a specific group $e\in[\rho]$. In other words, if agents $i_1,\ldots ,i_d\in[m]$ form a hyperedge, then the local arm $\bm{a}^e \in \mathcal{A}_{i_1} \times \cdots \times \mathcal{A}_{i_d}$ represents the $d$-tuple of arms selected by these agents. We shall denote the set of local arms for group $e$ as $\mathcal{A}^e$. Let $\localArmSize$ be the total number of local arms. It is straightforward to see that $\localArmSize \leq \rho K^d$, with equality when the groups don't overlap. It is important to note that the arm space grows exponentially with the number of agents, leading to computational challenges in arm selection. To address this combinatorial complexity, we employ variable elimination techniques, which will be further explained in the subsequent sections.

In this paper, the global reward $f(\bm{a})$ associated with each joint arm $\bm{a}$ is decomposed into $\rho$ local rewards $f^e(\ba^e)$, where $\ba^e$ represents the local arm for group $e$. This decomposition takes advantage of the hypergraph structure. Specifically, for a given hypergraph with $\rho$ hyperedges, we have the relationship $f(\bm{a}) = \textstyle{\sum_{e=1}^\rho} f^e(\bm{a}^e)$.
The mean reward of a group $e$ is denoted as $\mu_{\bm{a}^e}= \mathbb{E}[f^e(\ba^e)]$. Consequently, the mean reward of a joint arm $\bm{a}$ is given by $\mu_{\bm{a}} = \textstyle{\sum_{e=1}^\rho} \mu_{\bm{a}^e} = \mathbb{E}[f(\ba)]$.
We assume the local rewards $f^e(\bm{a}^e)$ to be $1$-subgaussian, i.e. $\PP(|f^e(\bm{a}^e)| \geq \epsilon) \leq 2e^{-\epsilon^2}$. As a result, the global reward $f(\ba)$  is $\sqrt{\rho}$-subgaussian. %

Our objective is to maximize the expected cumulative global rewards obtained over a horizon of $T$ rounds of interaction with the environment. Without loss of generality, we assume that $\bm{1}$ is the optimal joint arm that yields the highest expected global reward. 
It is important to note that the goal is defined based on the performance of the best joint arm. In other words, even if a local arm $\bm{a}^e$ has a high local reward, it may not be selected frequently by an optimal policy if it is not part of joint arms with high mean rewards. 
To quantify the performance of a bandit strategy, we use the concept of regret denoted by $R_T$, defined as the expected difference between the cumulative rewards obtained by always selecting the optimal joint arm $\bm{1}$ and the actual rewards obtained by following a specific strategy. Mathematically, the regret $R_T$ is given by:
\begin{equation}
R_T = \mathbb{E}\big[\textstyle{\sum_{t=1}^T} (\mu_{\bm{1}} - f(\bm{A}_t))\big] = \textstyle{\sum_{t=1}^T }(\mu_{\bm{1}} - \mu_{\bm{A}_t}),
\end{equation}
where $\bm{A}_t$ represents the joint arm selected at round $t$. The regret captures the deviation from the cumulative rewards that would have been obtained if the optimal joint arm was chosen at each round. Minimizing regret is a key objective in designing effective strategies for the hypergraph MAMAB problem.
\begin{figure}
    \centering
    \includegraphics[width = .3\textwidth]{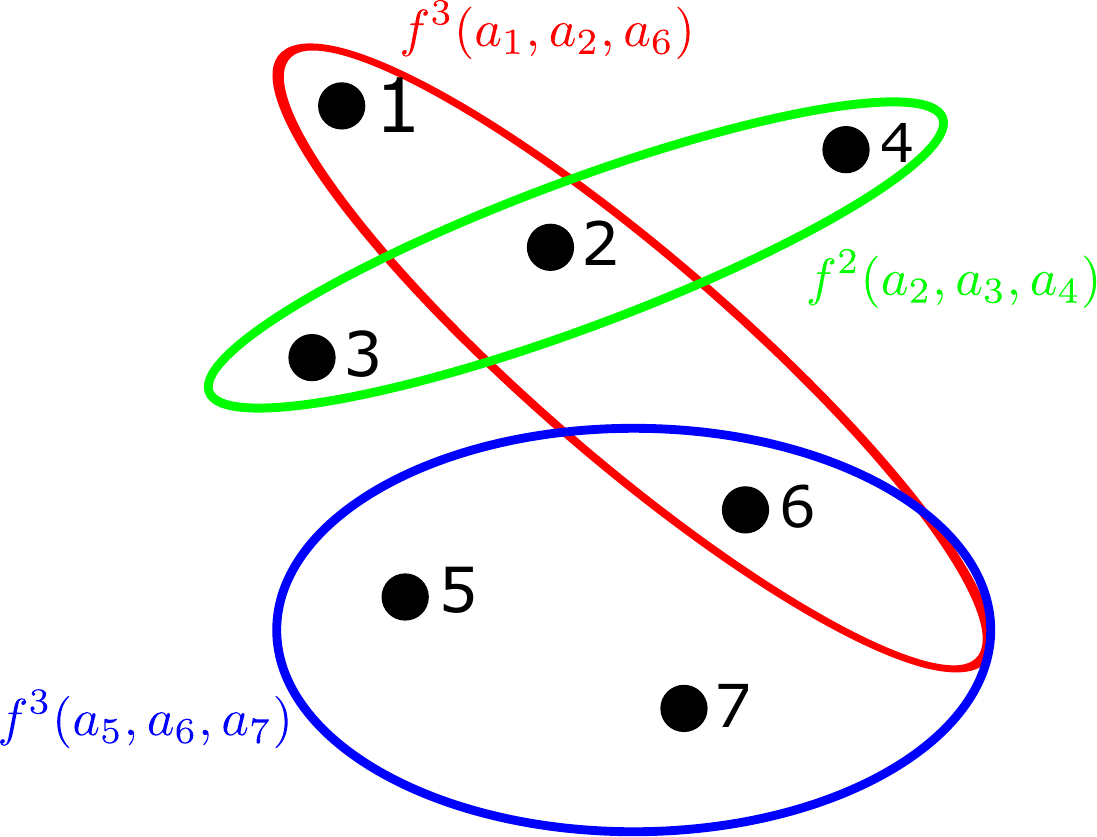}
    \caption{The hypergraph representation of a bandit environment with 8 agents and 3 groups. Each agent is represented by a vertex numbered by $\{1,2,\ldots 7\}$ and each group is represented by a hyperedge. In this case, there are three hyperedges with each with size $3$. Letting $a_i$ be the action taken by player $i$, the reward for the joint action $(a_1,\ldots,a_7)$ is decomposed as $f(a_1,a_2,\ldots,a_7)={\color{red}f^1(a_1,a_2,a_6)}+{\color{green}f^2(a_2, a_3, a_4)}+{\color{blue}f^3(a_5,a_6,a_7)}$, where $a_i$ is the individual arm picked by agent $i$. 
    }
    \label{fig:graph}
\end{figure}

\begin{remark}
    Although the results of our paper hold true regardless of the coordination hypergraph structure between the agents, they are most meaningful when the graph is sparse (i.e. the number of agents in each group is small). In particular, as there are $A$ joint arms, if one were to consider a different reward function for each arm, the regret and implementation complexity would be on the order of $A$. However, our results exploit the fact that there are only $\localArmSize $ local reward functions, and thus our regret bound is in terms of $\localArmSize $, which is much smaller than $A$ when the groups are small. 
\end{remark}

\section{The $\epsilon$-Exploring Multi-Agent Thompson Sampling Algorithm}\label{sec:algorithm}
In this section, we present the $\epsilon$-exploring Multi-Agent Thompson Sampling Algorithm (\algname), whose pseudo-code of \algname\ is displayed in \Cref{algo:MATS}. \algname\ is a combination of the MATS algorithm \cite{verstraeten2020multiagent} and a greedy policy. The idea of adding a greedy policy to Thompson Sampling was initially proposed in \citet{jin2022finitetime} and subsequently explored in \citet{jin2023thompson}. In \algname, at each round $t\in [T]$, similar to MATS,  \Cref{algo:MATS}
maintains a posterior distribution $\mathcal{N}(\widehat{\mu}_{\bm{a}^e}(t), \frac{c}{n_{\bm{a}^e}(t)})$ for each local arm $\bm{a}^e$, $e\in[\rho]$, where $\widehat{\mu}_{\bm{a}^e}(t)$ is the average reward of arm $\bm{a}^e$, $n_{\bm{a}^e}(t)$ represents the number of pulls of arm $\bm{a}^e$, and $c$ is a scaling parameter. Both MATS and \algname\ maintain estimated rewards $\theta_{\bm{a}^e}(t)$ for each local arm, and select the joint arm that yields the highest sum of estimated local rewards, i.e., $\bm{A}_t = \arg\max_{\bm{a}\in \cA} \sum_{e\in [\rho]}\theta_{\bm{a}^e}(t)$. After receiving the true rewards, the algorithms update the average reward $\widehat{\mu}_{\bm{a}^e}(t)$ and the number of pulls of $\bm{A}_t$ accordingly.

The difference between MATS and \algname\ lies in the way they construct the estimated rewards  $\theta_{\bm{a}^e}(t)$ for each local arm. In particular, 
MATS samples $\theta_{\bm{a}^e}(t)$ from the respective posterior distribution for local arm $\ba^e$. In contrast, the proposed \algname\ algorithm only  samples $\theta_{\bm{a}^e}(t)$ from the posterior distribution with a probability of $\epsilon$, and it directly sets $\theta_{\bm{a}^e}(t)=\widehat{\mu}_{\bm{a}^e}(t)$, i.e., as the empirical mean reward. Here $\epsilon\in(0,1]$ is a user-specified parameter that controls the level of exploration.  
For small values of $\epsilon$, \algname \ significantly reduces the level of exploration in MATS, which leads to improved computational efficiency.

\paragraph{$\epsilon$-Exploring.}
The idea of $\epsilon$-exploring is inspired from the recent work of \citet{jin2023thompson}.
We prove in the next section that \algname\, achieves the same order of finite-time regret bound as the MATS algorithm, even though it only needs to perform a small fraction of TS-type exploration. Furthermore, this algorithm runs faster since it doesn't have to sample each local arm from the Gaussian distribution as frequently as MATS. We also show that for specific applications the regret of \algname\ converges much faster than MATS and other algorithms in the same setting.

\paragraph{Variable Elimination.}
In Line~\ref{algline:arm_selection} of Algorithm \ref{algo:MATS}, \algname\ needs to find the joint arm $\ba$ that maximizes the sum of the estimated local rewards. However, this step can be computationally expensive if naively implemented, as it would require considering all possible joint arms, resulting in a complexity of $O(K^m)$ since the joint space size is $K^m$.  Following \cite{verstraeten2020multiagent}, we use variable elimination \cite{guestrin2001multiagent} to reduce this computation burden. 
The key idea behind variable elimination is to optimize over one agent at a time instead of summing all estimated local rewards for each joint arm and then performing the maximization. By doing so, we can significantly reduce the computational burden. To explain how variable elimination works, let us rewrite the maximum sum of the local estimates as follows:
{\footnotesize
\begin{align}
         &\max_{\bm{a}} f(\bm{a}) = \max_{\bm{a}}\sum_{e=1}^\rho   f^e (\bm{a}^e)=  \max_{\bm{a}} \bigg[ \underbrace{\sum_{\bm{a}^e \in \bm{a}: a_m \notin \bm{a}^e}f^e (\bm{a}^e)}_{I_1} \notag\\
         &\qquad + \underbrace{\max_{\bm{a}^e: a_m \in \bm{a}^e}  \sum_{\bm{a}^e \in \bm{a}:  a_m \in \bm{a}^e} f^e (\bm{a}^e)}_{I_2}\bigg],\label{eq:optimize}
\end{align}}
where $a_m$ represents an individual arm of agent $m$.  In Equation \eqref{eq:optimize}, we decompose the sum of the rewards into two cases based on the optimization variable $\ba$.
In $I_1$, we consider all the groups $\bm{a}^e$ that do not contain agent $m$. The maximization in this case is performed independently of the selection of individual arm $a_m$. Thus, the remaining agents can be optimized separately, resulting in a smaller optimization problem involving at most $m-1$ agents.
In $I_2$, we focus on the groups that contain agent $m$. Here, we aim to find the individual arm $a_m$ that maximizes the sum of the local rewards for the joint arms containing $a_m$. This sum depends on the individual arms of the other agents that share a group with agent $m$. After determining the optimal $a_m$, the rest of the maximization is performed independently on the remaining agents in $I_1$.
For more examples and details on variable elimination, please refer to \cite{guestrin2001multiagent}.

We have the following result for variable elimination.
\begin{lemma}\label{lemma:variable_elimination_complexity}
    Let $G_1,\ldots, G_\rho$ be the set of agents that belong to group $1,\ldots,\rho$ respectively. Then we have $\localArmSize  = \textstyle{\sum_{e=1}^\rho\prod_{i \in G_e}}|\mathcal{A}_i|$. At every round in \Cref{algo:MATS}, following the above variable elimination procedure, the complexity of searching for the optimal arm is $O(\localArmSize ) = O\left(\textstyle{\sum_{e=1}^\rho\prod_{i \in G_e}|}\mathcal{A}_i|\right)$. 
\end{lemma}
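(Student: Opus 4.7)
The first identity is essentially a counting exercise. By definition, the set of local arms associated with group $e$ is $\mathcal{A}^e = \prod_{i \in G_e} \mathcal{A}_i$, so $|\mathcal{A}^e| = \prod_{i \in G_e} |\mathcal{A}_i|$. Since the local-arm sets across different groups are disjoint by construction (a local arm is tagged by the group to which it belongs), the total count is $\localArmSize = \sum_{e=1}^\rho |\mathcal{A}^e| = \sum_{e=1}^\rho \prod_{i \in G_e} |\mathcal{A}_i|$. This part requires no more than one or two lines.

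The plan for the complexity claim is to formalize the variable-elimination recursion already sketched around \eqref{eq:optimize}, and then argue the total work is proportional to the aggregate size of the factors it touches. Concretely, I would maintain a collection of ``factors'' $\{g^1, g^2, \ldots\}$, each a function of some subset of agents; initially these are exactly the $\rho$ local-reward estimates $\theta_{\bm{a}^e}(t)$. To eliminate agent $i$, gather all factors whose scope contains $i$, build a new factor $h$ on the union of their scopes minus $\{i\}$ by the rule $h(\cdot) = \max_{a_i} \sum_{g^k \ni i} g^k(\cdot, a_i)$, and replace all of those factors by $h$. The total work to construct $h$ is the size of the domain of $h$ times $|\mathcal{A}_i|$, which equals the size of the combined domain before elimination. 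By induction on the number of remaining agents, the cumulative cost over all $m$ elimination steps equals the sum of the sizes of all factors ever created. Under the sparse-hypergraph assumption implicit in the setup (elimination order chosen so that no created factor exceeds the size of an already-present group factor, which is the standard case when the coordination hypergraph is tree-like or has bounded induced width matching the group sizes), this aggregate is dominated by the initial aggregate $\sum_{e=1}^\rho \prod_{i \in G_e} |\mathcal{A}_i| = \localArmSize$, yielding the $O(\localArmSize)$ bound.

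The main obstacle is precisely this elimination-order issue: in the worst case, intermediate factors can be larger than any initial factor, so the naive $O(\localArmSize)$ claim only holds for good orderings or sparse-enough hypergraphs. I would handle this by stating explicitly the elimination-order assumption needed (for instance, a perfect elimination ordering, or equivalently that the intersection pattern of the hyperedges induces a graph of width at most $\max_e |G_e|$), and then carry out the factor-size accounting above. For the concrete examples considered later in the paper (the $0101$-chain, Gem Mining, Wind Farm) the hyperedge structure is a path or tree and such an ordering exists trivially, so the $O(\localArmSize)$ bound applies without extra conditions; this is what I would emphasize in the proof.
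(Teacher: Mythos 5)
Your counting argument for $\localArmSize$ is the same as the paper's (the identity is essentially definitional, since the local-arm sets of distinct groups are disjoint), and your complexity argument follows essentially the same route as the paper's proof: the paper argues by induction on the number of agents, eliminating one agent per step exactly as in your factor recursion, charging $\sum_{e:\, n\in G_e}\prod_{i\in G_e}|\mathcal{A}_i|$ for the groups containing the eliminated agent and invoking the induction hypothesis on the reduced hypergraph with hyperedges $G_e\setminus\{n\}$. The genuine difference is your handling of intermediate factors: you observe that when the eliminated agent lies in several overlapping groups, the newly created factor $h$ has scope equal to the union of those groups minus the agent, so its table can be larger than any original group factor, and you therefore introduce an explicit elimination-order (induced-width) hypothesis under which the aggregate factor size stays bounded by $\localArmSize$. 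The paper makes no such assumption; its induction asserts that all values of the term $I_2$ can be computed in $\sum_{e=1}^l\prod_{i\in G_e}|\mathcal{A}_i|$ searches and then treats the remaining problem as one over the hyperedges $G_e\setminus\{n\}$, which silently absorbs the joint factor created by the elimination and is tight only when the scope growth you describe does not occur (for instance when each agent to be eliminated lies in a single group, or when a perfect ordering exists, as in the chain, Gem Mining, and wind-farm instances). So your version trades an extra, explicitly stated hypothesis for rigor on precisely the accounting step the paper glosses over; to recover the lemma exactly as stated you would either keep that assumption visible or restrict attention to the favorable orderings you identify.
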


As we discussed, without variable elimination, one would naively add up all the estimated local rewards $\theta_{\bm{a}^e}$ for each joint arm $\bm{a}$ and find the joint arm with the largest posterior $\theta_{\bm{a}}$, leading to computational complexity in the order of $O(A):= O\left(\textstyle{\prod_{i=1}^\rho |\mathcal{A}_i|}\right)$ at each round, which grows exponentially in the number of agents. In contrast, \Cref{lemma:variable_elimination_complexity} indicates that by using variable elimination, \algname\ only needs $\localArmSize $ computation to find the joint arm with the largest estimated reward. Note that this theoretical guarantee is of independent interest to MATS as well since none was given in the original paper \citep{verstraeten2020multiagent}.

\begin{algorithm}[ht]
	\caption{$\epsilon$-Exploring Multi-Agent Thompson Sampling %
 }
	\begin{algorithmic}[1]
		\STATE \textbf{Input}: number of agents $m$, joint arm set $\times_{i=1}^m\mathcal{A}_i$, hyperparameters $c$ and $\epsilon$
		\FOR{$e \in [\rho], \bm{a}^e \in \mathcal{A}^e$}
			\STATE Set $n_{\bm{a}^e}(1)=0$ and $\widehat{\mu}_{\bm{a}^e}(1)= 0$
		\ENDFOR
		\FOR{$t =1,...,T$}
			\FOR{$e \in [\rho], \bm{a}^e \in \mathcal{A}^e$}
				 \STATE %
     {\small
              	\begin{align*}
					\theta_{\bm{a}^e}(t) =
					\begin{cases}
						\sim \mathcal{N}(\widehat{\mu}_{\bm{a}^e}(t), \frac{c}{n_{\ba^e}(t)+1}) & \text{w.p. } \epsilon \\
						=\widehat{\mu}_{\bm{a}^e}(t) & \text{w.p. } 1-\epsilon
					\end{cases}
				\end{align*}}
			\ENDFOR
			\STATE 
   Pick $\bm{A}_t = \argmax_{\bm{a} \in \times_{i=1}^m\mathcal{A}_i} \textstyle{\sum_{e=1}^\rho}\theta_{\bm{a}^e}(t)$ \label{algline:arm_selection}
			\STATE Observe rewards $f^e(\bm{A}_t^e)$ for all $e\in[\rho]$
			\FOR{$e \in [\rho]$}
				\STATE Update {\small $\widehat{\mu}_{\bm{A}_t^e}(t) =\big(n_{\bm{A}_t^e}(t)\widehat{\mu}_{\bm{A}_t^e}(t) + f^e(\bm{A}_t^e)\big)/(n_{\bm{A}_t^e}(t)+1)$}
				\STATE Set $n_{\bm{A}_t^e}(t) = n_{\bm{A}_t^e}(t) + 1$
			\ENDFOR
		\ENDFOR
	\end{algorithmic}
	\label{algo:MATS}
\end{algorithm}
\section{Finite-Time Frequentist Regret Analysis}
In this section, we present the proof of the frequentist regret bound for \algname. 

\subsection{Finite-Time Frequentist Regret Bound of \algname}
For convenience, we use $\Delta_{\bm{a}} = \mu_{\bm{1}} - \mu_{\bm{a}}$ to denote the suboptimality gap between joint arm $\ba$ and the optimal joint arm. We let $\Delta_{\min} = \min_{\bm{a} \in \times_{i=1}^m\mathcal{A}_i\setminus \{\bm{1}\}} \Delta_{\bm{a}}$ and $\Delta_{\max} = \max_{\bm{a} \in \times_{i=1}^m\mathcal{A}_i} \Delta_{\bm{a}}$ be the minimum and maximum gap respectively. Moreover, let   $\Delta_{\bm{a}^e}=\min\{\Delta_{\bm{a}}\mid \bm{a}^e\in \bm{a}\}$ be the minimum reward gap between joint arm $\bm{a}$ which contains $\bm{a}^e$ and $\bm{1}$. We present the regret of \algname\, as follows.
\begin{theorem}\label{thm:upper bound}
 Let $c=\log T.$ The regret of \algname\, satisfies the following results.
 \begin{enumerate}[leftmargin=*,nosep]
     \item There exists some universal constant $C_1$ such that 
     {\small
\begin{align*}
    R_{T}&\leq \frac{C_1(C_1/\epsilon)^{\rho}\rho^2\log^2 (T\localArmSize)}{\Delta_{\min}} \notag \\
    & \quad +C_1\sum_{e\in [\rho]}\sum_{\bm{a}^e\in \cA^{e}\setminus \{\bm{1}^e\}}\frac{\rho^2\log^2 (T\localArmSize)}{\Delta_{\bm{a}^e}}+C_1\Delta_{\max}.
\end{align*}}

\item There exists some universal constant $C_2$ such that
{\small
\begin{align*}
    R_{T} \leq C_2\Delta_{\max}+ C_2\rho \sqrt{\left((C_2/\epsilon)^{\rho}+\localArmSize \right) T\log^2 (T\localArmSize)}.
\end{align*}  }
 \end{enumerate}
\end{theorem}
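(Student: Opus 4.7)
The plan is to lift the classical Agrawal--Goyal Thompson Sampling analysis from the joint-arm level to the local-arm level, which is the only way to avoid paying a factor proportional to the joint arm count $A$. For each round $t$, I will introduce two favorable events. The first is $\mathcal{E}_1(t) = \{\theta_{\bm{1}^e}(t) \geq \mu_{\bm{1}^e} - \Delta_{\min}/(2\rho) \text{ for every } e \in [\rho]\}$, asserting that no local arm of the optimal joint arm $\bm{1}$ is badly underestimated. The second is $\mathcal{E}_2(t) = \bigcap_{e, \bm{a}^e \neq \bm{1}^e}\{\theta_{\bm{a}^e}(t) \leq \mu_{\bm{a}^e} + \Delta_{\bm{a}^e}/(2\rho)\}$, asserting that no suboptimal local arm is badly overestimated. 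A direct summation using $\Delta_{\bm{a}^e} \leq \Delta_{\bm{a}}$ (the definition of $\Delta_{\bm{a}^e}$) shows that on $\mathcal{E}_1(t) \cap \mathcal{E}_2(t)$, for every suboptimal $\bm{a}$ the coordinates where $\bm{a}^e = \bm{1}^e$ cancel and the remaining coordinates contribute strictly more to $\sum_e \theta_{\bm{1}^e}(t)$ than to $\sum_e \theta_{\bm{a}^e}(t)$, so $A_t = \bm{1}$ and the per-round regret is zero. Thus $R_T \leq \mathbb{E}\sum_t \Delta_{A_t}\mathbb{1}\{\neg\mathcal{E}_2(t)\} + \mathbb{E}\sum_t \Delta_{A_t}\mathbb{1}\{\mathcal{E}_2(t), \neg\mathcal{E}_1(t)\}$, plus a negligible tail term controlled by $\Delta_{\max}$.

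The first sum is standard: for each suboptimal local arm $\bm{a}^e$, the event $\{A_t^e = \bm{a}^e,\, \theta_{\bm{a}^e}(t) > \mu_{\bm{a}^e} + \Delta_{\bm{a}^e}/(2\rho)\}$ requires either a bad empirical mean (controlled by sub-Gaussian concentration of $\widehat{\mu}_{\bm{a}^e}$) or a rare Gaussian posterior draw (controlled by Gaussian tail bounds with variance $c/(n_{\bm{a}^e}+1)$, $c=\log T$). Both contributions give $O(\rho^2 \log^2(T A_{\mathrm{loc}})/\Delta_{\bm{a}^e}^2)$ pulls in expectation, which multiplied by a gap of at most $\Delta_{\bm{a}^e}$ and summed over $(e,\bm{a}^e)$ produces the $\sum_{e,\bm{a}^e}\rho^2\log^2(TA_{\mathrm{loc}})/\Delta_{\bm{a}^e}$ term of part (1). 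The factor $\rho^2$ rather than $1$ arises because the effective per-coordinate threshold is $\Delta_{\bm{a}^e}/(2\rho)$ instead of $\Delta_{\bm{a}^e}/2$.

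The main obstacle is the second sum, where the optimal joint arm is hidden by underestimation on at least one coordinate. Following the classic Thompson Sampling trick, I will reduce this to bounding the reciprocal of $p_t := \Pr(\mathcal{E}_1(t) \mid \mathcal{F}_{t-1})$. The crucial step is to exploit that conditional on $\mathcal{F}_{t-1}$ the samples $\theta_{\bm{1}^e}(t)$ across $e$ are \emph{independent}, so $p_t$ factorizes into $\rho$ per-coordinate probabilities; this is the first of the two innovations flagged in the introduction and is what prevents a catastrophic union bound. For each coordinate, with probability $\epsilon$ the Gaussian posterior is drawn and, at the scale $\Delta_{\min}/(2\rho)$ versus the posterior standard deviation $\sqrt{c/(n_{\bm{1}^e}+1)}$, Gaussian anti-concentration gives survival probability at least a universal constant (the second innovation: the per-coordinate threshold is shrunk by the factor $\rho$, making the tail constant rather than polynomial in $\Delta_{\min}/\rho$). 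With probability $1-\epsilon$, the sample equals $\widehat{\mu}_{\bm{1}^e}(t)$, which clears the threshold once enough exploration has occurred. Multiplying gives $p_t \geq (c_0 \epsilon)^\rho$ for some absolute constant $c_0$, and converting $1/p_t$ into a bound on the expected number of $\neg\mathcal{E}_1(t)$ rounds yields the $(C_1/\epsilon)^\rho \rho^2\log^2(TA_{\mathrm{loc}})/\Delta_{\min}$ term.

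Finally, part (2) of the theorem follows by a standard gap-tuning argument. Partition suboptimal joint arms into those with $\Delta_{\bm{a}} \leq \Delta_0$ (contributing at most $\Delta_0 T$) and those with $\Delta_{\bm{a}} > \Delta_0$ (apply part (1) with $\Delta_{\min}$ replaced by $\Delta_0$ and use $\sum_{e,\bm{a}^e} 1/\Delta_{\bm{a}^e} \leq A_{\mathrm{loc}}/\Delta_0$), then optimize by choosing $\Delta_0 \asymp \rho\sqrt{((C_2/\epsilon)^\rho + A_{\mathrm{loc}})\log^2(TA_{\mathrm{loc}})/T}$. The leading order is then exactly $C_2 \rho\sqrt{((C_2/\epsilon)^\rho + A_{\mathrm{loc}}) T\log^2(TA_{\mathrm{loc}})}$, with the additive $C_2 \Delta_{\max}$ absorbing the low-order and initialization terms.
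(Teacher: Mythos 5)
Your high-level architecture matches the paper's: work at the level of local arms, require the optimal arm's local samples not to be underestimated and suboptimal local samples not to be overestimated (at scale gap$/\rho$), exploit the conditional independence of the $\rho$ samples given $\cF_{t-1}$ together with the inflated variance $c=\log T$ and a restriction to the high-probability event on $\hat{\mu}_{\bm{1}^e}$ to make each per-coordinate survival probability a constant times $\epsilon$ (hence the $(C/\epsilon)^{\rho}$ factor), and finish part (2) by gap tuning. However, there are two genuine gaps. First, your regret accounting is wrong as stated: you bound the number of rounds in which a local arm $\bm{a}^e$ is overestimated beyond $\Delta_{\bm{a}^e}/(2\rho)$ by $O(\rho^2\log^2(T\localArmSize)/\Delta_{\bm{a}^e}^2)$ and then multiply by ``a gap of at most $\Delta_{\bm{a}^e}$.'' But $\Delta_{\bm{a}^e}=\min\{\Delta_{\bm{a}}\mid \bm{a}^e\in\bm{a}\}$ is the \emph{minimum} gap over joint arms containing $\bm{a}^e$; the joint arm actually pulled in such a round can have gap as large as $\Delta_{\max}$, so your argument only yields $\Delta_{\max}\rho^2\log^2(T\localArmSize)/\Delta_{\bm{a}^e}^2$, and similarly the underestimation term with fixed threshold $\Delta_{\min}/(2\rho)$ yields an extra $\Delta_{\max}/\Delta_{\min}$ factor. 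The paper avoids this by peeling over dyadic gap levels $S_r=\{\bm{a}:\Delta_{\bm{a}}\in(2^{-r},2^{-r+1}]\}$ with level-dependent thresholds $\delta_r/\rho$ and counts $\tau_r\asymp\rho^2\log^2(T\localArmSize)/\delta_r^2$, so each bad round at level $r$ costs only $O(\delta_r)$, and the geometric sum over $r$ recovers $1/\Delta_{\bm{a}^e}$ and $1/\Delta_{\min}$. Your fixed-threshold version needs this (or an equivalent dyadic charging) to reach the stated bounds.

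Second, you assert that the underestimation term reduces to $\EE[1/p_t]$ with $p_t=\PP(\cE_1(t)\mid\cF_{t-1})$ ``following the classic Thompson Sampling trick,'' but in this multi-agent setting that reduction is precisely where the dependence problem lives: suboptimal joint arms share local arms with $\bm{1}$, so you cannot argue, as in the single-agent case, that whenever the optimal samples clear their thresholds the optimal arm beats every non-overestimated suboptimal arm coordinate-by-coordinate. The paper's fix is to partition $S_r$ into $2^{\rho}$ subsets $S_{rJ}$ according to the set $J$ of coordinates on which the arm agrees with $\bm{1}$, compare only on coordinates in $[\rho]\setminus J$, use the conditional independence of $\cE_{1J}(t)$ and the maximizer over $S_{rJ}$ given $\cF_{t-1}$, and thereby prove $\PP(\bm{A}_t\in S_r(t)\mid\cF_{t-1})\le 2^{\rho}\,\PP(\bm{A}_t=\bm{1}\mid\cF_{t-1})/\PP(\cE_1(t)\mid\cF_{t-1})$; the count is then capped by restricting to rounds with $n_{\bm{1}}(t)\le\tau$ (at most $\tau$ rounds where $\bm{A}_t=\bm{1}$) and handling $n_{\bm{1}}(t)>\tau$ by a separate concentration argument. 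Your proposal supplies neither this comparison lemma nor the mechanism tying the number of ``optimal-hidden'' rounds to pulls of $\bm{1}$ (note that $\sum_t\PP(\neg\cE_1(t))$ by itself is not small early on), so the central step producing the $(C/\epsilon)^{\rho}$ term is missing. A minor further point: with thresholds $\Delta/(2\rho)$ on both sides your comparison on $\cE_1(t)\cap\cE_2(t)$ gives only a non-strict inequality, so you should shrink the slack (the paper uses $\delta_r=\Delta$-level$/4$) to rule out ties.
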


Note that when $\epsilon =1$, \algname\, reduces to MATS, which gives the first frequentist regret bound of MATS. In particular, our bound is in the same order as the Bayesian regret bound \citep[Theorem 1]{verstraeten2020multiagent} in terms of the order of $T$ and $\localArmSize $. Compared with the Bayesian regret in \citet{verstraeten2020multiagent}, our worst-case regret has an additional $\sqrt{\log T}$ factor because we inflate the variance of posterior distribution by $\log T$, which is a common trick in deriving the worst case regret bound of Thompson sampling \citep{agrawal2017nearoptimal,jin2021mots}. The derivation of $(C_2/\epsilon)^{\rho}$ is provided in the following subsection.

\subsection{Technical Challenges in Frequentist Regret Analysis and the Proof Outline}
\label{sec:chall}
For simplicity, this part assumes $\epsilon=1$ (which reduces to MATS given in \citet{verstraeten2020multiagent}).
First, let's introduce some notations to simplify our discussion. We denote $S_r$ as the set of joint arms with gaps in the interval $(2^{-r},2^{-r+1}]$ and let $\delta_r=2^{-(r+2)}$. The regret incurred by pulling the arms in $S_r$ is represented as $R(S_r)$.
Furthermore, we define $S_{r}(t)$ as the set of joint arms not overestimated at time $t$, formally given by:
\begin{align*}
    S_{r}(t)&=\{\bm{a} \  | \ \bm{a}\in S_{r}, \forall e\in [\rho] \notag \\
    &\quad \text{and} \ \bm{a}^e\neq \bm{1}^e, \theta_{\bm{a}^e}(t)\leq  {\mu}_{\bm{a}^e}+\delta_r/\rho\}.
\end{align*}
The regret $R(S_r)$ can be expressed as
{\small
\begin{align*}
    R(S_r)&\leq 8\delta_r\cdot \big(\underbrace{\textstyle{\sum_{t=1}^{T}}\ind\{\bm{A}_{t}\in S_{r}(t)\}}_{I_{1}}+ \underbrace{\textstyle{\sum_{t=1}^{T}}\ind\{\bm{A}_{t}\notin S_{r}(t)\}}_{I_2} \big).
\end{align*}}
The term $I_2$ is relatively straightforward. It's important to note that whenever we pull the arm $\bm{A}_{t}\notin S_{r}(t)$, we inevitably pull a local arm whose posterior sample has not yet converged to its true mean. After sufficient pulls of each local arm (due to pull $\bm{A}_t$ with $\bm{A}_t\notin S_{r}(t)$), and by employing the maximal inequality for the reward distribution along with the concentration bound for the posterior distribution, we can demonstrate that the event $\bm{A}_{t}\notin S_{r}(t)$ occurs with an exceedingly small probability.

We now discuss how to bound term $I_1$ and the associated challenges. In the regret analysis for single agent Thompson Sampling (TS) by \citet{agrawal2012analysis,jin2022finitetime}, the term $I_1$ is bounded as follows.
{\small
\begin{align}
\label{eq:hard1}
\sum_{t=1}^{T}\ind\{\bm{A}_{t}\in S_{r}(t)\}\leq \EE_{\hat{\mu}_{\bm{1},s}}\bigg[ {1}/{\PP(\theta_{\bm{1},s}\geq \mu_{\bm{1}}-\delta_r)} \bigg],
\end{align}}
where $\hat{\mu}_{\bm{1},s}$ is the empirical mean of arm $\bm{1}$ after being pulled $s$ times, $\theta_{\bm{1},s}$ is the posterior sample from $\cN(\hat{\mu}_{\bm{1},s},c\rho/s)$, and ${1}/{\PP(\theta_{\bm{1},s}\geq \mu_{\bm{1}}-\delta_r)}$ represents the expected maximum number of posterior samples from $\cN(\hat{\mu}_{\bm{1},s},c\rho/s)$ such that one sample is larger than $\mu_{\bm{1}}-\delta_r$.

However, in a multi-agent setting, we can't decompose it in the same way due to two main reasons:
\begin{enumerate}[leftmargin=*,nosep]
    \item Since arm $\bm{1}$ might share some local arms with other joint arms, the number of pulls of each local arm of $\bm{1}$ could be different at time $t$. This contrasts with $\hat{\mu}_{\bm{1},s}$ in Equation \eqref{eq:hard1}, where we assume that arm $1$ is pulled exactly $s$ times.
    \item 
More importantly, while the samples of each local arm are independently drawn from their respective reward distributions, a dependency issue arises in the case of the joint arm. To explain, if each local arm $\bm{1}^e$ is pulled a fixed number of times, $N_{e}$, the mean reward of $\bm{1}$ follows the $\left(\sqrt{\textstyle{\sum_{e\in [\rho]}}(N_{e})^{-1}}\right)$-subgaussian with a mean of $\textstyle{\sum_{e\in [\rho]}}\hat{\mu}_{\bm{1}^e, N{e}}$. Leveraging the properties of subgaussian random variables, it can be demonstrated that the mean reward of joint arm $\bm{1}$ converges to its true mean as the number of pulls increases. However, this is not true when the pulls of local arms are history-dependent. In such cases, MATS is more likely to pull suboptimal arms that share overestimated local arms of $\bm{1}$ (the posterior samples from these local arms could surpass those from other underestimated local arms of $\bm{1}$). If this situation occurs,   $\hat{\mu}_{\bm{1}}(t)$ is likely to be underestimated, making its distribution challenging to ascertain. Therefore, it will be difficult to derive the concentration results for $\hat{\mu}_{\bm{1}}(t)$ and consequently, the probability of $\theta_{\bm{1}}(t)\geq \mu_{\bm{1}}-\delta_r$ would also be hard to establish.  
\end{enumerate}
We solve the above issues by 1: carefully dividing $S_r$ into subsets, where the arms in each subset share the same local arms with $\bm{1}$ (total $2^{\rho}$ subsets); and 2: bounding term $I_1$ in local arms level. These two methods allow us to prove that 
{\small
\begin{align}
\label{eq:hard2}
     &\sum_{t=1}^{T}\ind\{\bm{A}_{t}\in S_{r}(t)\} \notag \\
\leq & 2^{\rho} \sum_{s=1}^{\tau}\prod_{e=1}^{\rho} \EE_{\hat{\mu}_{\bm{1}^e,s}}\left[ 1/{\PP(\theta_{\bm{1}^{e},s}\geq \mu_{\bm{1}^e}-\delta_r/{\rho} )}\right]+\Theta(1).
\end{align}}
In the right hand of inequality, $2^{\rho}$ is due to the existence of $2^{\rho}$ subsets, and $\tau$ is defined as $\Theta({\rho^2(\log (T\localArmSize))^{2}}/{(\delta_r)^2})$. The term $\Theta(1)$ exists because after each local arm is pulled more than $\tau$ times, the event $\PP\left(\theta_{\bm{1}^{e},s}\geq \mu_{\bm{1}^e}-{\delta_r}/{\rho} \right)$ is highly likely to occur. The cost for the non-occurrence of this event can be bounded by $\Theta(1)$.
Follow \citet{agrawal2012analysis,jin2022finitetime}, one can show that
{\small
\begin{align*}
\prod_{e=1}^{\rho} \EE_{\hat{\mu}_{\bm{1}^e,s}}\left[ {1}/{\PP\left(\theta_{\bm{1}^{e},s}\geq \mu_{\bm{1}^e}-\frac{\delta_r}{\rho} \right)}\right]\leq \big(\frac{\rho\sqrt{\log T}}{\delta_r} \big)^{2\rho}.
\end{align*}}
The above results are underwhelming, particularly in regards to the worst-case regret, which is $\tilde O(T^{2\rho/(2\rho-1)})$. In order to enhance these outcomes, we are introducing two innovative techniques:

\begin{enumerate}[leftmargin=*,nosep]
\item First, deriving from the concentration bound, we obtain that with high probability  $\hat{\mu}_{\bm{1}^{e},s}\geq \mu_{\bm{1}^e}-\sqrt{2\log T/s}$. Instead of considering the expectation over the entire real line for $\hat{\mu}_{\bm{1}^e,s}$, we confine $\hat{\mu}_{\bm{1}^e,s}$ to the interval $(\mu_{\bm{1}^e}-\sqrt{2\log T/s}, \infty)$. 
\item Secondly, we marginally increase the variance of the posterior distribution by $\log T$. According to the anti-concentration bound of the Gaussian posterior, the likelihood of $\theta_{\bm{1}^{e},s}$ exceeding $\hat{\mu}_{\bm{1}^e,s}+\sqrt{2\log T/s}$ remains constant. In conjunction with the condition $\hat{\mu}_{\bm{1}^{e},s}\geq \mu_{\bm{1}^e}-\sqrt{2\log T/s}$, we can ascertain that $\PP(\theta_{\bm{1}^{e},s}\geq \mu_{\bm{1}^e})$ is also a constant.
\end{enumerate}
With the above two methods, we can prove 
$I_1\leq \tau\cdot C^{\rho}$,
where $C$ is some constant.  Finally, by summing over all possible values of $r$ (i.e., $\sum_{r} R(S_r)$), we derive our regret bound.

\subsection{Lower Bound on the Worst-Case Regret Bound}
We now present some lower-bound results on the worst-case regret in our setting. The first theorem states a lower bound in terms of the horizon length $T$ and the total number of local arms across all groups $\localArmSize$ when $\rho>0$ is treated as a fixed constant.
\begin{theorem}\label{thm:lower_bound}
    For any policy $\pi$, there exists a mean vector $\mu \in [0, 1]^{\localArmSize }$ (where each component corresponds to the mean of a local arm) such that $R_n(\pi, \nu_\mu) = \Omega(\sqrt{\localArmSize  T/\rho} )$.
\end{theorem}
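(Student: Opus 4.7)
The plan is to reduce the MAMAB problem to a classical single-group $K$-armed bandit by constructing a degenerate instance that concentrates all the difficulty in one large group, and then invoke the standard minimax $\Omega(\sqrt{KT})$ bandit lower bound. The first step is an averaging argument: since $\localArmSize = \sum_{e=1}^{\rho}|\mathcal{A}^{e}|$, pigeonhole gives a group $e^{\star}\in[\rho]$ with $K^{\star} := |\mathcal{A}^{e^{\star}}| \geq \localArmSize/\rho$. I single out this group as the one that will carry all the bandit difficulty.

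Next I would construct the hard instance as follows. For every group $e\ne e^{\star}$, set the local reward deterministically to $0$, so $f^{e}\equiv 0$ and $\mu_{\bm{a}^{e}}=0$ for every $\bm{a}^{e}\in\mathcal{A}^{e}$. For group $e^{\star}$ I use the classical hard family that witnesses the minimax lower bound for a $K^{\star}$-armed $1$-subgaussian bandit (e.g.\ Gaussian rewards of unit variance, with one ``special'' arm of mean $1/2+\Delta$ and the rest at $1/2$, where $\Delta=\Theta(\sqrt{K^{\star}/T})$ is tuned to the horizon and the identity of the special arm is chosen adversarially against $\pi$). Under this instance, the global reward at round $t$ equals $f^{e^{\star}}(\bm{A}_{t}^{e^{\star}})$, and the only stochastic feedback comes from group $e^{\star}$ because the $f^{e}$'s for $e\ne e^{\star}$ are constant (hence uninformative). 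Consequently any MAMAB policy $\pi$ projects to a legitimate $K^{\star}$-armed bandit policy $\tilde\pi$ on $\mathcal{A}^{e^{\star}}$, and $R_{T}(\pi,\nu_{\mu})$ equals the standard bandit regret of $\tilde\pi$.

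Finally, invoke the classical worst-case lower bound: for any $K^{\star}$-armed bandit policy, there exists a choice of the special arm in the hard family such that the regret is $\Omega(\sqrt{K^{\star}T})$. This is the standard argument via KL divergence, Pinsker's inequality, and tensorization over the $T$ rounds. Substituting $K^{\star}\geq \localArmSize/\rho$ yields $R_{T}(\pi,\nu_{\mu}) = \Omega(\sqrt{\localArmSize T/\rho})$.

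I do not expect a substantive obstacle. The only delicate point is verifying that the reduction does not inadvertently endow the MAMAB learner with extra power: since the feedback on every $e\ne e^{\star}$ is a known constant, it provides no side channel, and overlaps between $e^{\star}$ and other hyperedges merely constrain which agent tuples realize a given $\bm{a}^{e^{\star}}\in\mathcal{A}^{e^{\star}}$ without altering the local-arm space of $e^{\star}$. Hence the projected problem is genuinely a $K^{\star}$-armed bandit, and the classical lower bound transfers verbatim.
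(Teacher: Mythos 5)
Your proposal is correct and follows essentially the same route as the paper: single out a largest group (with at least $\localArmSize/\rho$ local arms), zero out the rewards of all other groups so the problem collapses to a standard $K$-armed bandit on that group, and invoke the classical $\Omega(\sqrt{KT})$ minimax lower bound. Your additional remarks (the explicit hard Gaussian family and the observation that the constant feedback from other groups is uninformative) only make explicit what the paper leaves implicit.
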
    
Recall that our worst-case regret bound in  \Cref{thm:upper bound} is $\tilde{O}(\sqrt{C^{\rho}\localArmSize T})$, with $C$ representing a universal constant. According to Theorem \ref{thm:lower_bound}, when the number of groups $\rho$ is constant, indicating a sparse hypergraph, our worst-case regret for \algname\ is nearly optimal up to constant and logarithmic terms. 

The next theorem shows the worst possible dependence of the regret bound of \algname\ on the number of groups, i.e., $\rho$.
\begin{theorem}\label{thm:lcb}
For $c=\epsilon=1$, there is a bandit instance such that the regret of Algorithm \ref{algo:MATS} is $\Omega(C^{\rho})$, where $C>1$ is some constant.
\end{theorem}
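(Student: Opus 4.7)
\textbf{Proof plan for \Cref{thm:lcb}.}
The plan is to exhibit an instance in which MATS with $c=\epsilon=1$ decouples cleanly into $\rho$ independent single-arm Thompson sampling problems, and then amplify a worst-case single-arm TS lower bound by $\rho$ independent copies.

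\emph{Step 1 (Construction).} I would take the hypergraph to consist of $\rho$ pairwise disjoint singleton groups, one agent per group, each with $K=2$ arms. The local reward in group $e$ is Gaussian with unit variance and means $\mu_{\bm{1}^e}=\Delta/2$, $\mu_{\bm{0}^e}=-\Delta/2$, for a gap $\Delta$ to be chosen as $\Delta = C^{-\rho}$. The optimal joint arm is $\bm{1}$, every suboptimal joint arm has per-local-arm gap exactly $\Delta$, and the local reward distributions obey the paper's $1$-subgaussian assumption since the means stay bounded.

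\emph{Step 2 (Decomposition).} Because the groups are disjoint, variable elimination in line~\ref{algline:arm_selection} of Algorithm~\ref{algo:MATS} factorises: the local arm chosen in each group is simply the one with the larger sampled $\theta_{\bm{a}^e}(t)$, independently across groups. With $c=\epsilon=1$, MATS on this instance is therefore statistically identical to running $\rho$ independent copies of single-agent Thompson sampling on a two-armed Gaussian bandit with gap $\Delta$ and posterior $\mathcal{N}(\widehat{\mu},1/(n+1))$. Regret decomposes additively: $R_T = \rho \cdot R_T^{\text{single}}(\Delta)$.

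\emph{Step 3 (Per-group lower bound).} For single-arm TS with $c=1$ on a two-armed Gaussian bandit with gap $\Delta$, the worst-case regret is $\Omega(\min\{T\Delta,\,1/\Delta\})$. The argument I have in mind is a direct anti-concentration one: with constant probability the first $O(1)$ observations per arm place the empirical mean of the suboptimal arm above that of the optimal arm by $\Omega(1)$; since the (uninflated) posterior standard deviation $1/\sqrt{n+1}$ quickly becomes much smaller than that bias, TS remains locked onto the suboptimal arm for $\Omega(1/\Delta^2)$ rounds, giving regret $\Omega(1/\Delta)$ in expectation. Taking $T = C^{2\rho}$ and $\Delta=C^{-\rho}$ both terms in the min equal $C^\rho$, so the per-group regret is $\Omega(C^\rho)$.

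\emph{Step 4 (Aggregation and main obstacle).} Multiplying by the $\rho$ independent groups yields $R_T \geq \rho\cdot\Omega(C^\rho) = \Omega(C^\rho)$ at horizon $T = C^{2\rho}$, which is the claimed bound (after rescaling $C$ to absorb the factor $\rho$). The main obstacle is the single-arm lower bound in Step~3: although algorithm-independent $\Omega(\sqrt{T})$ minimax lower bounds for Gaussian bandits are classical, one needs a version tailored to TS with the specific scaling $c=1$ at the critical gap $\Delta=\Theta(1/\sqrt{T})$. This is exactly the regime where the variance inflation $c=\log T$ used in \Cref{thm:upper bound} is absent, which is why the obstruction surfaces; controlling the joint event that the empirical-mean miscue persists while the Gaussian posterior fails to produce corrective samples is the delicate step and would likely be handled by coupling TS to a fictitious algorithm whose posterior is frozen for the first $\Theta(1/\Delta^2)$ rounds.
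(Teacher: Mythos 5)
There is a genuine gap, on two levels. First, the per-group lower bound you rely on in Step~3 is not established, and the heuristic you give for it does not work: if the optimal arm's empirical mean is miscued after $O(1)$ pulls, that arm is \emph{not pulled}, so its posterior standard deviation stays at $\Theta(1)$ (it is $1/\sqrt{n+1}$ with $n=O(1)$, not shrinking), and each subsequent round it produces a corrective sample exceeding the suboptimal arm's concentrated posterior with constant probability. Hence Thompson sampling escapes the miscue in $O(1)$ rounds in expectation and does not stay ``locked'' for $\Omega(1/\Delta^2)$ rounds; a TS-specific $\Omega(1/\Delta)$ bound at the critical gap would need a much more delicate argument (or a symmetrization over which arm is optimal, which is algorithm-independent and brings you to the second problem below).

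Second, and more fundamentally, even if Step~3 were granted, your construction does not prove what \Cref{thm:lcb} is meant to prove. You tune $\Delta=C^{-\rho}$ and $T=C^{2\rho}$ so that the claimed $\Omega(C^{\rho})$ is nothing but the generic $\Omega(\sqrt{T})$ minimax regret at that horizon: in your instance $\localArmSize=2\rho$, so \Cref{thm:lower_bound} already forces $\Omega(\sqrt{\localArmSize T/\rho})=\Omega(C^{\rho})$ regret for \emph{every} algorithm. That says nothing about MATS specifically and hence nothing about the unavoidability of the $(C/\epsilon)^{\rho}$ factor in \Cref{thm:upper bound}, which is the content of the theorem. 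The paper's proof instead exhibits a single instance with a \emph{constant} gap $\Delta$, only $L+1=O(\rho)$ local arms per group, and prior $\cN(0,1)$, and exploits the multi-agent coupling that your disjoint-singleton construction deliberately removes: with $c=1$ the optimal joint arm's score is a sum of $\rho$ fresh prior samples, which exceeds $\rho$ only with probability $e^{-\Omega(\rho)}$, while in each group many suboptimal local arms keep empirical means above $2$ so the per-group maximum sample exceeds $1$ with probability $1-b^{\Omega(L)}$; consequently the optimal joint arm is not pulled at all for $\Omega(C^{\rho})$ rounds in expectation, yielding regret $\Delta\cdot\Omega(C^{\rho})$ that is exponential in $\rho$ even though the instance is otherwise easy. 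To repair your argument you would need a construction in which the exponential loss is attributable to the algorithm's handling of the $\rho$-fold product of posterior events rather than to the horizon/gap scaling, i.e., essentially the paper's coupling idea.
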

\Cref{thm:lcb} shows that $C^{\rho}$ regret is unavoidable for original Multi-Agent Thompson Sampling, which further proves the optimality of our regret bound $\tilde{O}(\sqrt{C^{\rho}\localArmSize T})$.

\begin{figure}[h]
    \centering
    \includegraphics[width = .45\textwidth]{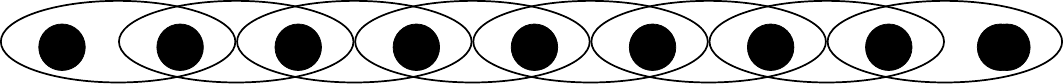}
    \caption{The hypergraph on a 0101-Chain with $10$ agents. Each hyperedge (a group of two agents) is denoted by a black oval.
    }\label{fig:chain_graph}
\end{figure}
\begin{figure*}
        \centering
        \subfigure[Bernoulli 0101]{
            \includegraphics[height=0.18\textwidth]{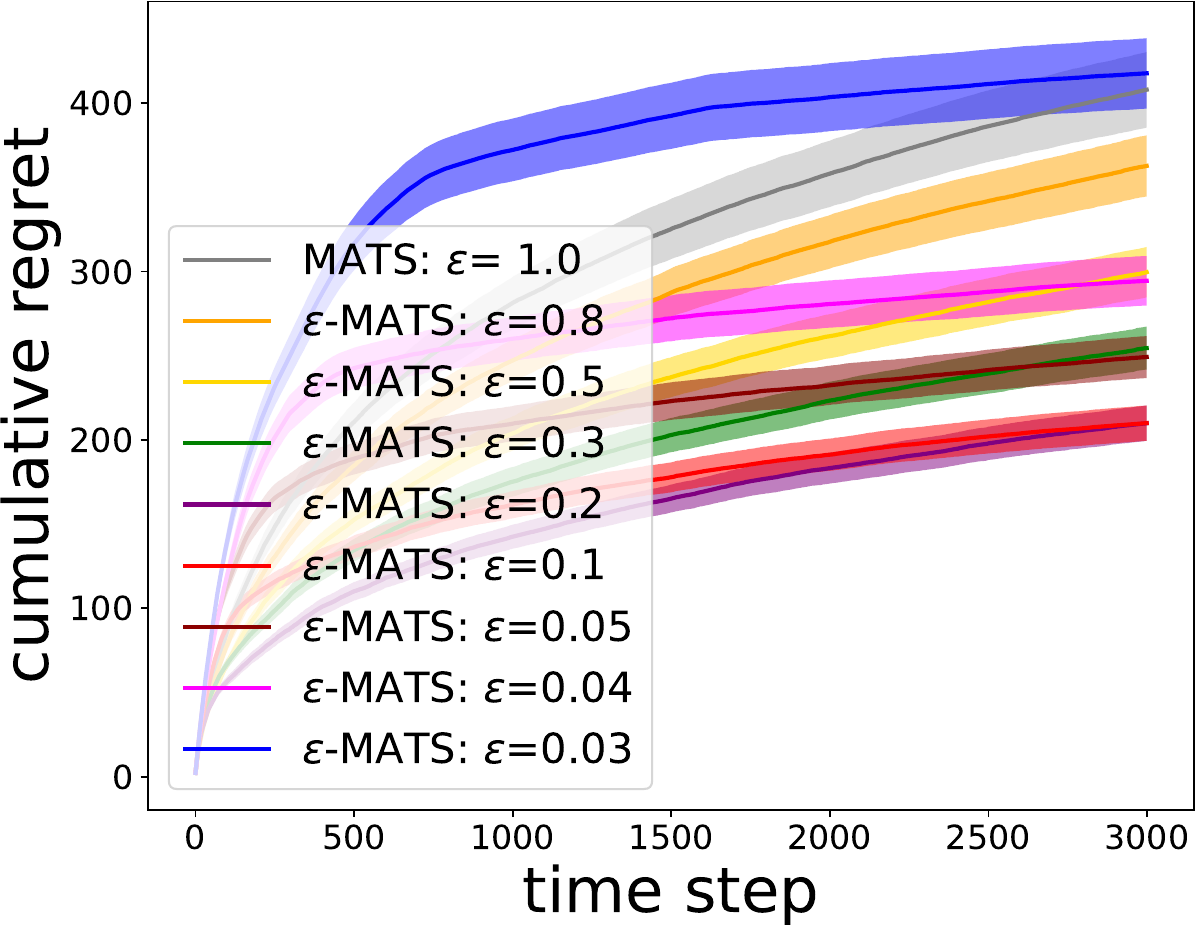}\label{fig:bernoulli_2agentInGroup_10agent_epsilon}
        }
        \subfigure[Poisson0101]{
            \includegraphics[height=0.18\textwidth]{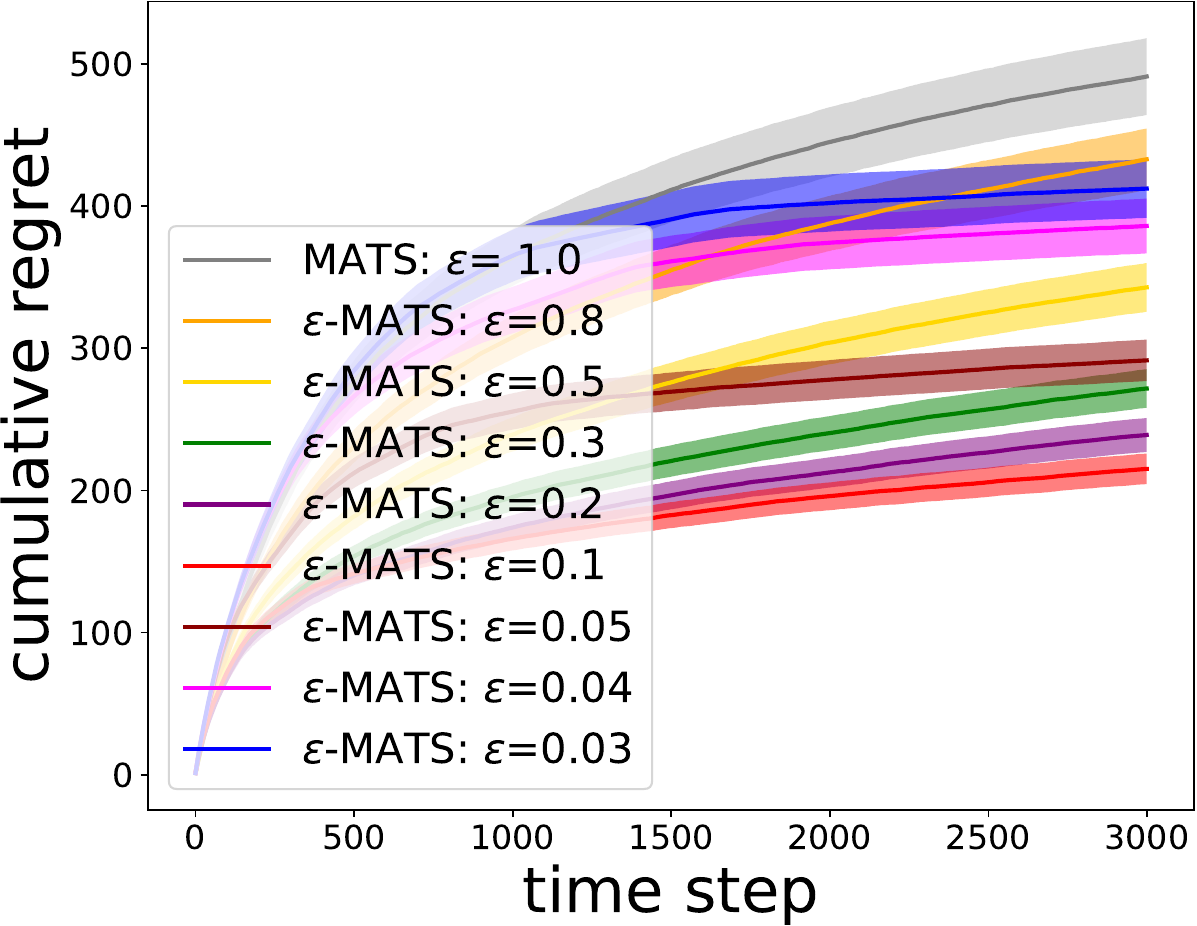}\label{fig:poisson_2agentInGroup_10agent_epsilon}
        }
        \subfigure[Bernoulli 0101 and Poisson0101 with $d = 2$]{
            \includegraphics[height=0.18\textwidth]{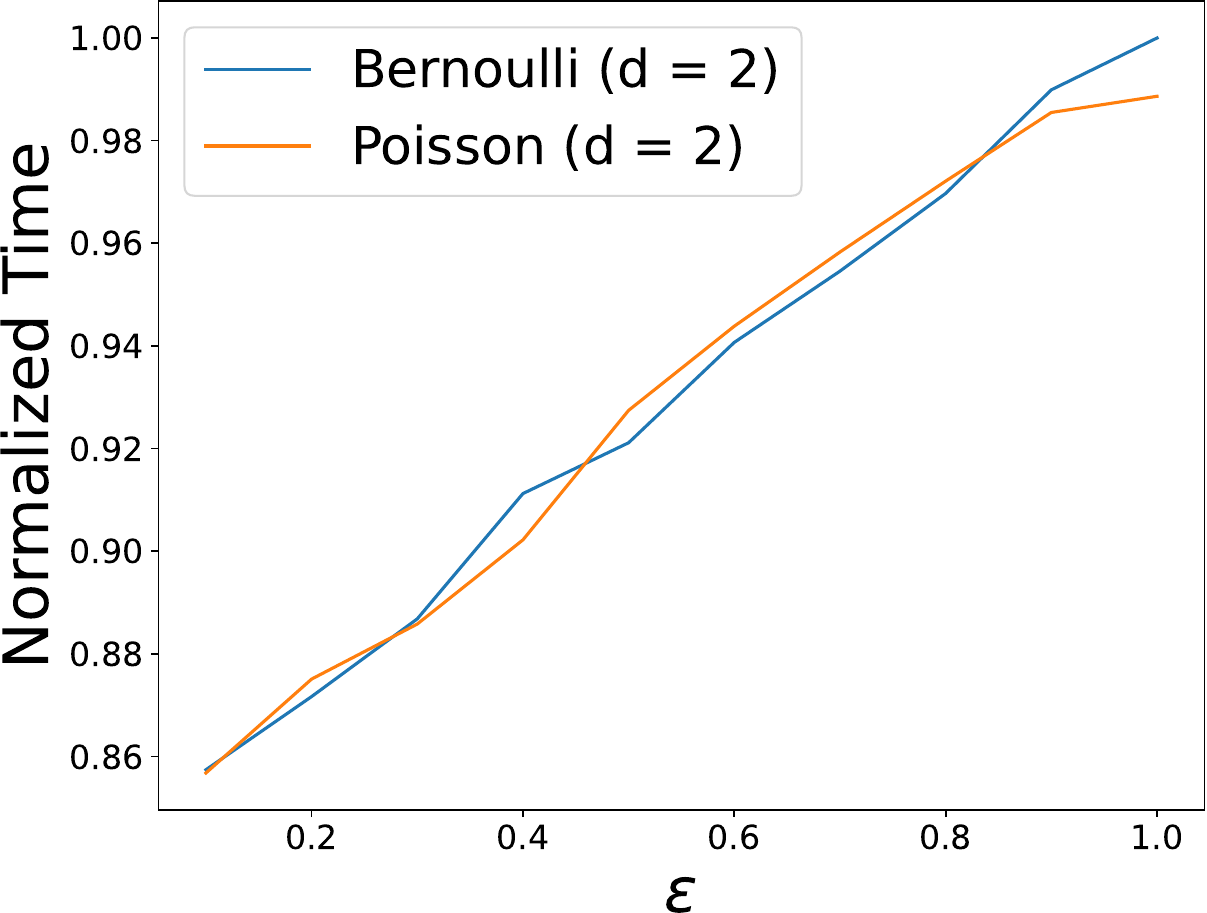}\label{fig:hypergraph_2inGroup_computation}
        }
        \subfigure[Bernoulli 0101 and Poisson0101 with $d = 3$]{
            \includegraphics[height=0.18\textwidth]{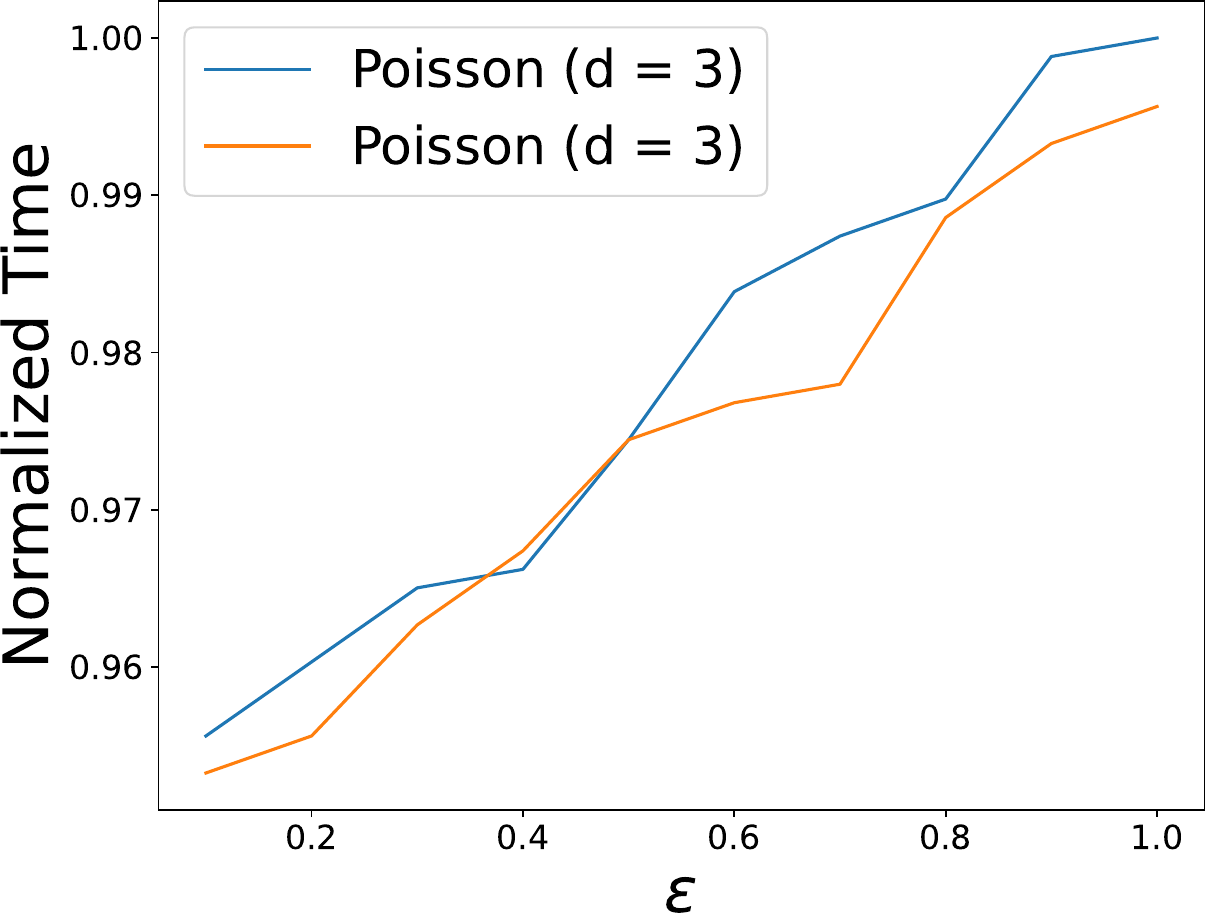}\label{fig:hypergraph_3inGroup_computation}
        }   
        \caption {Ablation study on \algname. (a) and (b): Regret performance ($m = 10$, and $d = 2$) with different $\epsilon$ in Bernoulli 0101 and Poisson 0101 tasks. Note when $\epsilon = 1.0$, \algname\, reduces to MATS.  (c) and (d): The relative computational time of \algname\, with different $\epsilon$ compared with MATS. }
        \label{fig:ablation_epsilon_in_main_paper}
\end{figure*}
\begin{figure*}[ht]
     \centering
     \subfigure[Bernoulli 0101: $m = 10$, $d = 2$]{
         \includegraphics[height=0.176\linewidth]{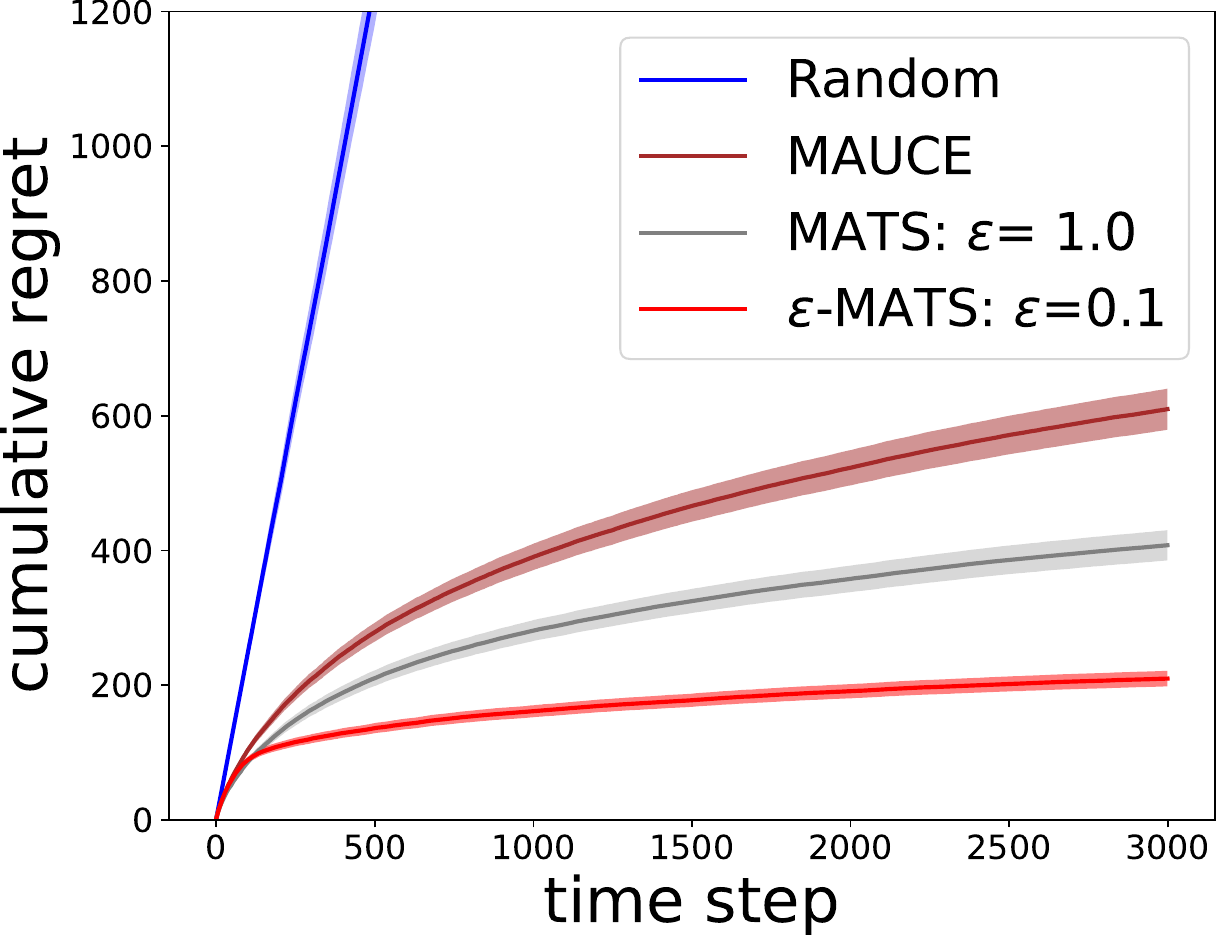}
         \label{fig:bernoulli_2agentInGroup_10agent_main}
     }
     \subfigure[Poisson 0101: $m = 10$, $d = 2$]{       
         \includegraphics[height=0.176\linewidth]{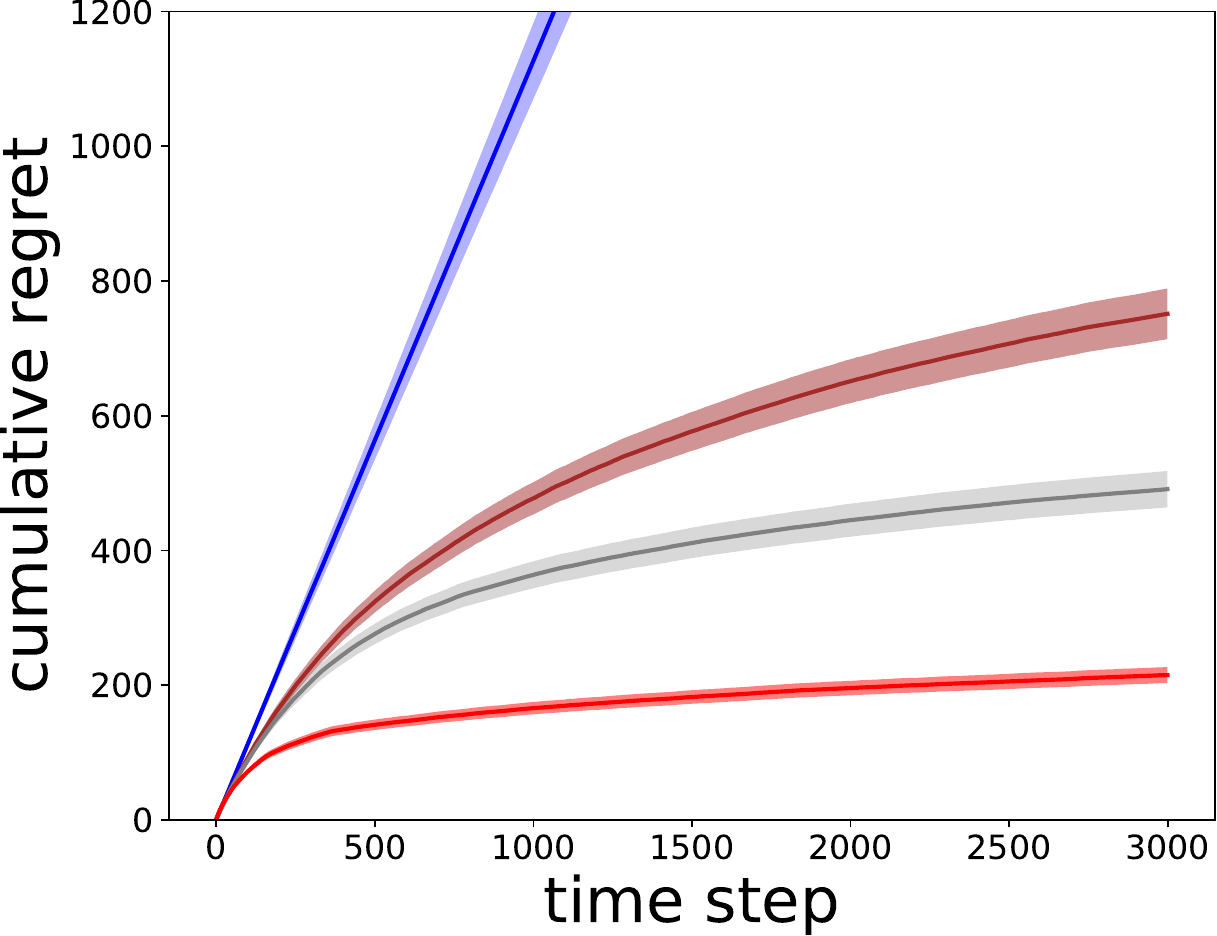}
         \label{fig:poisson_2agentInGroup_10agent_main}
     }
     \subfigure[Bernoulli 0101: $m = 10$, $d = 3$]{       
         \includegraphics[height=0.176\linewidth]{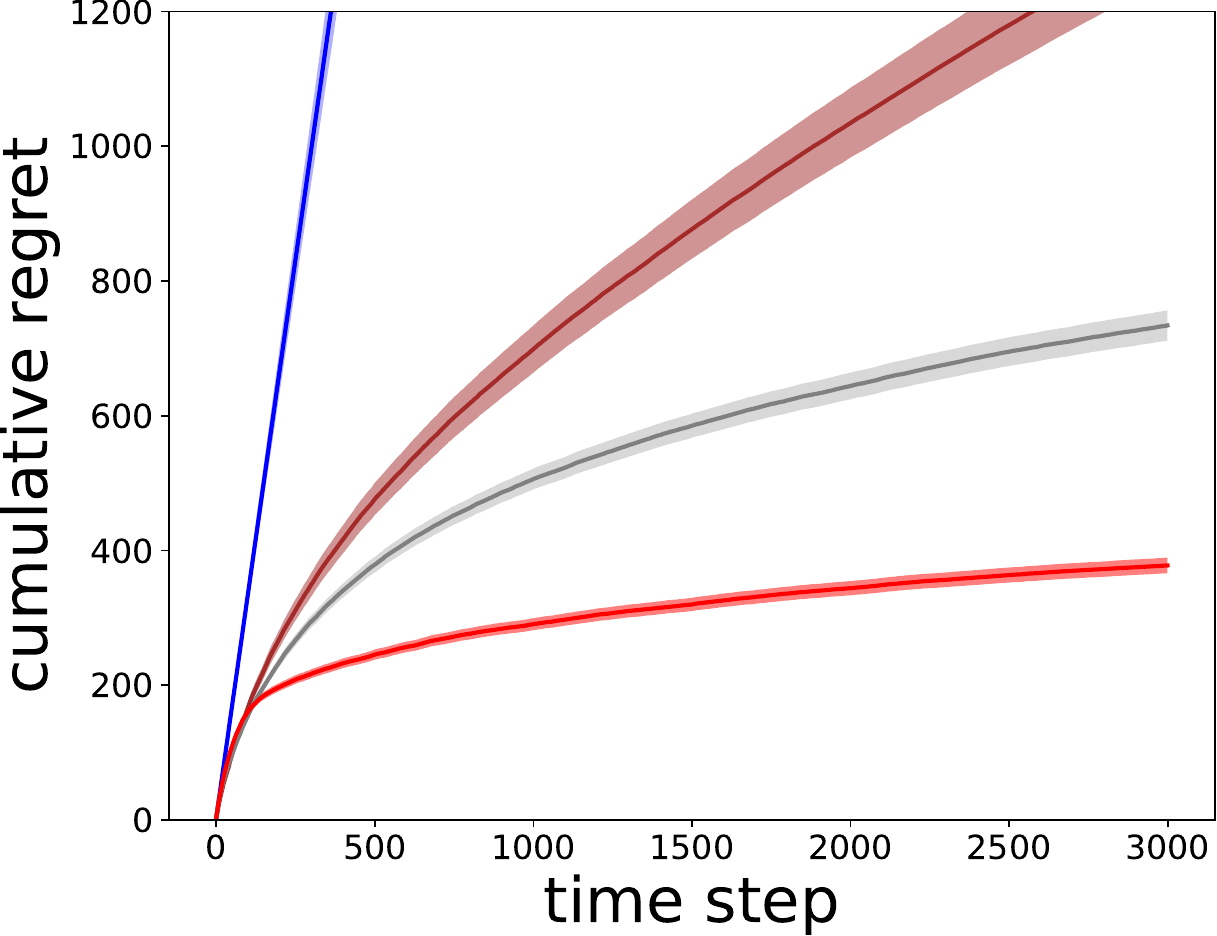}
         \label{fig:bernoulli_3agentInGroup_10agent_main}
     }
      \subfigure[Poisson 0101: $m = 10$, $d = 3$]{       
         \includegraphics[height=0.176\linewidth]{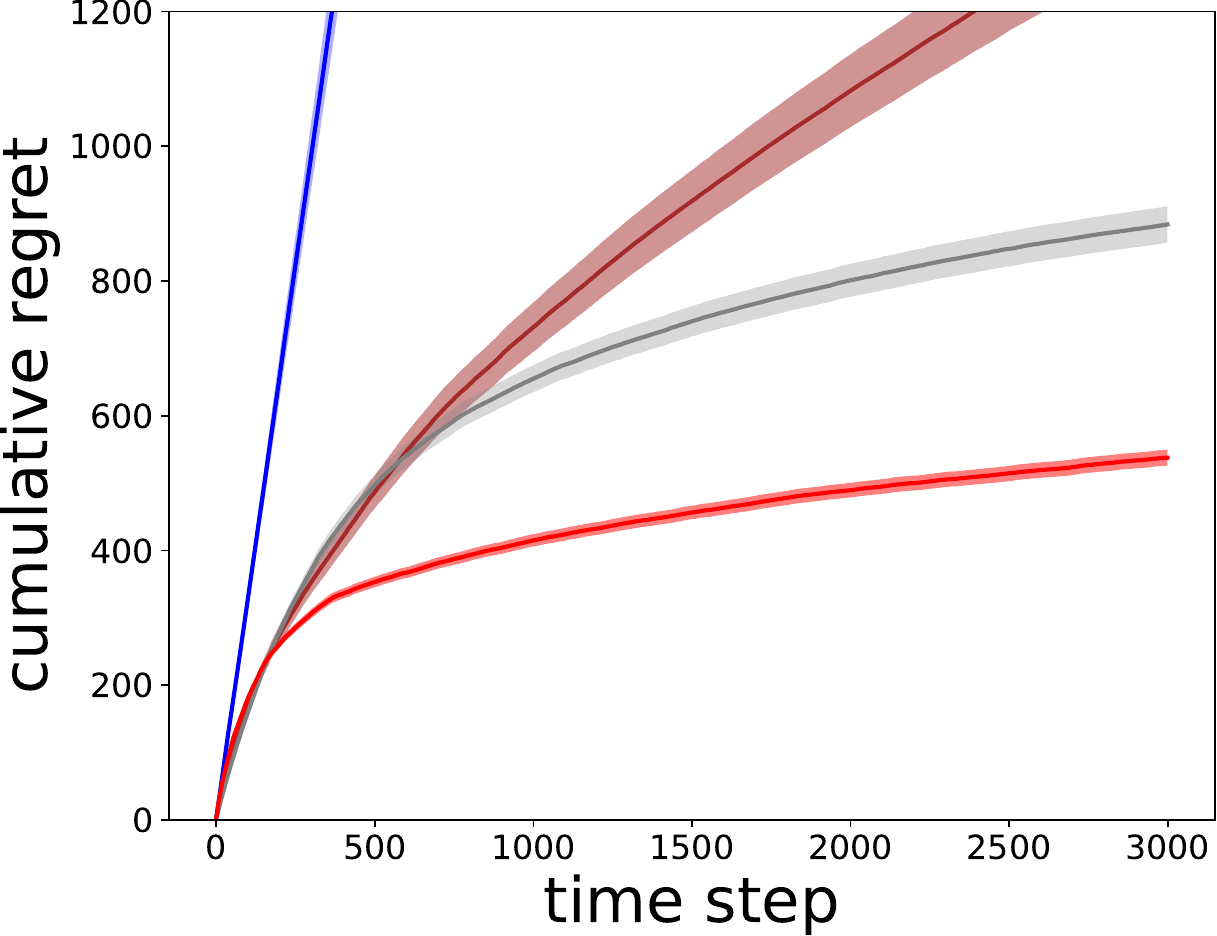}
         \label{fig:poisson_3agentInGroup_10agent_main}
     }
      \caption{Regret performance compared with other algorithm baselines in Bernoulli 0101 and Poisson 0101 with different agents in a group ($d = 2$ or $d = 3)$. \label{fig:baseline_in_main_paper}}
\end{figure*}

\section{Experiments}\label{sec:experiment}
In this section, we evaluate the proposed \algname\, algorithm on benchmark MAMAB problems including the Bernoulli 0101-Chain and the Poisson 0101-Chain. We also conducted experiments on the Gem Mining problem and the Wind Farm Control problem, which can be found in \Cref{sec:exp_details} \citep{roijers2015computing, bargiacchi2018learning, verstraeten2020multiagent}. We compare \algname\, with the vanilla MATS \citep{verstraeten2020multiagent}, MAUCE \citep{bargiacchi2018learning}, and the random policy. We also provide a thorough ablation study of \algname\, to find the optimal trade-off between greedy and Thompson sampling exploration in practice. We run all our experiments on Nvidia RTX A5000 with 24GB RAM and each experiment setting is averaged over 50 trials.  Please refer to \Cref{sec:exp_details} for detailed experimental setup, ablation studies, and more experimental results. The implementation of our \algname\ algorithm is available at \url{https://github.com/panxulab/eps-Multi-Agent-Thompson-Sampling}.

\subsection{Bernoulli and Poisson 0101-Chain}\label{sec:exp_hypergraph}
In this subsection, we conduct experiments on the Bernoulli and Poisson 0101-Chain problems, which are commonly studied in the MAMAB literature \citep{bargiacchi2018learning, verstraeten2020multiagent}. An illustration is provided in
Figure \ref{fig:chain_graph}, where the agents are positioned along a 1-dimensional path forming a graph. 
Specifically, the graph consists of $m$ agents and $m - (d -1)$ edges (or groups), where $d$ is the number of agents within each hyperedge. The agents $i$ to $i + d - 1$ in the group $i$ are connected to a local reward function $f^i (a_i, a_{i+1}, ..., a_{i+d-1})$, where $a_i$ denotes the individual arm of agent $i$.  Each agent can has two arms: $0$ and $1$. 

We consider two settings where $d = 2$ and $d=3$ respectively. For each setting, we conduct experiments for two types of reward distributions (Bernoulli and Poisson), which results in the \textbf{Bernoulli 0101-Chain} problem and the
\textbf{Poisson 0101-Chain} problem respectively. Due to the space limit, we defer the details of the reward generation to \Cref{sec:exp_details}.

We first perform an ablation study to show the effect of different $\epsilon$ on the performance of \algname. The results are presented in Figures \ref{fig:bernoulli_2agentInGroup_10agent_epsilon} and \ref{fig:poisson_2agentInGroup_10agent_epsilon}. It can be seen that with $\epsilon$ decreasing from $1$ (this corresponds to the MATS algorithm) to $0.1$, the cumulative regret of \algname\, also becomes lower. When $\epsilon$ gets smaller than $0.1$, the regret rapidly increases due to insufficient exploration. In the rest of the experiments in this subsection, we fix $\epsilon = 0.1$ for the best performance. We also compare the runtime complexity of \algname\, for different $\epsilon$, which is presented in Figures \ref{fig:hypergraph_2inGroup_computation} and \ref{fig:hypergraph_3inGroup_computation} for the setting $d=2$ and $d=3$ respectively. In particular, we calculate the ratio between the runtime of \algname\, and MATS ($\epsilon=1$) for running 1000 iterations.   Figure \ref{fig:hypergraph_2inGroup_computation} shows that lower value of $\epsilon$ decreases the runtime complexity of \algname\ in both Bernoulli 0101 and Poisson 0101 problems. Comparing  \Cref{fig:hypergraph_2inGroup_computation} and \Cref{fig:hypergraph_3inGroup_computation}, we can see that the computational efficiency is adversely affected by the size of each group. In Figure \ref{fig:baseline_in_main_paper}, we compare the regret of \algname\, with MATS, MAUCE, and Random for both Bernoulli 0101 and Poisson 0101 tasks, which demonstrate that \algname\, can significantly outperform baseline methods.

\section{Conclusion and Future Work}
In this paper, we studied the problem of multi-agent multi-armed bandits. We proposed \algname\, which combines the MATS exploration with probability $\epsilon$ and greedy exploitation with probability $1-\epsilon$. We provided a frequentist finite time regret bound for \algname, which is in the order of $\tilde{O}(\sqrt{C^\rho\localArmSize  T})$. When $\epsilon=1$, our result yields the first frequentist regret bound for MATS in the coordination hypergraph setting \cite{verstraeten2020multiagent}. We also derived a lower bound for this environment in the order of $\Omega(\sqrt{\localArmSize  T/\rho})$, implying \algname\, is near optimal when $\rho$ is assumed to be small, i.e., the coordination hypergraph is sparse. Empirical evaluations demonstrate the superior performance of \algname\, compared with existing algorithms for MAMAB problems with a coordination hypergraph. 

Our experimental findings present a notable observation: the performance of \algname\ frequently surpasses that of MATS. Nevertheless, the regret bound presented in our main theorem suggests that MATS has a slightly better worst-case regret bound compared to \algname. This discrepancy offers a compelling avenue for future research to explore the potential for \algname\ to achieve the same regret bound as MATS. Additionally, while this paper focuses on the worst-case regret bound, it would be intriguing to investigate if \algname\ could exhibit a more favorable regret bound in some easier bandit instances, leading to the analysis of instance-dependent regret bounds. Finally, it would be intriguing to investigate the potential of applying the core concepts of coordination hypergraph and epsilon-exploring in our paper to enhance Thompson sampling-based algorithms within more complex settings, such as linear bandits, neural contextual bandits, and Markov decision processes \citep{xu2022langevin,xu2022neural,zhang2020neural,ishfaq2023provable}.

\noindent\textbf{Acknowledgements} We thank the anonymous reviewers for their helpful comments. This research is supported by the National Research Foundation, Singapore under its AI Singapore Program (AISG Award No: AISG-PhD/2021-01-004[T]), 
by the Whitehead Scholars Program, and by the US National Science Foundation Award 2323112. In particular, T. Jin is supported by the National Research Foundation, Singapore under its AI Singapore Program (AISG Award No: AISG-PhD/2021-01-004[T]). P. Xu is supported by the Whitehead Scholars Program and the
US National Science Foundation Award 2323112.     
The views in this paper are those of the authors and do not represent any funding agencies.
\bibliography{mats_sorted}

\newpage
\appendix
\onecolumn

\section{Related Works}

There is a rich literature on multi-agent bandits, particularly focusing on scenarios governed by dependency graphs, similar to the one studied in this paper \citep{bargiacchi2018learning, wang2020distributed, landgren2016distributed, sankararaman2019social, agarwal2022multiagent, zhu2021federated, szorenyi2013gossipbased}. These works utilize coordination graphs, where agents can communicate and coordinate their arms. Each agent is represented as a vertex, and edges between agents indicate coordination possibilities. These coordination graph schemes have also been applied to multi-agent Markov decision processes \citep{guestrin2001multiagent, kok2006collaborative, stranders2012dcops, dehauwere2010learning, scharpff2016solving,zhang2023global}. The applications of such frameworks include wind turbine optimization \citep{verstraeten2021scalable}, generative flow models \citep{hayes2022multiobjective}, contextual bandits \citep{han2021budget, elahi2021contextual}, and latent bandits \citep{pal2023optimal}.

Another area of research focuses on collision sensing in bandits, where agents receive no rewards if they choose the same arm. Numerous algorithms have been proposed to find optimal matchings given collision sensing information \citep{magesh2019multiuser, nayyar2016regretoptimal, mehrabian2020practical, shi2021heterogeneous, wang2020optimal}. There are also works that consider scenarios where collision sensing information is not available \citep{shi2020decentralized, lugosi2022multiplayer, huang2022optimal}, as well as cases involving statistical sensing \citep{boursier2020selfish, wei2018distributed, besson2018multiplayer}. In statistical sensing, agents cannot detect collisions directly but can sense the quality of an arm before taking it. Additionally, there are multi-agent studies on information asymmetry \citep{chang2022online}, where agents independently select arms to form a joint arm without communication, aiming to identify the optimal joint arm despite differing feedback for each agent.

In our study, the graph structure is a hypergraph, where vertices represent agents and hyperedges correspond to groups. There are other types of graph structures considered in bandit problems as well. For example, in works such as \cite{gentile2014online, ma2015active, li2016collaborative, yang2020laplacianregularized, gupta2021multiarmed, thaker2022maximizing, deng2023phygcn}, the graphs capture the similarity between arm rewards. By utilizing the graph, the agent can infer arm rewards without extensive sampling. Specifically, they often employ the graph Laplacian, where vertices represent arms and weighted edges indicate similarity. This type of graph structure is also seen in spectral bandits \cite{kocak2021best, yang2020laplacianregularized, valko2014spectral}, where vertices represent signals and edges represent the proximity of means.

Frequentist regret bounds for Thompson sampling have been extensively studied in the single-agent setting. One line of work focuses on the minimax optimality of Thompson sampling \citep{agrawal2017nearoptimal, jin2021mots, jin2022finitetime,jin2023thompson}, analyzing the optimality of gap-independent frequentist regret bounds. Another line of research examines the asymptotic optimality of Thompson sampling \citep{kaufmann2012thompson, korda2013thompson, agrawal2017nearoptimal}, investigating the optimality of gap-dependent frequentist regret bounds.

\section{Table of notations}
In Table \ref{table:notation} we listed the most commonly used notations and their definitions.
\begin{table}
\begin{center}
\caption{Table of notations.}
\label{table:notation}
\begin{tabular}{cp{12cm}}
\toprule
Symbol & Description \\[0.5ex] 
\midrule
$m$ & total number of agents.\\
$K$ & number of arms per agent\\
$A$ & number of joint (global) arms (i.e., $K^m$)\\
$\rho$ & number of (possibly overlapping) groups.\\
$d$   & number of agents in each local group\\
$\localArmSize $ & number of local arms, i.e., $\sum_{e=1}^\rho \sum_{\bm{a}^e:\text{ for some joint arm } \ba} 1 $ \\
$\bm{a}$ & joint arm across $m$ agents. \\
$\bm{1}$ & optimal joint arm.\\
$\bm{A}_t$ & joint arm taken at time $t$.\\
$\mathcal{A}_i$ & set of individual arms for agent $i \in [m]$.\\
$\mathcal{A}^e$ & set of local arms for group $e \in [\rho]$. \\
$\bm{a}^e$ & local arm of $\bm{a}$ corresponding to group $e$.\\
$f^e(\bm{a}^e)$ & stochastic reward for local arm $\bm{a}^e$.\\
$f(\bm{a})$ & sum of local stochastic rewards for arm $\bm{a}$, ie. $\sum_{e=1}^\rho f^e(\bm{a}^e)$.\\
$\mu_{\bm{a}^e}$ & true mean of local reward for local arm $\bm{a}$ corresponding to group $e$.\\
$\mu_{\bm{a}}$ & sum of local true mean rewards for joint arm $\bm{a}$, ie. $\sum_e \mu_{\bm{a}^e}$. \\
$\hat{\mu}_{\bm{a}^e,s}$ & the empirical mean of local arm $\bm{a}^e$ after its $s$-th pull. \\
$\Delta_{\bm{a}^e}$ & the minimum reward gap between joint arm $\bm{a}$ which contains $\bm{a}^e$ and $\bm{1}$, i.e., $\Delta_{\bm{a}^e}=\min\{\Delta_{\bm{a}}\mid \bm{a}^e\in \bm{a}\}$. \\
$\Delta_{\bm{a}}$ & gap between global reward of joint arm $\bm{a}$ and optimal arm $\bm{1}$.\\
$\Delta_{\min}$ & minimum reward gap across all joint arms, i.e. $\min_{\bm{a} \in \times_{i=1}^m\mathcal{A}_i\setminus\{\bm{1}\}} \Delta_{\bm{a}}$.\\
$\Delta_{\max}$ & maximum reward gap across all joint arms, i.e. $\max_{\bm{a} \in \times_{i=1}^m\mathcal{A}_i} \Delta_{\bm{a}}$.\\
$\theta_{\bm{a}^e}(t)$ & sampling posterior corresponding to local arm $\bm{a}^e$ at time $t$.\\
$\theta_{\bm{a}}(t)$ & sum of posteriors for joint arm $\bm{a}$, i.e. $\sum_e \theta_{\bm{a}^e}$.\\
$n_{\bm{a}^e}(t)$ & number of times local arm $\bm{a}^e$ has been pulled up to time $t$.\\
$n_{\bm{a}}(t)$ & number of times joint arm $\bm{a}$ has been pulled up to time $t$.\\
$R_T$ & regret up to time $T$.\\
\bottomrule
\end{tabular}%
\end{center}
\end{table}

\section{Frequentist Regret Analysis of MATS}
In this section, we provide the proof of \Cref{thm:upper bound}.
Following the notations in \citet{verstraeten2020multiagent}, we let $\localArmSize $ be the number of local arms, that is $\localArmSize = \sum_{e \in [\rho]}\sum_{\bm{a}^e \in \cA^e}1$.  For any $r=0,1,\ldots$, let $\delta_r=2^{-(r+2)}$ and 
\begin{align*}
    S_r=\{\bm{a} \mid \Delta_{\bm{a}}\in (2^{-r},2^{-r
    +1}]\}.
\end{align*}
We then decompose the regret as follows, 
\begin{align*}
    R_{T}&=\sum_{r=0}^{\infty}\sum_{t=1}^{T}\sum_{\bm{A}_t\in S_r} \Delta_{\bm{A}_t}. 
\end{align*}
For simplicity, we let $R(S_r)=\sum_{t=1}^{T}\sum_{\bm{A}_t\in S_r} \Delta_{\bm{A_t}}$. In what follows, we focus on bounding the term $R(S_r)$. 
Let $\mathcal{F}_t$ be the history up to time $t$. 
We define $S_r(t)$ to be the set of arms that are not overestimated at time $t$  by the local prior $\theta_{\bm{a}^e}(t)$.
\begin{align*}
    S_{r}(t)=\bigg\{\bm{a} \ \bigg | \ \bm{a}\in S_{r}, \forall e\in [\rho] \ \text{and} \ \bm{a}^e\neq \bm{1}^e, \theta_{\bm{a}^e}(t)\leq  {\mu}_{\bm{a}^e}+\frac{\delta_r}{\rho}\bigg\}.
\end{align*}
Start with regret decomposition 
\begin{align}
    R(S_r)\leq &8\delta_r\sum_{t=1}^{T}\EE[\ind\{\bm{A}_{t}\in \lev\}] \notag \\
    \leq & 8\delta_r\underbrace{\sum_{t=1}^{T}\EE[\ind\{\bm{A}_{t}\in \lev(t)\}]}_{I_1}+ 8\delta_r\underbrace{\sum_{t=1}^{T}\EE[\ind\left\{\bm{A}_{t}\in S_r, \bm{A}_{t}\notin \lev(t)\right\}]}_{I_2},\label{eq:maindecom} 
\end{align}
where the first inequality is because for any $\bm{a}\in S_r$, it holds that $\Delta_{\bm{a}}\leq 8\delta_r$.
Now, we bound these terms separately.

\subsection{Bounding term $I_{1}$} 
This part requires the following supporting lemma.
\begin{lemma}
\label{lem:under-arm1}
Let $\hat{\mu}_{s}$ be the empirical mean of $s$ random variable i.i.d from $\cN(\mu,1)$. Then 
\begin{align*}
    \PP\left(\exists s\in [T], \hat{\mu}_{s}+\sqrt{\frac{2\log (\rho T)}{s}}\leq \mu_1\right)\leq \frac{1}{\rho T}.
\end{align*}
\end{lemma}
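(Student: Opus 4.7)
The statement is a uniform (time-maximal) lower-tail concentration inequality for the empirical mean of a Gaussian random walk. My plan is to rewrite it in martingale form and apply a peeling argument. Set $Z_s = \sum_{i=1}^s (X_i-\mu_1)$ with $X_i \stackrel{\mathrm{iid}}{\sim} \cN(\mu_1,1)$, so that $\hat{\mu}_s - \mu_1 = Z_s/s$. The event in the lemma is equivalent to
\begin{align*}
    \mathcal{E} \;=\; \bigl\{\exists\, s\in[T]:\; -Z_s \geq \sqrt{2s\log(\rho T)}\bigr\}.
\end{align*}
A direct union bound over $s\in[T]$ using the one-sample Gaussian tail $\PP(-Z_s\ge \sqrt{2s\log(\rho T)})\leq 1/(\rho T)$ would only yield $1/\rho$, so a naive union bound is too weak by a factor of $T$; the uniformity in $s$ must be exploited.

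The approach I would take is the standard geometric peeling trick. Partition $[T]$ into dyadic buckets $B_k = \{s: 2^{k-1} < s \leq 2^k\}$ for $k=1,\dots,K$ with $K = \lceil \log_2 T \rceil$, so that $K = O(\log T)$. Within each bucket, apply Doob's maximal inequality to the exponential (sub-Gaussian) martingale $M_s^{(\lambda)} = \exp\bigl(-\lambda Z_s - s\lambda^2/2\bigr)$ with $\lambda$ chosen as a function of $k$ to optimize the threshold at the lower endpoint $2^{k-1}$ of the bucket. Since for every $s \in B_k$ the inequality $\sqrt{2s\log(\rho T)} \ge \sqrt{2^k \log(\rho T)}$ holds, controlling $\sup_{s \in B_k} (-Z_s)$ in terms of the bucket endpoint suffices to dominate the required threshold $\sqrt{2s\log(\rho T)}$ uniformly on $B_k$. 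A union bound over the $K$ buckets then yields the claimed time-uniform bound.

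The main obstacle is constant tracking: the per-bucket tail bound one obtains from Doob's inequality must be strong enough that, even after multiplying by the $O(\log T)$ union-bound factor, the total probability collapses to $1/(\rho T)$. This forces $\lambda$ inside each bucket to be slightly inflated (by a $\sqrt{\log\log T}$-style slack), or equivalently the threshold $\sqrt{2\log(\rho T)/s}$ must be read with an implicit absorbed logarithmic constant. A cleaner alternative, which I would mention as a backup, is to apply Ville's inequality directly to the Gaussian-mixture supermartingale $\widetilde M_s = \int M_s^{(\lambda)}\,d\pi(\lambda)$ for a suitable prior $\pi$; this yields a one-shot time-uniform lower-tail bound of the desired form without any peeling and automatically produces the factor $1/(\rho T)$ with sharp constants. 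Either route delivers the lemma; the peeling route is more elementary, while the mixture route is cleaner but requires importing the self-normalized machinery.
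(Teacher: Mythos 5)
Your proposal takes a genuinely different route from the paper, and as sketched it does not close. The paper's own proof is exactly the ``naive'' union bound you reject: for each fixed $s$ it bounds $\PP\big(\hat{\mu}_s+\sqrt{2\log(\rho T)/s}\le \mu_1\big)$ by $e^{-2\log(\rho T)}=(\rho T)^{-2}$ (its concentration lemma is invoked with the paper's normalization of subgaussian tails, $\PP(|X|\ge\epsilon)\le 2e^{-\epsilon^2}$, the convention used for the rewards throughout), and then sums over the $T$ values of $s$ to get $1/(\rho^2 T)\le 1/(\rho T)$. So the factor-of-$T$ deficit you diagnose is an artifact of the weaker per-term tail $(\rho T)^{-1}$ you plugged in; with the per-term tail the paper actually uses, no exploitation of time-uniformity is needed at all.

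The gap in your own route is quantitative and larger than the ``logarithmic slack'' you anticipate. In the block $B_k=\{s:2^{k-1}<s\le 2^k\}$ the smallest threshold is about $\sqrt{2^k\log(\rho T)}$, whereas Doob's maximal inequality for $\sup_{s\le 2^k}(-Z_s)$ carries variance proxy $2^k$; the exponent you obtain is $-\log(\rho T)/2$, so each block contributes $(\rho T)^{-1/2}$, and the union over the $O(\log T)$ blocks gives $O\big(\log T/\sqrt{\rho T}\big)$ --- polynomially weaker than $1/(\rho T)$, not off by a $\log\log T$ slack. Refining the peeling ratio to $1+\eta$ trades exponent $-\log(\rho T)/(1+\eta)$ against $\Theta(\eta^{-1}\log T)$ blocks, so the best this route yields is $1/(\rho T)$ times extra logarithmic factors; the Ville/mixture supermartingale likewise produces a curved boundary with an additional $\sqrt{\log s}$-type term, not the width $\sqrt{2\log(\rho T)/s}$ simultaneously for all $s\in[T]$ at level $1/(\rho T)$. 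Nor can this be engineered away: for genuinely unit-variance Gaussians with the constant $2$ as written, the probability of ever crossing the boundary $\sqrt{2s\log(\rho T)}$ on $s\in[T]$ is of order $(\log T)/(\rho T)$ up to $\sqrt{\log}$ corrections (an iterated-logarithm-type obstruction), so it eventually exceeds $1/(\rho T)$. The lemma is to be read with the paper's tail normalization, under which the elementary per-$s$ union bound already closes the proof.
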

Let $\cE_{3}$ be the event that for any $s\leq T$ and $e\in [\rho]$, $\hat{\mu}_{\bm{1}^e,s}\geq \mu_{\bm{1}^e}-\sqrt{\frac{2\log (\rho T)}{s}}$, i.e.,
\begin{align*}
    \cE_{3}=\bigcap_{s\in[T]} \bigcap_{e\in [\rho]}\left\{\hat{\mu}_{\bm{1}^e,s}\geq \mu_{\bm{1}^e}-\sqrt{\frac{2\log (\rho T)}{s}}\right\}.
\end{align*}
From \Cref{lem:under-arm1}, we have
\begin{align*}
    \PP(\cE_{3})\geq 1-\frac{1}{T}. 
\end{align*}
Now, we bound term $I_1$. We have 
\begin{align}
\label{eq:decom-I1}
    I_1&=\sum_{t=1}^{T}\EE\bigg[\ind\{\bm{A}_{t}\in \lev(t)\}\bigg] \notag \\
    & \leq T\cdot \PP((\cE_{3})^c)+\sum_{t=1}^{T}\EE\bigg[\ind\{\bm{A}_{t}\in \lev(t)\}\cdot \ind\{\cE_{3}\}\bigg] \notag \\
    & \leq 1+\sum_{t=1}^{T}\EE\bigg[\ind\{\bm{A}_{t}\in \lev(t)\}\cdot \ind\{\cE_{3}\}\bigg].
\end{align}
We first define some useful notations. Suppose $J$ is a subset of $2^{[\rho]}$. Subsequently, we define $S_{rJ}$ as a specific subset of $S_r$, characterized by the condition that every arm in $S_{r}$ exclusively shares local arms of $\bm{1}$ for groups in $J$, i.e.,
\begin{align*}
   S_{rJ}=\big\{\bm{a} \ \big | \ \bm{a}\in S_{r}, \forall e\in J,  \bm{a}^e=\bm{1}^e \ \text{and} \ \forall e\in [\rho]\setminus J, \bm{a}^e\neq \bm{1}^e\big\}.
\end{align*}
We also define 
\begin{align*}
    S_{rJ}(t)=\{\bm{a} \mid \bm{a}\in S_{rJ}, \bm{a}\in S_{r}(t)\}.
\end{align*}
Let $\cE_{1}(t)$ be the event that the local arms of the optimal joint arm are not underestimated. Namely,
\begin{align*}
    \mathcal{E}_1(t) = \bigcap_{e=1}^{\rho} \left\{\theta_{\bm{1}^e}(t)\geq \mu_{\bm{1}^e}-\frac{\delta_r}{\rho}\right\}.
\end{align*}
Similar to $S_{rJ}(t)$, we define
\begin{align*}
    \mathcal{E}_{1J}(t)=\bigcap_{e:e\in [\rho]\setminus J} \left\{\theta_{\bm{1}^e}(t)\geq \mu_{\bm{1}^e}-\frac{\delta_r}{\rho}\right\}.
\end{align*}
Additionally, we introduce $\cE_{2J}(t)$, which represents the event where arms that are associated with local arms labeled by $\bm{1}$ in $[\rho]\setminus J$ are not overestimated.
\begin{align*}
    \cE_{2J}(t)=\bigcap_{\bm{a}:\bm{a}\in S_{rJ}}\left\{ \forall e\in [\rho]\setminus J, \theta_{\bm{a}^e}(t)\leq \mu_{\bm{a}^e}+\frac{\delta_r}{\rho}\right\}.
\end{align*}
Let $\bm{A}_{t}(J)=\max_{\bm{a}\in S_{rJ}} \theta_{\bm{a}}(t)$. For term $\ind\{\bm{A}_{t}\in \lev(t)\}$, we bound it as follows. 
\begin{align}
    2^{\rho}\cdot \PP(\bm{A}_t =\bm{1} \mid\mathcal{F}_{t-1}) &\geq \sum_{J\subseteq 2^{[\rho]}} \PP\bigg(\forall \bm{a} \in S_{rJ}, \sum_{e\in [\rho]\setminus J} \theta_{\bm{1}^e}(t)\geq \sum_{e\in [\rho]
    \setminus J} \theta_{\bm{a}^e}(t) \ \bigg | \ \mathcal{F}_{t-1} \bigg) \notag \\
    &\geq \sum_{J\subseteq 2^{[\rho]}}  \PP(\bm{A}_{t}(J)\in S_{rJ}(t), \cE_{1J}(t) \mid\mathcal{F}_{t-1}) \notag \\
    &\geq \sum_{J\subseteq 2^{[\rho]}} \bigg(\PP(\bm{A}_{t}\in S_{rJ}(t) \mid  \cF_{t-1})\cdot \PP(\cE_{1}(t)\mid \cF_{t-1})\bigg) \notag \\
     &\geq \PP(\cE_1(t)\mid \cF_{t-1}) \cdot \sum_{J\subseteq 2^{[\rho]}}\PP(\bm{A}_{t}\in S_{rJ}(t) \mid \cF_{t-1}) \notag \\
    &=\PP(\cE_1(t)\mid \cF_{t-1}) \cdot  \PP(\bm{A}_{t}\in S_{r}(t)\mid \cF_{t-1}).\label{eq:1-n}
\end{align}
The first inequality is due to the fact that if there exists $\bm{a} \in S_{rJ}, \sum_{e\in [\rho]\setminus J} \theta_{\bm{1}^e}(t)< \sum_{e\in [\rho]\setminus J} \theta_{\bm{a}^e}(t)$, then $\theta_{\bm{1}}(t)<\theta_{\bm{a}}(t)$ \footnote{This follows from the fact that $\theta_{\bm{a}^e} = \theta_{\bm{1}^e}$ for $e \in J$ when $\bm{a} \in S_{rJ}$.}, which means $\bm{A}_{t}\neq \bm{1}$. The second inequality is due to the fact that if the events $\bm{A}_{t}(J)\in S_{rJ}(t)$ and 
$\cE_{1J}(t)$ occur, then for any $\bm{a}\in S_{rJ}$,
\begin{align*}
    \sum_{e\in [\rho]\setminus J}\theta_{\bm{1}^e}(t)& \geq \sum_{e\in [\rho]\setminus J} \bigg(\mu_{\bm{1}^e}-\frac{\delta_r}{\rho}\bigg) \notag \\
    & \geq \sum_{e\in [\rho]\setminus J} \bigg(\mu_{(\bm{A}_t(J))^e}+\frac{\delta_r}{\rho}\bigg) \notag \\
    & \geq  \sum_{e\in [\rho]\setminus J}\theta_{(\bm{A}_t(J))^e}(t) \notag \\
    & \geq \sum_{e\in [\rho]\setminus J} \theta_{\bm{a}^e}(t),  \qquad \qquad \text{which means $\bm{A}_{t}\in S_{rJ}(t)$ occur}
\end{align*} 
where the first inequality is from the definition of the event $\cE_{1J}(t)$ and the second inequality is due to $\bm{A}_t(J)\in S_{rJ}$ and $\Delta_{\bm{A}_t(J)}\geq 4\delta_r$. The third inequality of \eqref{eq:1-n} is due to $\bm{A}_{t}(J)\in S_{rJ}(t)$, as well as $\cE_{1J}(t)$ and $\bm{A}_{t}(J)\in S_{rJ}(t)$ being conditionally independent given $\cF_{t-1}$. Rearranging \eqref{eq:1-n}, we obtain
\begin{align}
    \PP(\bm{A}_{t}\in S_r(t)\mid \cF_{t-1})\leq  \frac{2^{\rho}\cdot \PP(\bm{A}_{t}=1\mid \cF_{t-1})}{\PP(\cE_{1}(t)\mid \cF_{t-1})}.\label{eq:tor1}
\end{align}
 Let $\tau=\frac{8\rho^2 \log^2 (T\localArmSize)}{\delta_r^2}$. Continuing on inequality \eqref{eq:decom-I1}, we further  bound $I_1$ as follows.
\begin{align}
    I_1&\leq 1+\sum_{t:t\in [T]}\EE\bigg[\ind\{\bm{A}_{t}\in \lev(t)\}\ind\{\cE_{3}\}\bigg] \notag \\
     & \leq 1+\sum_{t:n_{\bm{1}}(t)\leq \tau}\EE\bigg[\ind\{\bm{A}_{t}\in \lev(t)\}\ind\{\cE_{3}\}\bigg]+ \sum_{t:n_{\bm{1}}(t)>\tau}\EE\bigg[\ind\{\bm{A}_{t}\in \lev(t)\}\bigg] \notag \\
       &\leq 1+\underbrace{\sum_{t: n_{\bm{1}}(t)\leq \tau}\EE\bigg( \frac{2^{\rho}}{\PP(\cE_{1}(t)\mid \cF_{t-1})} \ind\{\cE_{3}\}\cdot \ind[\bm{A}_{t}=1] \bigg)}_{I_{11}} \notag \\
       &\qquad +\underbrace{\sum_{t:n_{\bm{1}}(t)>\tau}\EE\bigg[\ind\{\bm{A}_{t}\in \lev(t)\}\bigg]}_{I_{12}}\label{eq:boundI1},
\end{align}
where the last inequality is due to \eqref{eq:tor1}.

We bound $I_{11}$ and $I_{12}$ separately. \\

\textit{Bounding term $I_{11}$.} We have
\begin{align}
      \frac{\ind\{\cE_{3}\} }{\PP(\cE_{1}(t)\mid \cF_{t-1})}  
    = & \prod_{e=1}^{\rho}  \frac{1}{\PP\left(\theta_{\bm{1}^{e}}(t)\geq \mu_{\bm{1}^e}-\frac{\delta_r}{\rho} \ \bigg | \ \hat{\mu}_{\bm{1}^e}(t)\geq \mu_{\bm{1}^e}-\sqrt{\frac{2\log T}{n_{\bm{1}^e}(t)}}\right)} \notag \\
    \leq & \prod_{e=1}^{\rho}  \frac{1}{\PP\left(\theta_{\bm{1}^{e}}(t)\geq \mu_{\bm{1}^e} \ \bigg | \ \hat{\mu}_{\bm{1}^e}(t)\geq \mu_{\bm{1}^e}-\sqrt{\frac{2\log T}{n_{\bm{1}^e}(t)}}\right)}. \label{eq:bound-I11-first}
\end{align}
Note that with probability $\epsilon$, $\theta_{\bm{1}^e}(t) \sim \cN(\hat\mu_{\bm{1}^e}(t), \log T/(n_{\bm{1}^e}(t)+1))$.
By letting $z=\sqrt{2(n_{\bm{1}^e}(t)+1)/n_{\bm{1}^e}(t)}\leq 4$ and $\sigma=\sqrt{{\log T}/({n_{\bm{1}^e}(t)}+1)}$ in Lemma \ref{lem:antiGAU}, we have
\begin{align}
 & \PP\left(\theta_{\bm{1}^{e}}(t)\geq \mu_{\bm{1}^e} \ \bigg | \ \hat{\mu}_{\bm{1}^e}(t)\geq \mu_{\bm{1}^e}-\sqrt{\frac{2\log T}{n_{\bm{1}^e}(t)}}\right) \notag \\
  &\geq    \PP\left(\theta_{\bm{1}^{e}}(t)\geq \hat{\mu}_{\bm{1}^e}(t)+\sqrt{\frac{2\log T}{n_{\bm{1}^e}(t)}}\right) \notag \\
  &\geq  \frac{\epsilon}{3\times 10^4}.\label{eq:bound-I11-fin}
\end{align}
By substituting \eqref{eq:bound-I11-fin} to  \eqref{eq:bound-I11-first}, we obtain 
\begin{align*}
    \frac{1}{\PP(\cE_{1}(t)\mid \cF_{t-1})} \ind\{\cE_{3}\} \leq (3\times 10^4/\epsilon)^{\rho}.
\end{align*}
Finally, we can bound $I_{11}$ as
\begin{align*}
    I_{11}&=\sum_{t: n_{\bm{1}}(t)\leq \tau}\EE\bigg( \frac{2^{\rho}}{\PP(\cE_{1}(t)\mid \cF_{t-1})} \ind\{\cE_{3}\}\cdot \ind[\bm{A}_{t}=1] \bigg) \notag \\
    & \leq \sum_{t: n_{\bm{1}}(t)\leq \tau}\EE\bigg((6\times 10^4/\epsilon)^{\rho} \cdot \ind[\bm{A}_{t}=1] \bigg) \notag \\
    &\leq \tau \cdot (6\times 10^4/\epsilon)^{\rho}.
\end{align*}
\textit{Bounding term $I_{12}$.}
If $n_{\bm{1}}(t)$ is pulled more than $\tau$ times, the posterior sample of each local arm $\bm{1}^e$ is concentrated on $\mu_{\bm{1}^e}$ and thus we can prove the following lemma.  
\begin{lemma}
\label{lemma:B.2}
We have
\begin{align*}
    \sum_{t:n_{\bm{1}}(t)>\tau} \EE \bigg[\ind\{\bm{A}_{t}\in S_r(t)\} \bigg]\leq 2.
\end{align*}
\end{lemma}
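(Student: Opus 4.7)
The plan is to decouple the event $\{\bm{A}_t \in S_r(t), n_{\bm{1}}(t) > \tau\}$ from the pull-history randomness by introducing high-probability concentration events for each local arm of $\bm{1}$ and then deriving a deterministic contradiction. A key structural observation is that every pull of the joint arm $\bm{1}$ increments the local counter $n_{\bm{1}^e}$ for every $e \in [\rho]$, so $n_{\bm{1}}(t) > \tau$ forces $n_{\bm{1}^e}(t) > \tau$ on every coordinate. With this in mind I would define the empirical-mean event $\mathcal{G} = \bigcap_{e \in [\rho]} \bigcap_{s=\tau+1}^{T}\{|\hat{\mu}_{\bm{1}^e,s} - \mu_{\bm{1}^e}| \leq \delta_r/(2\rho)\}$ and, for each round $t$, the posterior-sample event $\mathcal{H}_t = \bigcap_{e \in [\rho] : n_{\bm{1}^e}(t) > \tau}\{|\theta_{\bm{1}^e}(t) - \hat{\mu}_{\bm{1}^e}(t)| \leq \delta_r/(2\rho)\}$.

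On $\mathcal{G} \cap \mathcal{H}_t$ and whenever $n_{\bm{1}}(t) > \tau$, every local posterior of $\bm{1}$ satisfies $\theta_{\bm{1}^e}(t) \geq \mu_{\bm{1}^e} - \delta_r/\rho$. If additionally $\bm{A}_t \in S_r(t)$, the definition of $S_r(t)$ supplies $\theta_{\bm{A}_t^e}(t) \leq \mu_{\bm{A}_t^e} + \delta_r/\rho$ for every coordinate $e$ with $\bm{A}_t^e \neq \bm{1}^e$, while matching coordinates cancel between $\theta_{\bm{1}}(t)$ and $\theta_{\bm{A}_t}(t)$. Summing the per-coordinate bounds and invoking $\Delta_{\bm{A}_t} \geq 4\delta_r$ (which follows from $\bm{A}_t \in S_r$) yields $\theta_{\bm{1}}(t) - \theta_{\bm{A}_t}(t) \geq \Delta_{\bm{A}_t} - 2\delta_r \geq 2\delta_r > 0$, contradicting $\bm{A}_t = \argmax_{\bm{a}} \theta_{\bm{a}}(t)$. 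Consequently the indicator $\ind\{\bm{A}_t \in S_r(t), n_{\bm{1}}(t) > \tau\}$ vanishes on the good event, and the sum of interest is bounded by $T \cdot \PP(\mathcal{G}^c) + \sum_{t=1}^{T}\PP(\mathcal{H}_t^c)$.

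To close out I would control both tail probabilities using the choice $\tau = 8\rho^2\log^2(T\localArmSize)/\delta_r^2$. A Hoeffding bound on the $1$-subgaussian local rewards with a union bound over $e \in [\rho]$ and $s \in \{\tau+1,\ldots,T\}$ gives $\PP(\mathcal{G}^c) \leq 2\rho T\exp(-\log^2(T\localArmSize))$, which is far smaller than $1/T$ for any nontrivial problem size. For the posterior, the Gaussian tail applied to the variance $\log T/(n_{\bm{1}^e}(t)+1) \leq \log T/\tau$ yields $\PP(\mathcal{H}_t^c) \leq 2\rho\exp(-\log^2(T\localArmSize)/\log T)$; the inequalities $\log^2(T\localArmSize)/\log T \geq \log(T\localArmSize)$ and $\rho \leq \localArmSize$ then collapse this to $O(1/T)$ per round, so $\sum_t \PP(\mathcal{H}_t^c) = O(1)$, and the total is bounded by the stated constant. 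The $\epsilon$-mixture does not affect this step: with probability $1-\epsilon$ the algorithm sets $\theta_{\bm{1}^e}(t) = \hat{\mu}_{\bm{1}^e}(t)$, which automatically satisfies $\mathcal{H}_t$.

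The main obstacle is the posterior-concentration step: because the sampling variance is inflated by the factor $c = \log T$, the Gaussian tail produces the exponent $\log^2(T\localArmSize)/\log T$ rather than the cleaner $\log(T\localArmSize)$, and this is precisely why $\tau$ must carry the squared logarithm instead of a single one. The remainder of the argument is a purely structural comparison between $\theta_{\bm{1}}(t)$ and $\theta_{\bm{A}_t}(t)$ that exploits the $S_r(t)$ definition on the suboptimal coordinates together with the gap condition $\Delta_{\bm{A}_t} \geq 4\delta_r$ to absorb the $2\delta_r$ slack coming from the two-sided deviations $\delta_r/\rho$ on each coordinate.
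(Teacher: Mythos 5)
Your proposal is correct and follows essentially the same route as the paper's proof: a high-probability concentration event for the empirical means of $\bm{1}$'s local arms, a Gaussian-tail bound on the posterior samples once each $n_{\bm{1}^e}(t)$ exceeds $\tau$, and the structural observation (left implicit in the paper) that on these events $\bm{A}_t\in S_r(t)$ would force $\theta_{\bm{1}}(t)-\theta_{\bm{A}_t}(t)\geq \Delta_{\bm{A}_t}-2\delta_r>0$, contradicting the argmax selection. The only deviations are cosmetic: you use two-sided events and a plain Hoeffding bound with a union over $s$ (costing an extra factor $T$ that the $\log^2(T\localArmSize)$ exponent absorbs for nontrivial $T$), whereas the paper uses one-sided events and the maximal inequality of Lemma \ref{lem:pink}.
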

From the above lemma, $I_{12}\leq 2$.
By combining the results  of term $I_{11}$ and $I_{12}$ together and then substituting them into \eqref{eq:boundI1}, we obtain
\begin{align}
\label{eq:res-I1}
    I_1\leq 3+\tau\cdot (6\times 10^4/\epsilon)^{\rho}.
\end{align}

\subsection{Bounding term $I_2$}
Let $\cE$ be the event that if any local arm $\bm{a}^{e}$ has been pulled more than $\tau$ times at $t$, then for this arm (which belongs in $S_r$) and any $t$, we have
$|\theta_{\bm{a}^e}(t)-{\mu}_{\bm{a}^e}|\leq \frac{\delta_r}{\rho}$, i.e., 
\begin{align*}
\cE(t)=\bigcap_{e=1}^{\rho}\bigcap_{\bm{a}^{e}: n_{\bm{a}^e}(t)\geq \tau}\left\{|\theta_{\bm{a}^e}(t)-\mu_{\bm{a}^{e}}|<\frac{\delta_r}{\rho}\right\}.
 \end{align*}
The following lemma shows that $\cE$ happens with high probability. 
\begin{lemma}
\label{eq:lem-b3}
We have
\begin{align*}
   \Pr(\cE(t))\geq 1-\frac{4}{T}.
\end{align*}
Let $Y_r(t)$ be the indicator on the event such that the joint arm taken at time $t$ belongs in $S_r$ and there exists a group such that the local arm of this group pulled less than $\tau$ times, i.e., 
\begin{align*}
    Y_{r}(t)=\ind\{\bm{A}_{t}=\bm{a}, \bm{a}\in S_r,  \exists e\in [\rho] \ \text{s.t.} \ n_{\bm{a}^e}(t)<\tau \}.
\end{align*}
Recall that $\localArmSize $ is the total number of local arms. Intuitively, if $Y_{r}(t)=1$, then 
there is at least one local arm with $n_{\bm{a}^e}(t)<\tau$ that has been pulled one more time at round $t$. Since there are a total $\localArmSize $ local arms, after at most $\tau\cdot \localArmSize $ number of rounds $t$ such that $Y_r(t) = 1$, the number of pulls of every local arm exceeds $\tau$. Therefore, $Y_r(
t)$ can only be $1$ for at most $\tau \cdot \localArmSize$ time steps and is $0$ for the rest of them. 
Therefore,
\begin{align*}
    \sum_{t=1}^{T} Y_{r}(t)\leq \tau\cdot A_r,
\end{align*}
where $A_r$ is the  number of local arms $\bm{a}^e$ which satisfies $\exists \bm{a}\in S_r$, $\bm{a}^e\in \bm{a}$.
Finally, we can bound $I_2$ as follows, 
\begin{align}
   I_2=&\sum_{t=1}^{T}\EE\bigg[\ind\{\bm{A}_{t}\in S_r, \bm{A}_{t}\notin S_{r}(t) \}\bigg] \notag \\
    = &\sum_{t=1}^{T}\EE\bigg[\ind\{\bm{A}_{t}\in S_r, \bm{A}_{t}\notin S_{r}(t), \cE(t)\}\bigg]+\sum_{t=1}^{T}\EE\bigg[\ind\{\bm{A}_{t}\in S_r, \bm{A}_{t}\notin S_{r}(t), (\cE(t))^c\}\bigg] \notag \\
    \leq & \sum_{t=1}^{T} Y_{r}(t)+ \sum_{t=1}^{T} \PP((\cE(t))^c) \notag \\
    \leq & \tau \cdot A_r+4\label{eq:bound-I2}.
\end{align}
\end{lemma}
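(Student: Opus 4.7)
The plan is to control the complementary event $(\cE(t))^c$ by a triangle-inequality decomposition
$|\theta_{\bm{a}^e}(t)-\mu_{\bm{a}^e}|\leq |\theta_{\bm{a}^e}(t)-\widehat{\mu}_{\bm{a}^e}(t)|+|\widehat{\mu}_{\bm{a}^e}(t)-\mu_{\bm{a}^e}|$,
and demanding each summand be below $\delta_r/(2\rho)$ uniformly over all local arms $\bm{a}^e$ with $n_{\bm{a}^e}(t)\geq \tau$. I will split the failure event into the ``empirical mean deviates too far from the truth'' event and the ``posterior sample deviates too far from the empirical mean'' event, and bound each by $2/T$.

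For the first event, I would exploit that each local reward is $1$-subgaussian, so the standard Hoeffding-type tail gives
\begin{align*}
\Pr\!\bigl(|\widehat{\mu}_{\bm{a}^e,s}-\mu_{\bm{a}^e}|\geq \delta_r/(2\rho)\bigr)\leq 2\exp\!\bigl(-s\delta_r^2/(8\rho^2)\bigr).
\end{align*}
The choice $\tau=8\rho^2\log^2(T\localArmSize)/\delta_r^2$ forces the exponent to exceed $\log^2(T\localArmSize)\geq 2\log(T\localArmSize)$ for all $s\geq \tau$, so the tail is at most $2/(T\localArmSize)^2$. A union bound over the $\localArmSize$ local arms and the (at most $T$) possible values of the counter $s\in[\tau,T]$ contributes at most $2\localArmSize T/(T\localArmSize)^2 = 2/(T\localArmSize)\leq 2/T$ to $\Pr((\cE(t))^c)$.

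For the second event, I would condition on the filtration $\cF_{t-1}$, which fixes both $\widehat{\mu}_{\bm{a}^e}(t)$ and $n_{\bm{a}^e}(t)=s$. On $\{s\geq \tau\}$, with probability $1-\epsilon$ the sample $\theta_{\bm{a}^e}(t)$ equals $\widehat{\mu}_{\bm{a}^e}(t)$ exactly (trivially below the $\delta_r/(2\rho)$ threshold); with probability $\epsilon$ it is Gaussian with variance $\log T/(s+1)$, giving
\begin{align*}
\Pr\!\bigl(|\theta_{\bm{a}^e}(t)-\widehat{\mu}_{\bm{a}^e}(t)|\geq \delta_r/(2\rho)\bigm|\cF_{t-1}\bigr)\leq 2\exp\!\bigl(-(s+1)\delta_r^2/(8\rho^2\log T)\bigr).
\end{align*}
For $s\geq \tau$ the exponent is at least $\log^2(T\localArmSize)/\log T\geq \log(T\localArmSize)$, so the conditional tail is at most $2/(T\localArmSize)$. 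Taking expectations and union-bounding over the $\localArmSize$ local arms adds another $2/T$, and combining the two parts yields $\Pr(\cE(t))\geq 1-4/T$.

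The main obstacle is not the concentration inequalities themselves but the bookkeeping around the random counter $n_{\bm{a}^e}(t)$: the empirical mean is evaluated at a sample-dependent stopping value, which is why the union over $s\in[\tau,T]$ is essential, and the posterior sample must be analyzed conditionally on $\cF_{t-1}$ so that its distribution is well-defined. The value of $\tau$ must be tuned so that simultaneously $s\delta_r^2/(8\rho^2)\gtrsim 2\log(T\localArmSize)$ (to survive the joint union bound over $s$ and arms for the empirical tail) and $s\delta_r^2/(8\rho^2\log T)\gtrsim \log(T\localArmSize)$ (for the inflated-variance Gaussian tail); the prescribed $\tau=8\rho^2\log^2(T\localArmSize)/\delta_r^2$ is precisely what makes both work simultaneously.
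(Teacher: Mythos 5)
Your proposal is correct and takes essentially the same approach as the paper: decompose $|\theta_{\bm{a}^e}(t)-\mu_{\bm{a}^e}|$ into the empirical-mean deviation and the posterior-sample deviation from the empirical mean, bound each by $\delta_r/(2\rho)$ with failure probability at most $2/T$ using the choice of $\tau$ and the inflated posterior variance $\log T/(n_{\bm{a}^e}(t)+1)$, and combine to get $1-4/T$. The only minor difference is bookkeeping: you union-bound over the counter values $s\in[\tau,T]$ (absorbing the extra factor of $T$ through the $\log^2(T\localArmSize)$ exponent), while the paper gets uniformity in $s$ directly from the maximal inequality in Lemma~\ref{lem:pink}; both yield the stated bound.
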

We use the following facts in the first inequality. From the definition of $S_{r}(t)$, we have for any $\bm{A}_{t}\in S_r\backslash S_r(t)$, %
\begin{align}
\label{eq:last-1}
    \exists e\in [\rho], \bm{A}_t^e\neq \bm{1}^e, \theta_{\bm{A}_t^e}(t)>\mu_{\bm{A}_t^e}+ \frac{\delta_r}{\rho}.
\end{align}
Moreover, if $\cE(t)$ happens, \eqref{eq:last-1} implies that 
\begin{align}
\label{eq:last-2}
    \exists e\in [\rho], n_{\bm{A}_t^e}(t)<\tau,
\end{align}
which means $Y_{r}(t)=1$. Therefore, $\ind\{\bm{A}_{t}\in S_r, \bm{A}_{t}\notin S_{r}(t), \cE\}\leq Y_{r}(t)$.

\subsection{Putting Everything Together} 
By combining \eqref{eq:res-I1} and \eqref{eq:bound-I2} together and then substituting them into \eqref{eq:maindecom}, we obtain that there exists some universal constant $C$ such that 
\begin{align*}
    R(S_i)&\leq 8\delta_r\cdot \tau \Big(A_r+\big(6\cdot 10^4/\epsilon\big)^\rho \Big)+80\delta_r.
\end{align*}
Therefore, by letting $\Delta_{\min}=\min_{\bm{a}: \bm{a}\neq \bm{1}} \Delta_{\bm{a}} $ and $\Delta_{\max}=\max_{\bm{a}} \Delta_{\bm{a}} $, then there exists some universal constant $C$ such that
\begin{align*}
    R_{T}&=\sum_{r} R({S_r}) \notag \\
        & \leq \sum_{r}\bigg(8\delta_r\cdot \tau \bigg(A_r+\big(6\cdot 10^4/\epsilon\big)^\rho \bigg)+80\delta_r\bigg)\notag \\
        & =  \sum_{r}\bigg(\frac{64\rho^2 \log^2 (T\localArmSize)}{\delta_r}\cdot \bigg(A_r+\big(6\cdot 10^4/\epsilon\big)^\rho \bigg)+80\delta_r\bigg)\notag \\
        & \leq \frac{C(C/\epsilon)^{\rho}\rho^2 \log^2 (T\localArmSize)}{\Delta_{\min}}+C\Delta_{\max}+ \sum_{i} \frac{C\rho^2 A_r \log^2 (T\localArmSize)}{\delta_r},\label{eq:mini}
\end{align*}
where in the last inequality $C\Delta_{\max}$ is from $\sum_{r}80\delta_r$.
We let $S(\bm{a^{e}})=\{r \mid \exists \bm{a} \in S_r, \bm{a}^e\in \bm{a}\}$ and $\Delta_{\bm{a}^e}=\min\{\Delta_{\bm{a}}\mid \bm{a}^e\in \bm{a}\}$.
For the last term $\sum_{i}\frac{C\rho^2 A_r \log^2 (T\localArmSize)}{\delta_r}$, we can rewrite it in the following way. 
\begin{align}
    \sum_{r} \frac{C\rho^2 A_r \log^2 (T\localArmSize)}{\delta_r}&=\sum_{e\in [\rho]}\sum_{\bm{a}^e\in \cA
    ^{e}\setminus \{\bm{1}^e\}}\sum_{r\in S(\bm{a}^e)} \frac{C\rho^2 \log^2 (T\localArmSize)}{\delta_r}\notag \\
    &\leq 8\sum_{e\in [\rho]}\sum_{\bm{a}^e\in \cA^{e}\setminus \{\bm{1}^e\}} \frac{C\rho^2 \log^2 (T\localArmSize)}{\Delta_{\bm{a}^e}}.
\end{align}
Therefore, there exists some universal constant $C_1$ such that 
\begin{align*}
    R_{T}\leq \frac{C_1(C_1/\epsilon)^{\rho}\rho^2 \log^2 (T\localArmSize)}{\Delta_{\min}}+C_1\sum_{e\in [\rho]}\sum_{\bm{a}^e\in \cA^{e}\setminus \{\bm{1}^e\}}\frac{\rho^2 \log^2 (T\localArmSize)}{\Delta_{\bm{a}^e}}+C_1\Delta_{\max}.
\end{align*}
\textbf{Worst Case Regret.}
We have that there exists some universal constant $C_2$ such that
\begin{align*}
    R_{T}&\leq \min \left\{T\Delta_{\min}, \frac{C_1(C_1/\epsilon)^{\rho}\rho^2 \log^2 (T\localArmSize)}{\Delta_{\min}}+C_1\Delta_{\max}+ \sum_{i} \frac{C_1\rho^2 A_i \log^2 (T\localArmSize)}{\delta_i}\right\} \notag \\
    &\leq \min \left\{T\Delta_{\min}, \frac{C_1(C_1/\epsilon)^{\rho}\rho^2 \log^2 (T\localArmSize)}{\Delta_{\min}}+C_1\Delta_{\max}+ \frac{C_1\rho^2 \localArmSize  \log^2 (T\localArmSize)}{\Delta_{\min}}\right\}.
\end{align*}
Note that when $\Delta_{\min}< \sqrt{\frac{C_1((C_1/\epsilon)^{\rho}+\localArmSize)\rho^2 \log^2 (T\localArmSize)}{T}}$, we have
\begin{align*}
    T\Delta_{\min}\leq \sqrt{{C_1((C_1/\epsilon)^{\rho}+\localArmSize)\rho^2 T \log^2 (T\localArmSize)}},
\end{align*}
and when  $\Delta_{\min}\geq \sqrt{\frac{C_1((C_1/\epsilon)^{\rho}+\localArmSize)\rho^2 \log^2 (T\localArmSize)}{T}}$,  we have
\begin{align*}
    \frac{C_1(C_1/\epsilon)^{\rho}\rho^2 \log^2 (T\localArmSize)}{\Delta_{\min}}+ \frac{C_1\rho^2 \localArmSize  \log^2 (T\localArmSize)}{\Delta_{\min}}\leq \sqrt{{C_1((C_1/\epsilon)^{\rho}+\localArmSize)\rho^2 T \log^2 (T\localArmSize)}}.
\end{align*}
Therefore, the worst-case regret of \algname\ is bounded as follows.
\begin{align*}
    R_{T}\leq C_2\Delta_{\max}+ C_2\rho \sqrt{\left((C_2/\epsilon)^{\rho}+\localArmSize \right) T \log^2 (T\localArmSize)}.
\end{align*}

\section{Proof of Supporting Lemmas}
In this section, we provide the proofs for the technical lemmas we used in our main paper and the proof of the main theorem.

\subsection{Proof of Lemma \ref{lemma:variable_elimination_complexity}}
Let $G_1,\ldots, G_\rho$ be the sets of agents belonging to group $e\in[\rho]$, respectively. To simplify notation, we denote $\cG$ as the hypergraph where $G_1,\ldots, G_\rho$ represent the hyperedges. Then, $\localArmSize = \sum_{e=1}^\rho \prod_{i \in G_e} |\mathcal{A}_i|$. We can prove the desired results by induction on the number of agents.

For the base case, when there is only one agent, the coordination graph consists of a single vertex with degree 0. Since we have $|\mathcal{A}_1|$ arms to optimize over, the complexity is $|\mathcal{A}_1|$, completing the base case.

For the induction step, let's assume that the result in \Cref{lemma:variable_elimination_complexity} holds for a hypergraph with $n-1$ vertices. Now we consider a hypergraph $\cG$ with $n$ agents. Similar to \eqref{eq:optimize}, we decompose the optimization problem as follows:
\begin{align}
\max_{\bm{a}} f(\bm{a}) &= \max_{\bm{a}}\sum_{e=1}^\rho f^e (\bm{a}^e) \notag\\
&= \max_{\bm{a}} \bigg[ \underbrace{\sum_{\bm{a}^e \in \bm{a}: a_n \notin \bm{a}^e} f^e (\bm{a}^e)}_{I_1} + \underbrace{\max_{\bm{a}^e: a_n \in \bm{a}^e} \sum_{\bm{a}^e \in \bm{a}: a_n \in \bm{a}^e} f^e (\bm{a}^e)}_{I_2}\bigg],\label{eq:optimize_induction}
\end{align}
where $a_n$ represents an individual arm of agent $n$. Without loss of generality, let's assume that this agent belongs to $l$ different groups, namely $G_1, G_2, \ldots, G_l$. Note that Term $I_2$ in \eqref{eq:optimize_induction} is a function that involves agents within this group. All the values for this function can be found in $\sum_{e=1}^l \prod_{i \in G_e} |\mathcal{A}_i|$ searches.

Now let's consider the graph $\cG' = \cG \backslash {n}$, which has the same coordination graph as $\cG$ but with vertex $n$ and all its connections removed. Let $G_1', \ldots, G_\rho'$ be the new groups resulting from this reduction. It is clear that $\forall e \in [\rho], |G_e| \geq |G_e'|$. Since $\cG'$ has only $n-1$ vertices, by the inductive hypothesis, the complexity of finding the maximum of $I_1$ (using all of the values already given from computing $I_2$) is $\sum_{e=1}^\rho \prod_{i \in G_e'} |\mathcal{A}_i| \leq \sum_{e=1}^\rho \prod_{i \in G_e} |\mathcal{A}_i| \leq \localArmSize$. Adding these two upper bounds completes our induction.

\subsection{Proof of Lemma \ref{lem:under-arm1}}
Note that
\begin{align*}
    \PP\left(\exists s\in [T], \hat{\mu}_{s}+\sqrt{\frac{2\log (\rho T)}{s}}\leq \mu_1\right)
    \leq& \sum_{s=1}^{T} \PP\left( \hat{\mu}_{s}+\sqrt{\frac{2\log (\rho T)}{s}}\leq \mu_1\right) \notag \\
    \leq & \sum_{s=1}^{T} e^{-2\log (\rho T)} \notag \\
    = & \frac{1}{\rho^2 \cdot T},
 \end{align*}
 where the second inequality is due to Lemma \ref{lem:pink}.

\subsection{Proof of Lemma \ref{lemma:B.2}}
Each local arm of $\bm{1}$ has been pulled more than $\tau$ times if $n_{\bm{1}}(t)\geq \tau$. Besides,
\begin{align}
         &\PP\left(\exists s\geq \tau, \hat{\mu}_{\bm{1}^e,s}\leq \mu_{\bm{1}^e}-\delta_r/(2\rho) \right) \notag \\
    &\leq  \exp\bigg(- \frac{\tau\delta_r^2}{8\rho^2} \bigg) \notag\\
    &\leq \frac{1}{T\rho}. \label{eq:fin1}
    \end{align}
Applying union bound for all $e\in [\rho]$, we have
\begin{align}
    \PP\bigg(\bigcup_{e\in [\rho]} \left\{\exists s\geq \tau, \hat{\mu}_{\bm{1}^e,s}\leq \mu_{\bm{1}^e}-\frac{\delta_r}{2\rho}\right\}  \bigg)\leq \frac{1}{T}.\label{eq:fin2} 
\end{align}
Let $E_1=\cap_{e\in [\rho]} \{\forall s\geq \tau, \hat{\mu}_{\bm{1}^e,s}> \mu_{\bm{1}^e}-\delta_r/(2\rho)\}$. We have
\begin{align}
     &\sum_{t: n_{\bm{1}^e}(t)\geq \tau}\PP\bigg(\exists e\in [\rho]: \theta_{\bm{1}^e}(t)\leq \mu_{\bm{1}^e}- \frac{\delta_r}{\rho} , E_1  \bigg) \notag\\
    &\leq  T\rho \cdot \PP\bigg( \theta_{\bm{1}^e}(t)\leq \mu_{\bm{1}^e}- \frac{\delta_r}{\rho} \ \bigg | \ n_{\bm{1}^e}(t)\geq \tau, \hat{\mu}_{\bm{1}^e}(t)>\mu_{\bm{1}^e}-\frac{\delta_r}{2\rho}\bigg) \notag \\ 
       &\leq  T\rho \cdot \PP\bigg( \theta_{\bm{1}^e}(t)\leq \hat{\mu}_{\bm{1}^e}(t)- \frac{\delta_r}{2\rho} \ \bigg | \ n_{\bm{1}^e}(t)\geq \tau \bigg) \notag \\
    &\leq  \frac{T\rho}{2} \cdot \exp\bigg( -\frac{\delta_r^2 \cdot \tau}{8\log T}\bigg) \notag \\
    &\leq  \frac{1}{2},\label{eq:last-rel1}
 \end{align}
where the third inequality is due to the fact that with probability $1-\epsilon$, $\theta_{\bm{1}^e}(t)=\hat{\mu}_{\bm{1}^e}(t)\geq \hat{\mu}_{\bm{1}^e}(t)-\delta_r/(2\rho)$ and with probability $\epsilon$, $\theta_{\bm{1}^e}(t)\sim \cN(\hat{\mu}_{\bm{1}^e}(t), \log T/(n_{\bm{1}^e}(t)+1))$ and then we use Lemma \ref{lem:antiGAU} to bound the probability of $\theta_{\bm{1}^e}(t)\leq \hat{\mu}_{\bm{1}^e}(t)- \frac{\delta_r}{2\rho}$. %
Finally,  
\begin{align*}
     \sum_{t:n_{\bm{1}}(t)>\tau} \EE \bigg[\ind\{\bm{A}_{t}\in S_r(t)\} \bigg]     
    \leq   &T\cdot E_{1}^{c} +   \sum_{t:n_{\bm{1}}(t)\geq \tau} \EE \bigg[\ind\{\bm{A}_{t}\in S_r(t),E_1\} \bigg] \notag \\
    \leq   & 1+  \sum_{t:n_{\bm{1}}(t)>\tau} \PP \bigg(\exists e\in [\rho]: \theta_{\bm{1}^e}(t)\leq \mu_{\bm{1}^e}- \frac{\delta_r}{\rho}, E_1 \bigg) \notag \\
    \leq & 2,
\end{align*}\
where the third inequality is due to \eqref{eq:fin2} and the last inequality is due to \eqref{eq:last-rel1}.

\subsection{Proof of Lemma \ref{eq:lem-b3}}
Recall that
\begin{align*}
    \cE(t)=\bigcap_{e=1}^{\rho}\bigcap_{\bm{a}^{e}:n_{\bm{a}^e}(t)\geq \tau}\left\{|\theta_{\bm{a}^e}(t)-\mu_{\bm{a}^{e}}|<\frac{\delta_r}{\rho} \right\}.
\end{align*}
Similar to \eqref{eq:fin2}, we have 
\begin{align}
     \PP\bigg(\bigcup_{e\in [\rho]}\bigcup_{\bm{a}^e: n_{\bm{a}^e}(t)\geq \tau} \left\{ |\hat{\mu}_{\bm{a}^e,s}- \mu_{\bm{a}^e} |> \frac{\delta_r}{2\rho}\right\}  \bigg)\leq \frac{2}{T}.\label{eq:fin-l1}
\end{align}
Let $E_2=\cap_{e\in [\rho]}\cap_{\bm{a}^e: n_{\bm{a}^e}(t)\geq \tau} \{ |\hat{\mu}_{\bm{a}^e}(t)- \mu_{\bm{a}^e}| \leq \delta_r/(2\rho)\}$.
Now, we have
\begin{align*}
    \PP(\cE(t))&\geq \PP(\cE(t),E_2) \notag \\
               &= \PP(\cE(t)\mid E_2) \PP(E_2) \notag \\
               &\geq  \bigg(1-\frac{2}{T} \bigg)\cdot \bigg(1-\PP\bigg(\exists \bm{a}^{e}: n_{\bm{a}^e}(t)\geq \tau, |\theta_{\bm{a}^e}(t)-\mu_{\bm{a}^e}|< \frac{\delta_r}{\rho} \ \bigg | \ E_2  \bigg) \bigg) \notag \\
               & \geq  \bigg(1-\frac{2}{T} \bigg)\cdot \bigg(1-\PP\bigg(\exists \bm{a}^{e}: n_{\bm{a}^e}(t)\geq \tau, |\theta_{\bm{a}^e}(t)-\mu_{\bm{a}^e}|< \frac{\delta_r}{\rho} \ \bigg | \  |\hat{\mu}_{\bm{a}^e}(t)-\mu_{\bm{a}^e}|< \frac{\delta_r}{2\rho}  \bigg) \bigg) \notag \\
               &\geq  \bigg(1-\frac{2}{T} \bigg)\cdot \bigg(1-\localArmSize \PP\bigg( n_{\bm{a}^e}(t)\geq \tau, |\theta_{\bm{a}^e}(t)-\hat{\mu}_{\bm{a}^e}(t)|< \frac{\delta_r}{2\rho}  \bigg) \bigg) \notag \\
               & \geq \bigg(1-\frac{2}{T} \bigg)\cdot \bigg(1-\localArmSize \exp \bigg(-\frac{\tau \delta_r^2}{8\rho^2} \bigg)\bigg) \notag \\
               & \geq 1-\frac{4}{T},
\end{align*}
where the third inequality is due to \eqref{eq:fin-l1} and the fifth inequality is due to Lemma \ref{lem:antiGAU}.

\section{Proof of the Lower Bound of MAMAB with Coordination Hypergraph}
In this section, we derive a worst-case lower bound on the MAMAB considered in this paper. 
 \begin{proof}[Proof of \Cref{thm:lower_bound}]
 For any given group structure on the agents, suppose group $1$ was the largest group. It follows that the number of local actions for this group satisfies $|\mathcal{A}^1| \geq \frac{\localArmSize}{\rho}$. Furthermore, suppose that local arms in the other groups $[\rho]\backslash 1$ possess a mean reward of $0$, allowing only the local arms in group $1$ to have a nonzero mean reward.
 Moreover, let $\bm{1}^{1}$ signify the local arm in group 1 possessing the highest mean. Since only group $1$ has nonzero local mean rewards, the regret associated with pulling a suboptimal arm $\bm{a}$ is calculated as follows:
\begin{align*}
\sum_{e\in [\rho]} (\mu_{\bm{1}^e}-\mu_{\bm{a}^e} )=\mu_{\bm{1}^1}-\mu_{\bm{a}^1}.
\end{align*}

With these considerations, the problem is effectively simplified to a $K$-armed bandit problem with $K = |\mathcal{A}^1|\geq \frac{\localArmSize}{\rho}$. As it is widely established, the worst-case lower bound for a $K$-armed bandit is $\Omega(\sqrt{KT})$ \citep{auer2002nonstochastic}. Hence, a MAMAB problem instance can be found where the regret is equivalent to $\Omega(\sqrt{KT})=\Omega(\sqrt{ T\localArmSize/\rho})$.
\end{proof}

\begin{proof}[Proof of Theorem \ref{thm:lcb}]
Consider the following bandit instance: We have $\rho$ groups. For each group, we have $L+1$ local arms, where $L$ local arms have mean $X$ and 1 local arm has mean $X+\Delta$. Now, we construct the joint arm set, where we only have in total $L^{\rho}+1$ joint arms, with $L^{\rho}$ joint arms having mean $\rho X$ and one joint arm having  mean $\rho(X+\Delta)$. For each $L^{\rho}$ joint arms that have mean $\rho X$, it is composed by $\rho$ local arms with mean $X$. For the one joint arm with mean $\rho (X+\Delta)$, it is composed by $\rho$ local arms with mean $X+\Delta$. It is easy to see the joint arm with mean $\rho (X+\Delta)$ is the optimal one. We denote the optimal joint arm as $\textbf{1}$. All local arms follow the Gaussian reward distribution with unit variance. We use a normal distribution $\mathcal N(0,1)$ as the prior for all local arms.
We aim to compute the number of pulls of suboptimal joint arms before pulling the optimal joint arm for the first time. The posterior distribution of the optimal joint arm is $\mathcal{N}(0,\rho)$. From the concentration bound of Gaussian (Lemma F.1), we obtain that the probability of the sample from the optimal arm $\textbf{1}$ being larger than $\rho$ is bounded by
$$
\mathbb{P}(\theta_{\textbf{1}}\geq \rho)\leq \frac{e^{-{\rho}/{2}}}{2}.
$$
When we pull a local arm $i$ with mean $X$ for $s$ times, then its posterior distribution is $\mathcal{N}(s\hat{\mu}_{i,s}/(s+1),1/(s+1))$, where $\hat{\mu}_{i,s}$ is the empirical mean of local arm $i$ after its $s$-th pull.

Assume $X>3$. Note that $\hat{\mu}_{i,s}\sim \mathcal{N}(X,1/s)$. Applying the maximum inequality (Lemma F.2), we obtain
$$
\mathbb{P}(\exists s>1, \hat{\mu}_{i,s}<2)\leq 1/\sqrt{e}.
$$
We let $y_e$ be the number of local arms with mean $X$ in group $e$ such that its empirical mean is always larger than 2. By Hoeffding's inequality,
$$
\mathbb{P}(y_e\leq L/(2\sqrt{e})) \leq \exp \bigg(-\frac{L}{2e} \bigg).
$$
We let $E_e$ be the event $y_e> L/(2\sqrt{e})$ and $E:=\cap_{e\in [\rho]} E_e$. Then, by applying union bound, we obtain  
$$
 \mathbb{P}(E) \geq 1-\rho \exp\bigg(-\frac{L}{2e} \bigg).
$$
Let $\theta$ be a sample from $\mathcal{N}(0,1)$ and $\theta'$ be the sample from $\mathcal{N}(s\hat{\mu}_{i,s}/(s+1),1/(s+1))$. When $\hat{\mu}_{i,s}>2$,
$$
\mathbb{P}(\theta'\leq 1)\leq \frac{1}{2}\leq \mathbb{P}(\theta\leq 1).
$$
Denote $b=\mathbb{P}(\theta\leq 1)$ and $\{\textbf{a}_{1}^e,\cdots,\textbf{a}_{L}^e\}$ for the local arms with mean $X$ in group $e$. Let $\mathcal{E}_e$ be the probability that the maximum sample from $\{\textbf{a}_{1}^e,\cdots,\textbf{a}_{L}^e\}$ is larger than 1. Then,
$$
\mathbb{P}(\mathcal{E}_e \mid E )\geq 1-b^{L/{(2\sqrt{e})}}.
$$
Applying the union bound over all group $e\in [\rho]$, 
$$
\mathbb{P}(\cap_{e\in [\rho]}\mathcal{E}_e \mid E )\geq 1-\rho b^{L/{(2\sqrt{e})}}.
$$
The above inequality means that conditional on event $E$, the probability that all samples from a joint arm with mean $X$ are larger than $\rho$ is bounded by $1-\rho b^{L/{(2\sqrt{e})}}$.

By choosing $L= 2e\rho\log_{1/b} 2+2e\log_{1/b} \rho$, we have $\mathbb{P}(\cap_{e\in [\rho]}\mathcal{E}_e \mid E )\geq 1-1/2^{\rho}$, $\mathbb{P}(E)\geq 1-\exp(-\rho)$. We note that when $\theta_{\textbf{1}}\leq \rho$, $E$, and $\cap_{e\in [\rho]}\mathcal{E}_e$ happen, then we do not pull the optimal arm.

Therefore, for such a bandit instance, the probability of pulling the suboptimal joint arm before pulling the optimal joint arm is bounded by ${2^{-\rho}+e^{-\rho/2} +e^{-\rho}}$. This means that the regret incurs by pulling the suboptimal is at least $\frac{\Delta}{2^{-\rho}+e^{-\rho/2} +e^{-\rho}}$, which is exponential in $\rho$.
\end{proof}

\section{Auxiliary Lemmas}
In this section, we present some useful inequalities that are useful in our proof. 
\begin{lemma}[\citet{abramowitz1964handbook}]
\label{lem:antiGAU}
For a random variable $Z\sim \cN(\mu,\sigma^2)$,
\begin{align*}
    \frac{e^{-z^2/2}}{2}\geq  & \PP(Z>\mu+z\sigma)\geq \frac{1}{\sqrt{2\pi}}\frac{z}{z^2+1}e^{-\frac{z^2}{2}}, \notag \\
      &   \PP(Z<\mu-z\sigma) \leq  \frac{e^{-z^2/2}}{2}. 
\end{align*}
\end{lemma}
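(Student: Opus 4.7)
The plan is to reduce everything to the standard normal by the substitution $X=(Z-\mu)/\sigma\sim\mathcal N(0,1)$, so that $\PP(Z>\mu+z\sigma)=\PP(X>z)$ and, by symmetry of $\phi$, $\PP(Z<\mu-z\sigma)=\PP(X>z)$ as well. Hence it suffices to establish, for all $z\ge 0$,
\begin{equation*}
\tfrac{1}{2}e^{-z^2/2}\;\ge\;Q(z)\;:=\;\int_z^\infty \phi(x)\,dx\;\ge\;\tfrac{1}{\sqrt{2\pi}}\cdot\tfrac{z}{z^2+1}\,e^{-z^2/2},
\end{equation*}
where $\phi(x)=(2\pi)^{-1/2}e^{-x^2/2}$. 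Both inequalities will be proved by constructing an auxiliary ``defect'' function, checking a boundary value, and using the sign of its derivative to conclude monotonicity; the key identity $\phi'(x)=-x\phi(x)$ drives all the calculations.

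For the upper bound, I would set $g(z)=\tfrac{1}{2}e^{-z^2/2}-Q(z)$. Then $g(0)=\tfrac12-\tfrac12=0$ and $g(\infty)=0$. A direct computation gives
\begin{equation*}
g'(z)=-\tfrac{z}{2}e^{-z^2/2}+\phi(z)=e^{-z^2/2}\Bigl(\tfrac{1}{\sqrt{2\pi}}-\tfrac{z}{2}\Bigr),
\end{equation*}
so $g'$ is positive on $[0,\sqrt{2/\pi}]$ and negative on $[\sqrt{2/\pi},\infty)$. Thus $g$ rises from $0$ and then descends back to $0$, which forces $g(z)\ge 0$ for every $z\ge 0$. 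This yields $Q(z)\le\tfrac12 e^{-z^2/2}$ and, via the symmetry of the standard normal, the bound on the lower tail $\PP(Z<\mu-z\sigma)$.

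For the lower bound, I would instead define $h(z)=Q(z)-\tfrac{1}{\sqrt{2\pi}}\cdot\tfrac{z}{z^2+1}e^{-z^2/2}$. Since $Q(z)\to 0$ and $\tfrac{z}{z^2+1}e^{-z^2/2}\to 0$ as $z\to\infty$, we have $h(\infty)=0$. The main technical step, and the only place a careful calculation is required, is to show that $h'(z)<0$ for all $z>0$: differentiating and using $\phi'(z)=-z\phi(z)$, the terms combine into a single rational multiple of $\phi(z)$, namely
\begin{equation*}
h'(z)=\phi(z)\Bigl[-1+\tfrac{z^2}{z^2+1}-\tfrac{1-z^2}{(z^2+1)^2}\Bigr]=\phi(z)\cdot\tfrac{-2}{(z^2+1)^2}<0.
\end{equation*}
Hence $h$ is strictly decreasing, and together with $h(\infty)=0$ this forces $h(z)\ge 0$ for all $z\ge 0$, which is the claimed lower bound. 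The main obstacle, as already indicated, is carrying out the algebraic simplification that collapses the derivative of the rational prefactor $\tfrac{z}{z^2+1}$ and the derivative of $e^{-z^2/2}$ into the clean numerator $-2$; once this cancellation is achieved the rest is immediate.
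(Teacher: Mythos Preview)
Your argument is correct. The reduction to the standard normal is immediate, and both the upper and lower tail bounds are established cleanly: the function $g$ vanishes at both endpoints and is unimodal, forcing $g\ge 0$; the function $h$ is strictly decreasing to $0$, forcing $h\ge 0$. The algebraic simplification of $h'(z)$ to $-2\phi(z)/(z^2+1)^2$ is carried out correctly.

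The paper itself offers no proof of this lemma: it is simply quoted from \citet{abramowitz1964handbook} as a classical Gaussian tail estimate (the Mills-ratio bounds). Your write-up therefore supplies what the paper omits, namely a short self-contained derivation. The only stylistic remark is that the lower bound is vacuous at $z=0$ (both sides are zero or $1/2$ versus $0$), so in practice one states it for $z>0$; your monotonicity argument covers that range without change.
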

\begin{lemma}[Lemma C.3 in \citet{jin2021double}]
\label{lem:pink}
Let $N$ and $M$ be two positive integers, 
let $\gamma>0$, and let $\hat{\mu}_{n}$ be the empirical mean of $n$ random variable i.i.d from 1-subgaussian with mean $\mu$. Then for any $x\leq \mu$,
\begin{align*}
    \PP(\exists N\leq n \leq M,\hat{\mu}_{n}\leq x)\leq e^{-N(x-\mu)^2/2},
\end{align*}
and for any $x\geq \mu$,
\begin{align*}
    \PP(\exists N\leq n \leq M,\hat{\mu}_{n}\geq x)\leq e^{-N(x-\mu)^2/2}.
\end{align*}
\end{lemma}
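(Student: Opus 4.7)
The plan is to prove this maximal subgaussian tail bound via an exponential supermartingale combined with Doob's maximal inequality, which is the textbook route. By the symmetry $X_i \mapsto -X_i$ (which is still $1$-subgaussian, with mean $-\mu$, and which flips $\leq$ to $\geq$), it suffices to treat the second inequality. So fix $x \geq \mu$, set $\epsilon = x - \mu \geq 0$, and write $S_n = \sum_{i=1}^n (X_i - \mu)$, so that the event $\{\hat\mu_n \geq x\}$ coincides with $\{S_n \geq n\epsilon\}$.

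Next I would introduce the exponential process $M_n = \exp(\lambda S_n - n\lambda^2/2)$ for a parameter $\lambda > 0$ to be chosen shortly. The $1$-subgaussian hypothesis gives $\mathbb{E}[e^{\lambda(X_{n+1}-\mu)} \mid \mathcal{F}_n] \leq e^{\lambda^2/2}$, which upon multiplying by $\exp(\lambda S_n - n\lambda^2/2)$ shows that $(M_n)$ is a nonnegative supermartingale with $\mathbb{E}[M_0] = 1$. Doob's maximal inequality for nonnegative supermartingales then yields $\mathbb{P}(\max_{n \leq M} M_n \geq a) \leq 1/a$ for every $a > 0$.

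The key step is the choice $\lambda = \epsilon$: with this choice, whenever $S_n \geq n\epsilon$ we have $\lambda S_n - n\lambda^2/2 \geq n(\epsilon^2 - \epsilon^2/2) = n\epsilon^2/2$, hence $M_n \geq \exp(n\epsilon^2/2)$. Restricting further to $n \geq N$ gives the uniform lower bound $M_n \geq \exp(N\epsilon^2/2)$. Therefore
\begin{equation*}
\{\exists n \in [N,M]: \hat\mu_n \geq x\} \;\subseteq\; \Bigl\{\max_{n \leq M} M_n \geq e^{N\epsilon^2/2}\Bigr\},
\end{equation*}
and applying Doob with $a = \exp(N\epsilon^2/2)$ gives probability at most $\exp(-N\epsilon^2/2) = \exp(-N(x-\mu)^2/2)$, as claimed. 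The case $x \leq \mu$ follows by repeating this argument with $-X_i$ in place of $X_i$.

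The main conceptual obstacle is the uniform-in-$n$ nature of the statement: a plain Chernoff bound at a single $n$ would optimize $\lambda$ depending on $n$, but Doob's inequality requires a single supermartingale across the whole range $[N,M]$. This is exactly why the choice $\lambda = \epsilon$ is forced: it makes the exponent $n(\lambda\epsilon - \lambda^2/2) = n\epsilon^2/2$ monotone increasing in $n$, so the worst case occurs at $n = N$ and matches the advertised rate. Note also that $M$ does not appear in the final bound, since Doob's inequality is uniform over the trajectory up to any finite time, and the argument in fact extends to $M = \infty$ by monotone convergence.
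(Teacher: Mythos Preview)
Your argument is correct: the exponential supermartingale $M_n = \exp(\lambda S_n - n\lambda^2/2)$ combined with Ville/Doob's maximal inequality is exactly the standard way to obtain this uniform-in-$n$ tail bound, and your choice $\lambda = \epsilon$ makes the exponent $n\epsilon^2/2$ monotone so that the worst case over $n \in [N,M]$ lands at $n=N$.

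There is nothing to compare against in the paper itself: the lemma is stated there as an auxiliary result imported verbatim from \citet{jin2021double} (their Lemma~C.3), with no proof given. Your write-up supplies a complete self-contained proof of the cited result along the classical lines one would expect from that reference.
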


\section{Additional Experimental Details}\label{sec:exp_details}

In this section, we provide detailed settings of the local reward distribution for Bernoulli 0101-Chain and Poisson 0101-Chain. In addition to the experimental results shown in Section \ref{sec:experiment}, we further investigate more ablation studies, including (\textit{i}): a larger number of agents and  (\textit{ii}): different $\epsilon$ condition sampling to evaluate the proposed \algname\, algorithm on Bernoulli 0101-Chain and Poisson 0101-Chain. Finally, we report our $\epsilon$-MATS against other baselines in an additional real-world experiment: wind farm control \citep{bargiacchi2018learning, verstraeten2020multiagent}.

\subsection{Reward Generation for Bernoulli and Poisson 0101-Chains}
As discussed in \Cref{sec:exp_hypergraph}, for $d=2$, the reward of a joint arm in the Bernoulli and Poisson chains depends on the local rewards for each group of agents $\{a_i,a_{i+1}\}$, $i=1,\ldots,n-1$. Each local reward follows Bernoulli or Poisson distributions  with the mean value $f(a_i,a_{i+1})$ given in \Cref{tab:bernoulli_2agents} \Cref{tab:poisson_2agents} respectively. 
It can be easily verified that the optimal joint arm for this setting is an alternating sequence of zeros and ones, starting with $0$, i.e., $(0,1,0,\ldots)$. 
For the setting where we have $d=3$ agents in each group, the reward generation is given in Table \ref{tab:bernoulli_3agents}, and the optimal joint arm is $(1,1,\ldots,1,1)$.

\begin{table}[th]
    \caption{(a) and (b): The distribution of local reward $f(a_i,a_{i+1})$ for group $i$ in the chain graph. Here $i$ is an even number. The table is transposed when $i$ is odd. There are two agents ($i, i+1)$ in each group, where $i$ is even. Each entry shows the means for each local arm of agents $i$ and $i + 1$. (c): the unscaled local reward distributions of 3-agents ($i, i+1, i+2)$ per group, where $i \mid 3$. Each entry in the first 3 columns denotes the arm and the last column indicates the corresponding success probability for each local arm of agents $i, i+1$ and $i + 2$. The table is shifted symmetrically when $i = 3c + 1$ or $i = 3c + 2$, where $c \in \mathbb{R}$.}
    \label{fig:reward_distribution_chain_graph}
    \centering
    \begin{minipage}[ht]{0.45\textwidth}
          \subtable[Bernoulli 0101-Chain, $d=2$]{
        \begin{tabular}{ccc}
         \toprule
         $f^i \sim \textit{B}$ & $a_{i+1} = 0$ & $a_{i+1} = 1$ \\
         \midrule
         $a_i = 0$ & 0.75 & 1 \\
         $a_i = 1$  & 0.25  & 0.9  \\
        \bottomrule
        \end{tabular}
        \label{tab:bernoulli_2agents}
    }
    \subtable[Poisson 0101-Chain, $d=2$]{
        \begin{tabular}{ccc}
         \toprule
         $f^i \sim \textit{P}$ & $a_{i+1} = 0$ & $a_{i+1} = 1$ \\
         \midrule
         $a_i = 0$ & 0.1 & 0.3 \\
         $a_i = 1$  & 0.2  & 0.1  \\
        \bottomrule
        \end{tabular}
        \label{tab:poisson_2agents}
    }
    \end{minipage}
    \begin{minipage}[ht]{0.45\textwidth}    
    \subtable[Bernoulli/Poisson 0101-Chain: $d=3$]{
     \begin{tabular}{cccc}
     \toprule
      $a_i$ & $a_{i+1}$ & $a_{i+2}$ & success probability\\
     \midrule
     0 & 0 & 0 & 0.5 \\
     1 & 0 & 0 & 0.9 \\
     0 & 1 & 0 & 0.8 \\
     0 & 0 & 1 & 0.2 \\
     1 & 1 & 0 & 0.6 \\
     1 & 0 & 1 & 0.3 \\
     0 & 1 & 1 & 0.4 \\
     1 & 1 & 1 & 1.0 \\
    \bottomrule
    \end{tabular} 
    \label{tab:bernoulli_3agents}
    }   
    \end{minipage}
\end{table}

\subsection{Bernoulli and Poisson 0101-Chains with 20 Agents}
Following the problem description in Section \ref{sec:exp_hypergraph}, we still consider two settings where $d = 2$ and $d = 3$ but with a larger number of agents: $m = 20$ agents in total. We demonstrate the improvement of regret performance with $\epsilon$ MATS in the Bernoulli 0101 and Poisson 0101 tasks with $20$ agents compared with MATS, MAUCE, and Random in Figure \ref{fig:large_No_agents_appendix}, indicating the consistency of benefits of lower $\epsilon$ across a different number of agents. 
\begin{figure}[thbp]
     \centering
     \subfigure[Bernoulli 0101: $m = 20$, $d = 2$]{
         \includegraphics[width=0.22\linewidth]{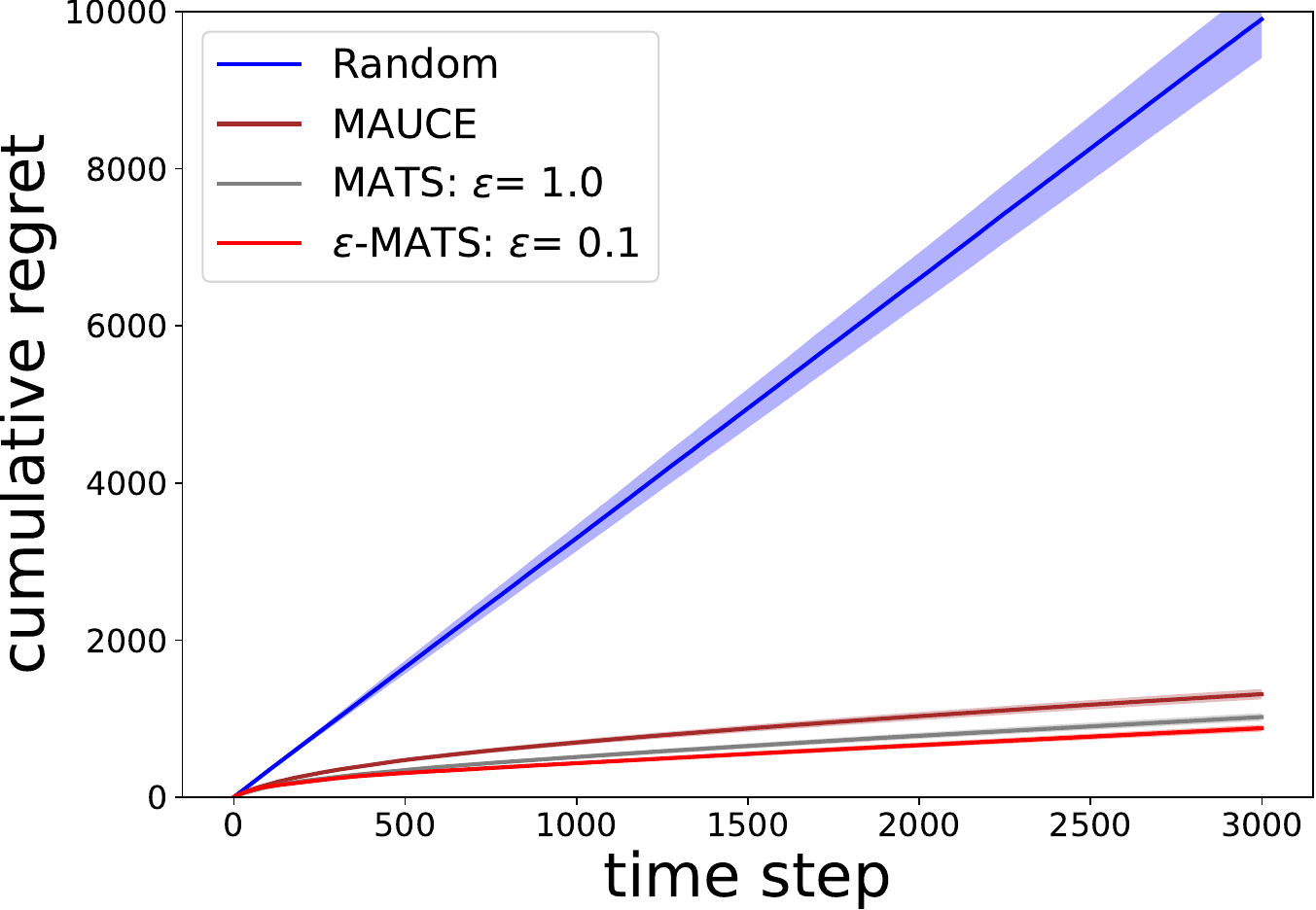}
         \label{fig:bernoulli_2agentInGroup_20agent_main}
     }
     \subfigure[Poisson 0101: $m = 20$, $d = 2$]{       
         \includegraphics[width=0.22\linewidth]{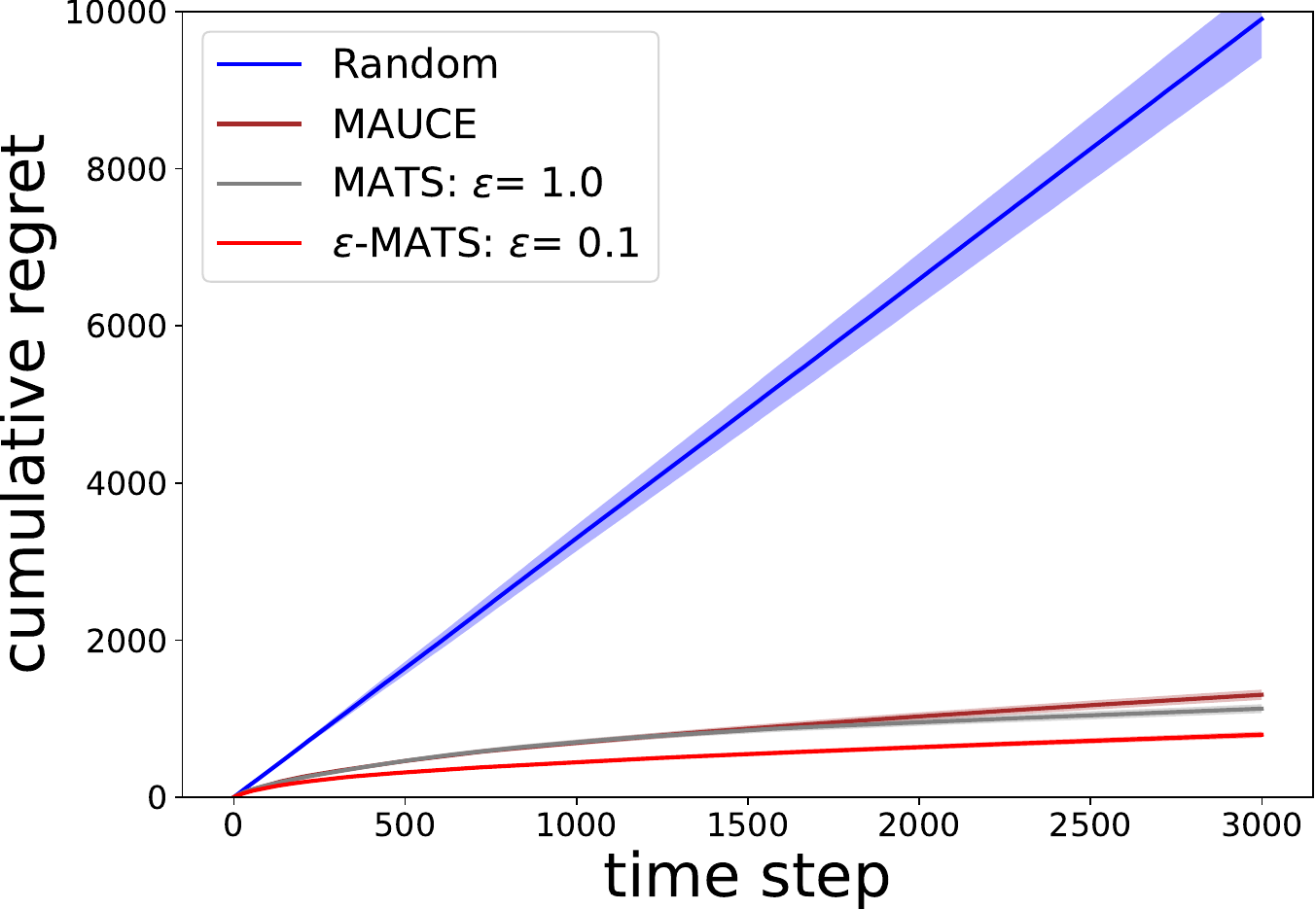}
         \label{fig:poisson_2agentInGroup_20agent_main}
     }
     \subfigure[Bernoulli 0101: $m = 20$, $d = 3$]{       
         \includegraphics[width=0.22\linewidth]{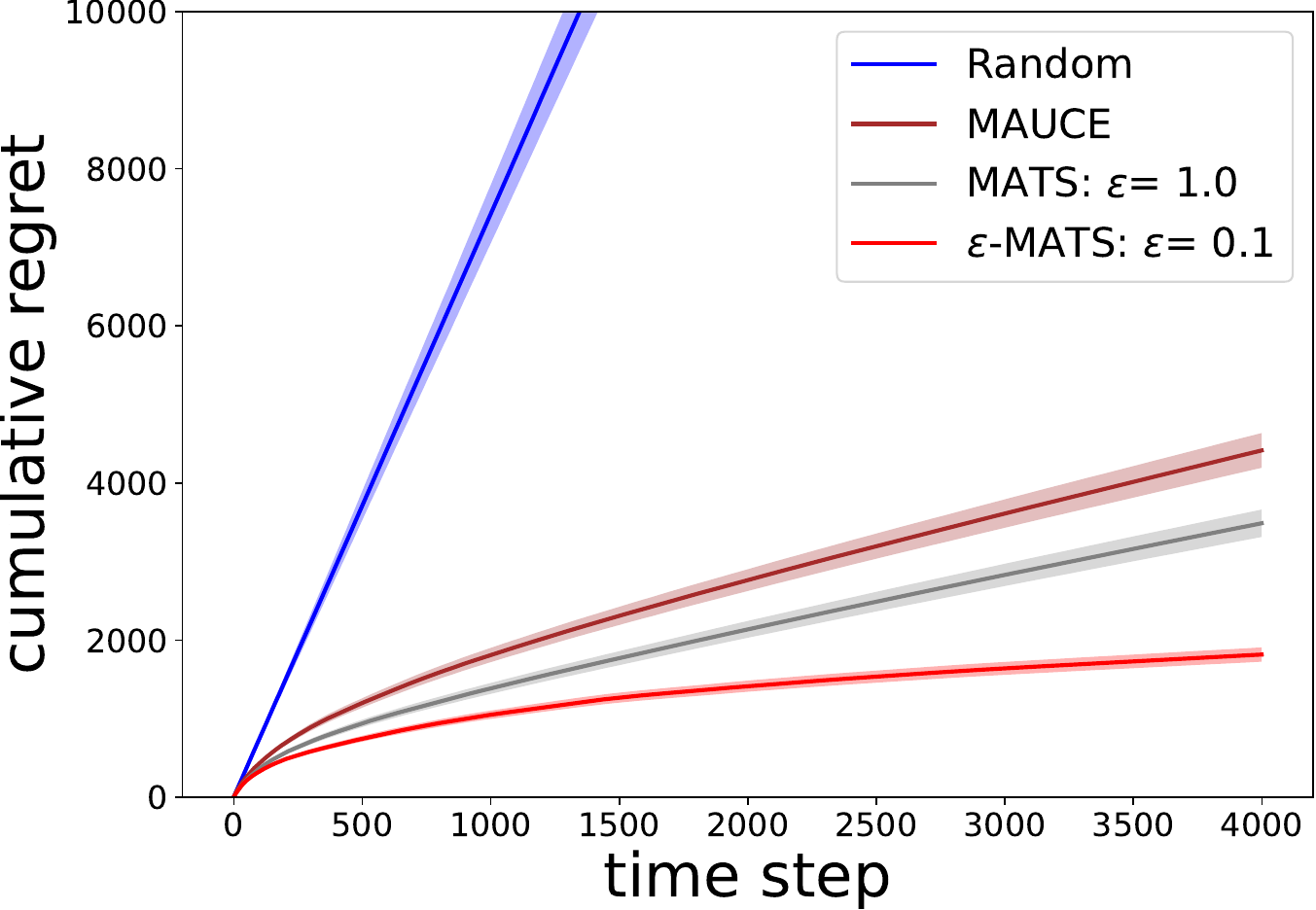}
         \label{fig:bernoulli_3agentInGroup_20agent_main}
     }
      \subfigure[Poisson 0101: $m = 20$, $d = 3$]{       
         \includegraphics[width=0.22\linewidth]{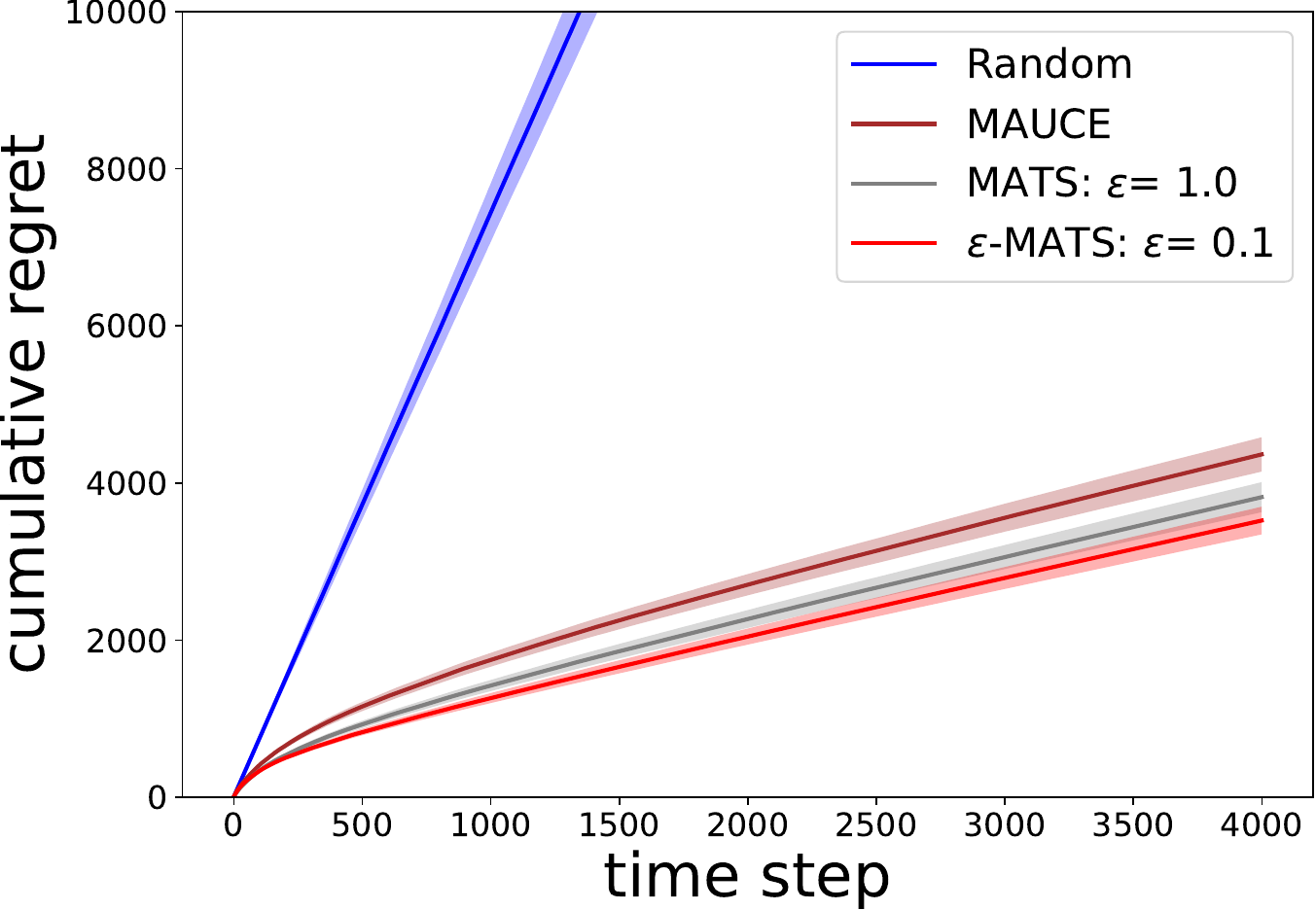}
         \label{fig:poisson_3agentInGroup_20agent_main}
     }
      \caption{Regret performance compared with other algorithm baselines in Bernoulli 0101 and Poisson 0101 with different agents in a group ($d = 2$ or $d = 3)$ with \textbf{ 20 agents} in total. %
        }\label{fig:large_No_agents_appendix}
\end{figure}

\begin{figure}[ht]
    \centering
    \subfigure[Hypergraph illustration for Gem Mining]{
        \includegraphics[width =.55\textwidth]{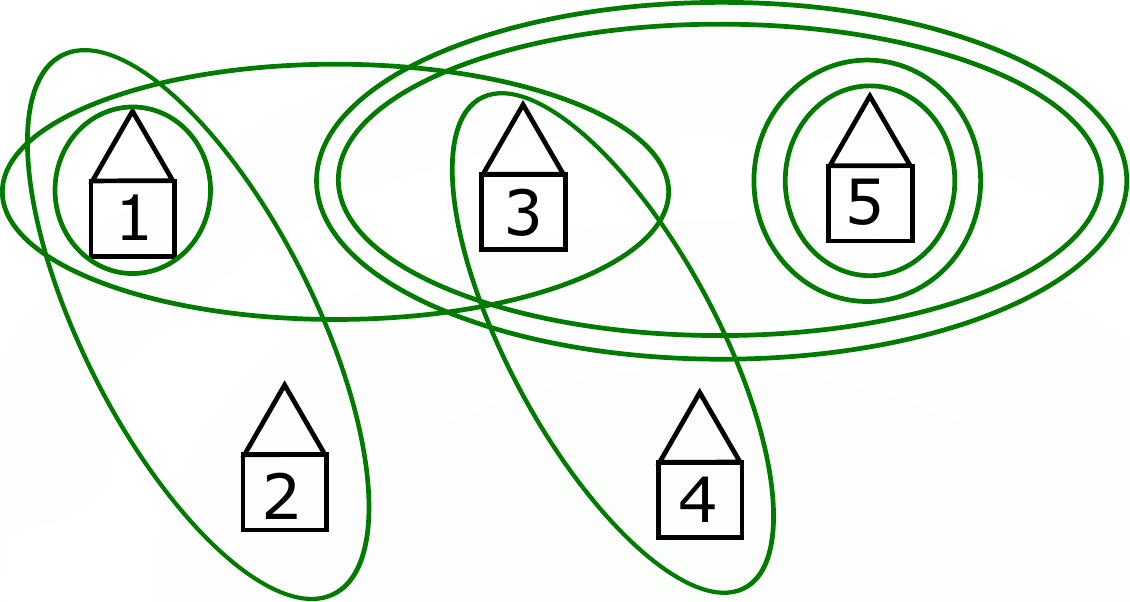}
        \label{fig:gem mine}
    }
    \subfigure[Regret comparison on Gem Mining: $m=5$]{
        \includegraphics[width=0.4\linewidth]{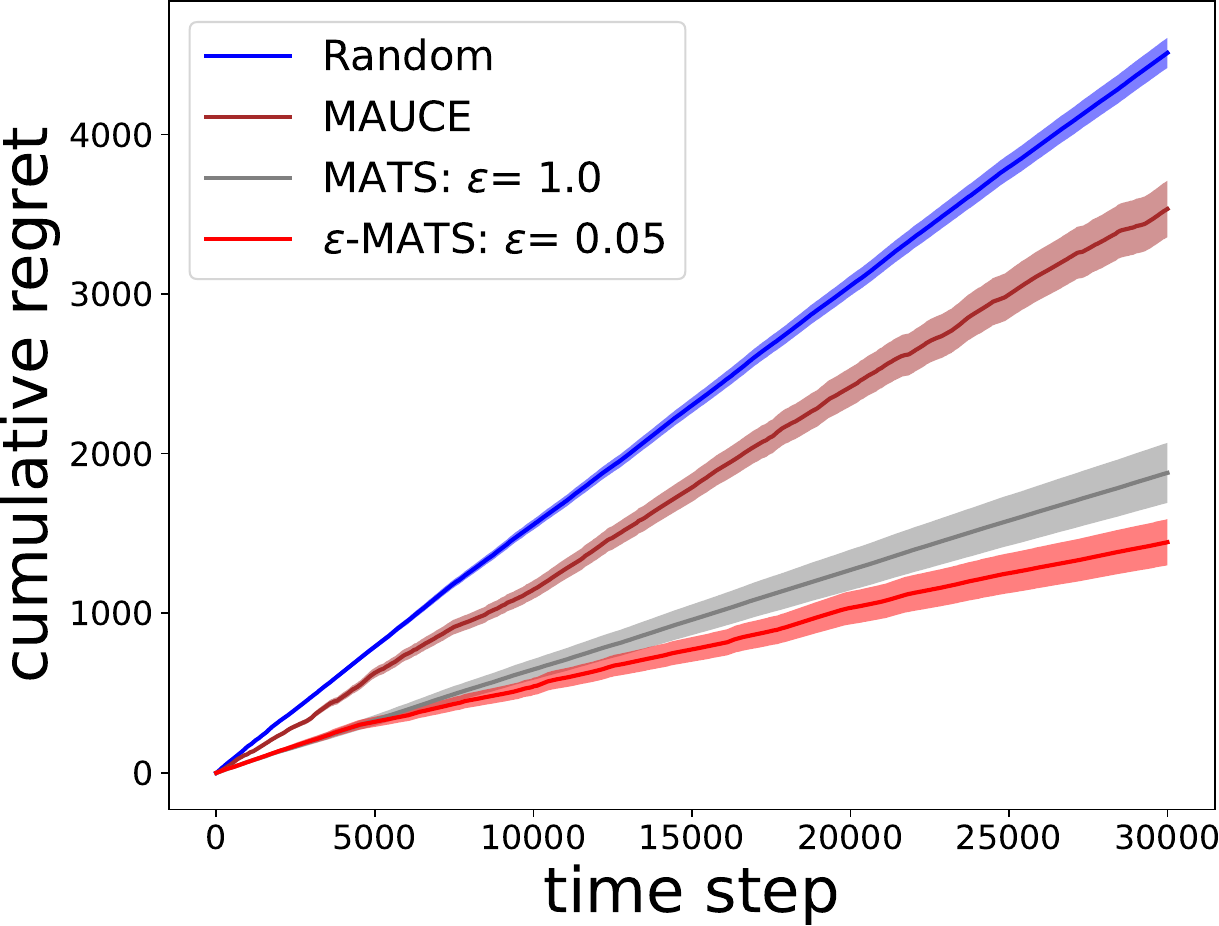}
        \label{fig:gem_mine_result}
     }
    \caption{Gem Mining experiment. (a): The hypergraph illustration. Each vertex is a village, and each hyperedge (group) corresponds to a mine. In this figure, a village is denoted as a black house while a group is denoted by a green circle. Note that there are some groups that contain the same agents (villages). (b): Regret performance compared with other algorithm baselines in Gem Mining.
    }
\end{figure}

\subsection{Gem Mining}
Now we consider a real-world application of MAMAB, e.g., the Gem Mining problem \citep{bargiacchi2018learning}, \citep{verstraeten2020multiagent}. In this problem, a mining company aims to maximize the total number of gems across multiple mines. The mine workers are represented by separate villages, each {acting as an agent in the MAMAB setup. There is only one van available per village to transport all workers from that village to one of the nearby mines. This arrangement forms a coordination graph, where each village is connected to a specific range of adjacent mines. The company must decide which mine to send its workers to, and this decision corresponds to selecting a local arm in the MAMAB framework. Each mine is formulated as a single group, and the reward for each mine follows a Bernoulli distribution. The probability of finding a gem in a mine is defined as $1.03^{w-1}p$, where $p$ represents the base probability of finding a gem (sampled uniformly from the interval [0, 0.5]), and $w$ denotes the number of workers at that mine. The cooperation among workers from multiple villages aims to maximize the probability of finding gems in a mine.
A number of workers sampled from $\{1, \ldots,5\}$ live in the same village, and each village $i$ is connected to the mine $i$ to $(i+m_i - 1)$, where $m_i$ is sampled from $[2, 4]$ while the last village is always connected to $4$ mines. We conduct the experiments with $5$ villages and $8$ mines. The diagram for this is given in Figure \ref{fig:gem mine}. The agents, in this case, correspond to the village, and the groups are determined by which villages have access to a particular gem mine. Since there are 5 villages and 8 mines, therefore there are 5 agents and 8 groups. 

Gem Mining is a more challenging problem due to the varying size of each group. We report the average cumulative regret of \algname\, against the other algorithms in Figure \ref{fig:gem_mine_result}, which again shows that \algname\, with a small $\epsilon$ outperforms all baseline methods including vanilla MATS.

\subsection{Wind Farm Control}
We conducted additional experiments on a different real-world application known as the Wind Farm Control problem \citep{bargiacchi2018learning, verstraeten2020multiagent}. Following the settings of previous studies \citep{bargiacchi2018learning, verstraeten2020multiagent}, we developed a custom simulator to replicate the energy production of an $11$-turbine wind farm (refer to Figure \ref{fig:wind_farm}). Among the turbines, $4$ downstream ones ($2, 5, 8, 11$) are not controlled by any agents, while the remaining $7$ turbines each have $3$ distinct arms (angles) representing different turning positions. In this problem, cooperation among the $7$ turbines forming $7$ groups is essential to maximize the overall power production, as turbulence generated by upstream turbines negatively affects downstream ones, with the direction of turbulence depending on the angle between each turbine and the incoming wind vector.

To simulate the wind farm, we sample wind speeds from a normal distribution at each time step. The power production of each turbine depends on the current wind speed, the turbine's angle, and the net wind direction influenced by turbulence. We normalize the overall reward by the maximum attainable reward at the highest wind strength and the minimum reward per turbine at the lowest wind strength, as per the approach in \citet{bargiacchi2018learning}. It is important to note that this measure is not an exact form of regret since choosing the optimal action does not guarantee zero regret on average.

The Wind Farm Control problem poses additional challenges due to the independent angles of the controllable agents and the fixed angles of the $4$ downstream turbines. However, even in this complex scenario, our $\epsilon$-MATS algorithm outperforms baseline methods, including MAUCE and MATS, as demonstrated by the smaller cumulative regret shown in Figure \ref{fig:wind_farm_result}.

\begin{figure}[thbp]
    \centering
    \subfigure[Wind farm setup]{
        \includegraphics[width = .45\textwidth]{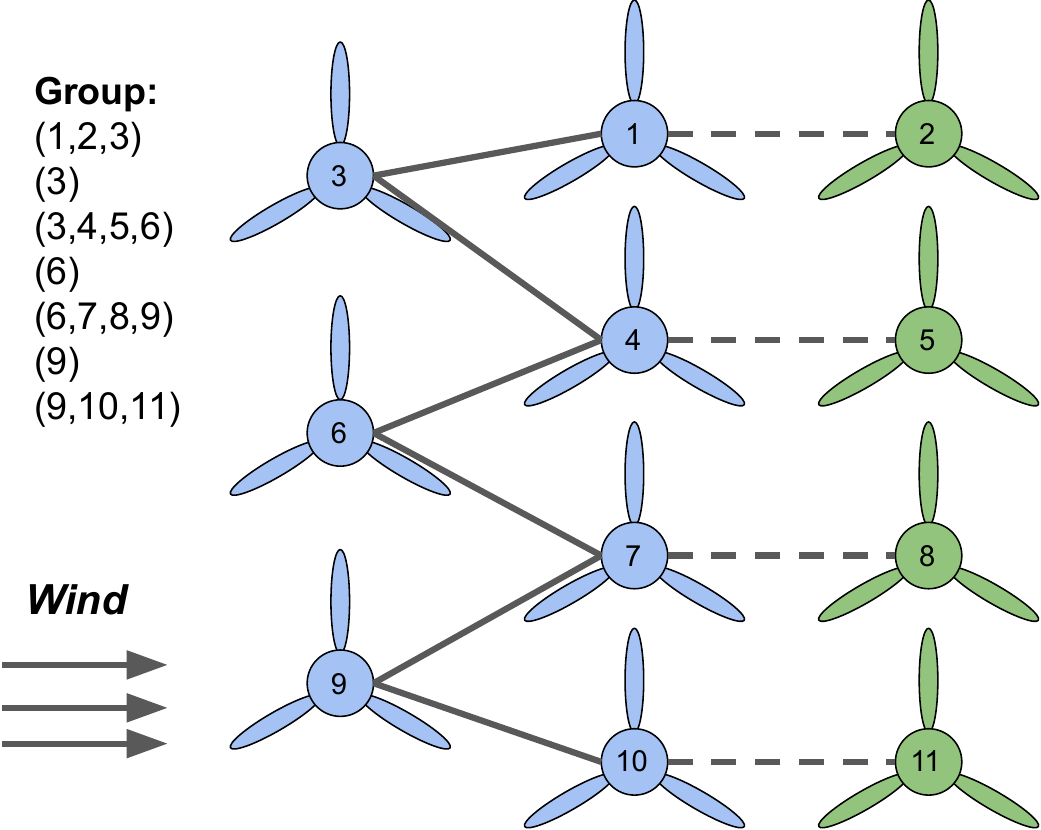}
        \label{fig:wind_farm}
    }
    \subfigure[Wind Farm Control]{
        \includegraphics[width=0.5\linewidth]{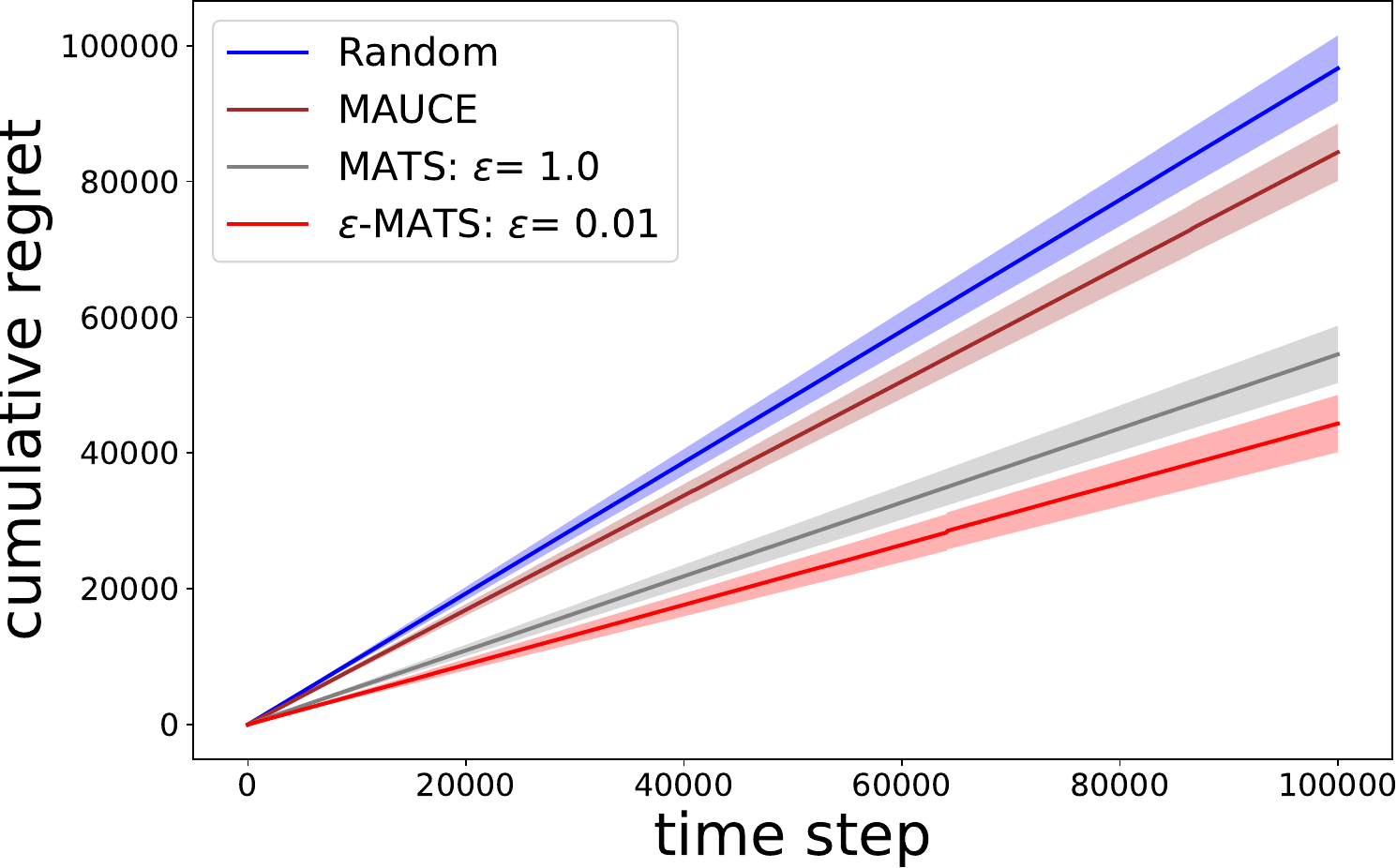}
        \label{fig:wind_farm_result}
     }
    \caption{Wind farm control experiment. (a): Wind farm setup. The incoming wind is denoted by an arrow. Agents (turbines) in a local group are listed on the left. The controllable turbines are in blue while the remaining 4 turbines are in green connected via dash lines. 
    (b): Regret performance compared with baseline algorithms in Wind Farm Control.}
\end{figure}

\end{document}